\newtheorem{theorem}{Theorem}
\newtheorem{lemma}[theorem]{Lemma}
\newcommand{\E}{\mathbb{E}}
\newcommand{\R}{\mathbb{R}}
\newcommand{\LL}{\widehat{\mathcal{L}}}
\newcommand{\X}{X}
\newcommand{\relu}{\text{ReLU}}
\newcommand{\Csig}{C_{\sigma}}
\newcommand{\Cthe}{C_{\theta}}
\newcommand{\Cr}{C_{r}}
\newcommand{\Cz}{C_{z}}
\newcommand{\Cg}{C_{g}}
\newcommand{\Du}{\bar{D}}
\newcommand{\Au}{\bar{A}}
\newcommand{\G}{G}
\newcommand{\Cx}{C_{x}}
\newcommand{\Dehtl}{\Delta_{t,l}^{h}}
\newcommand{\Dehtlp}{\Delta_{t,l+1}^{h}}
\newcommand{\Dehtlm}{\Delta_{t, l-1}^{h}}
\newcommand{\Dehtlmm}{\Delta_{t, l-2}^{h}}
\newcommand{\Dehtone}{\Delta_{t, 1}^{h}}
\newcommand{\Deztl}{\Delta_{t,l}^{z}}
\newcommand{\Reg}{\mathcal{R}}
\newcommand{\hReg}{\widehat{\mathcal{R}}}
\newcommand{\T}{T}
\newcommand{\K}{D_{v}}
\newcommand{\It}{I_{t}}
\newcommand{\DeltaT}{\Delta_{\T}}
\newcommand{\muh}{\widehat{\mu}}
\newcommand{\Ni}{\mathcal{N}_{v}}
\newcommand{\Nv}{\mathcal{N}_{v}}
\newcommand{\Ns}{\mathcal{S}_{t}}
\newcommand{\Nsone}{\mathcal{S}_{1}}
\newcommand{\Nstwo}{\mathcal{S}_{2}}
\newcommand{\Nst}{\mathcal{S}_{t}}
\newcommand{\Nstm}{\mathcal{S}_{t-1}}
\newcommand{\Nstp}{\mathcal{S}_{t+1}}
\newcommand{\NsT}{\mathcal{S}_{\T}}
\newcommand{\VL}{\mathcal{V}_{L}}
\newcommand{\Vl}{\mathcal{V}_{l}}
\newcommand{\V}{\mathbf{V}}
\newcommand{\norm}[1]{\lVert#1 \rVert}
\newcommand{\normbig}[1]{\Big\lVert#1\Big\rVert}
\newcommand{\kk}{k}
\newcommand{\bm}{\boldsymbol}
\newcommand{\mub}{\bm{\mu}}
\newcommand{\mubl}{\bm{\mu}^{(l)}}
\newcommand{\hb}{\bm{h}}
\newcommand{\hbl}{\bm{h}^{(l)}}
\newcommand{\hblm}{\bm{h}^{(l-1)}}
\newcommand{\hblp}{\bm{h}^{(l+1)}}
\newcommand{\zb}{\bm{z}}
\newcommand{\zbl}{\bm{z}^{(l)}}
\newcommand{\muhb}{\bm{\muh}}
\newcommand{\muhbl}{\bm{\muh}^{(l)}}
\newcommand{\zbab}{\bm{\bar{z}}}
\newcommand{\zbabl}{\bm{\bar{z}}^{(l)}}
\newcommand{\xb}{\bm{x}}
\newcommand{\W}{W}
\newcommand{\Wtl}{\W_{t}^{(l)}}
\newcommand{\Wtlm}{\W_{t}^{(l-1)}}
\newcommand{\Wtlmm}{\W_{t}^{(l-2)}}
\newcommand{\Wtpl}{\W_{t+1}^{(l)}}
\newcommand{\D}{D}
\newcommand{\VV}{\mathcal{V}}
\newcommand{\GG}{\mathcal{G}}
\newcommand{\argmax}{\text{argmax}}
\newcommand{\rl}{r}
\newcommand{\rlh}{\widehat{\rl}}
\newcommand{\rlt}{\tilde{r}}
\newcommand{\rtl}{\tilde{r}}
\newcommand{\Va}{V}
\newcommand{\Cv}{\bar{C}_{v}}
\newcommand{\Cba}{\bar{C}}
\newcommand{\Tau}{\mathcal{T}}
\newcommand{\J}{J}
\newcommand{\Dv}{\D_{v}}
\newcommand{\avi}{a_{vi}}
\newcommand{\Ve}{\V_{e}}
\newcommand{\Vc}{\V_{c}}
\newcommand{\Vpt}{\V_{p_t}}
\newcommand{\Oscale}{O}
\newcommand{\Bias}{\text{Bias}}
\newcommand{\wv}{w^{(v)}}
\newcommand{\wo}{w}
\newcommand{\dist}{\text{dist}}
\newcommand{\distbs}{\dist_{bs}}
\newcommand{\distour}{\dist_{our}}
\newcommand{\bat}{\bar{a}_{t}}
\newcommand{\Ut}{U_{t}}
\newcommand{\Nk}{\mathcal{N}_{k}}
\newcommand{\Nkstar}{\mathcal{N}_{k}^{*}}
\newcommand{\Nkzero}{\mathcal{N}_{k_0}}
\newcommand{\Nkone}{\mathcal{N}_{k_1}}
\newcommand{\Wr}{M}
\newcommand{\Wrt}{M_{t}}
\newcommand{\Wrtp}{M_{t+1}}
\newcommand{\WrT}{M_{\T}}
\newcommand{\cmark}{\ding{51}}
\newcommand{\xmark}{\ding{55}}
\newcommand{\pratio}{p_{\text{intra}}}
\newcommand{\Uni}{N}
\title{A Biased Graph Neural Network Sampler\\ with Near-Optimal Regret}
\author{
  Qingru Zhang${}^{1}$\thanks{Corresponding author. Work done during the internship at Amazon Shanghai AI Lab.}\quad\quad  David Wipf${}^{2}$ \quad\quad Quan Gan${}^{2}$ \quad\quad Le Song${}^{1,3}$ \\
  ${}^{1}$Georgia Institute of Technology \quad ${}^{2}$Amazon Shanghai AI Lab\\ ${}^{3}$Mohamed bin Zayed University of Artificial Intelligence \\
  \texttt{qingru.zhang@gatech.edu, daviwipf@amazon.com} \\
  \texttt{ quagan@amazon.com, le.song@mbzuai.ac.ae}
}
\begin{document}

\maketitle

\begin{abstract}
  Graph neural networks (GNN) have recently emerged as a vehicle for applying deep network architectures to graph and relational data.  However, given the increasing size of industrial datasets, in many practical situations the message passing computations required for sharing information across GNN layers are no longer scalable. Although various sampling methods have been introduced to approximate full-graph training within a tractable budget, there remain unresolved complications such as high variances and limited theoretical guarantees.  To address these issues, we build upon existing work and treat GNN neighbor sampling as a multi-armed bandit problem but with a newly-designed reward function that introduces some degree of bias designed to reduce variance and avoid unstable, possibly-unbounded pay outs.  And unlike prior bandit-GNN use cases, the resulting policy leads to near-optimal regret while accounting for the GNN training dynamics introduced by SGD. From a practical standpoint, this translates into lower variance estimates and competitive or superior test accuracy across several benchmarks.
\end{abstract}

\section{Introduction}
\setlength{\abovedisplayskip}{3pt}
\setlength{\abovedisplayshortskip}{3pt}
\setlength{\belowdisplayskip}{3pt}
\setlength{\belowdisplayshortskip}{3pt}
\setlength{\jot}{2pt}
\setlength{\floatsep}{1ex}
\setlength{\textfloatsep}{1ex}
\setlength{\intextsep}{1ex}
\setlength{\parskip}{1ex}
\vspace{-2mm}
Graph convolution networks (GCN) and Graph neural networks (GNN) in general~\cite{kipf2017semi,hamilton2017inductive} have recently become a powerful tool for representation learning for graph structured data~\cite{bronstein2017geometric, battaglia2018relational, wu2020comprehensive}. These neural networks iteratively update the  representation of a node using a graph convolution operator or message passing operator which aggregate the embeddings of the neighbors of the node, followed by a non-linear transformation. After stacking multiple graph convolution layers, these models can learn node representations which can capture information from both immediate and distant neighbors.

GCNs and variants \cite{velivckovic2017graph} have demonstrated the start-of-art performance in a diverse range of graph learning prolems~\cite{kipf2017semi,hamilton2017inductive,berg2017graph,schlichtkrull2018modeling,dai2016discriminative,DBLP:conf/nips/FoutBSB17,liu2018heterogeneous}. 
However, they face significant computational challenges given the increasing sizes of modern industrial datasets. The multilayers of graph convolutions is equivalent to recursively unfold the neighbor aggregation in a top-down manner which will lead to an exponentially growing neighborhood size with respect to the number of layers. If the graph is dense and scale-free, the computation of embeddings will involve a large portion of the graph even with a few layers, which is intractable for large-scale graph \cite{kipf2017semi,ying2018graph}.

Several sampling methods have been proposed to alleviate the exponentially growing neighborhood sizes, including node-wise sampling \cite{hamilton2017inductive, chen2018stochastic, liu2020bandit}, layer-wise sampling \cite{chen2018fastgcn, zou2019layer, huang2018adaptive} and subgraph sampling \cite{chiang2019cluster, zeng2020graphsaint,hu2020heterogeneous}. 
However, the optimal sampler with minimum variance is a function of the neighbors' embeddings unknown apriori before the sampling and only partially observable for those sampled neighbors. 
Most previous methods approximate the optimal sampler with a static distribution which cannot reduce variance properly. And most of existing approaches \cite{chen2018fastgcn, zou2019layer, huang2018adaptive, chiang2019cluster, zeng2020graphsaint, hu2020heterogeneous} do not provide any asymptotic convergence guarantee on the sampling variance. We are therefore less likely to be confident of their behavior as GNN models are applied to larger and larger graphs.   
Recently, \citet{liu2020bandit} propose a novel formulation of neighbor sampling as a multi-armed bandit problem (MAB) and apply bandit algorithms to update sampler and reduce variance. Theoretically, they provide an asymptotic regret analysis on sampling variance. Empirically, this dynamic sampler named as BanditSampler is more flexible to capture the underlying dynamics of embeddings and exhibits promising performance in a variety of datasets.

However, we will show in Section~\ref{sec:dilemma} that there are several critical issues related to the numerical stability and theoretical limitations of the BanditSampler \cite{liu2020bandit}.  
First, the reward function designed is numerically unstable. 
Second, the bounded regret still can be regarded as a linear function of training horizon $ \T $. 
Third, their analysis relies on two strong implicit assumptions, and does \textit{not} account for the unavoidable dependency between embedding-dependent rewards and GNN training dynamics.

In this paper, we build upon the bandit formulation for GNN sampling and propose a newly-designed reward function that trades bias with variance. In Section~\ref{sec:rethink_reward}, we highlight that the proposed reward has the following crucial advantages: (i) It is numerically stable. (ii) It leads to a more meaningful notion of regret directly connected to \emph{sampling approximation error}, the expected error between aggregation from sampling and that from full neighborhood. (iii) Its variation can be formulated by GNN training dynamics. 
Then in Section~\ref{sec:interpret_regret}, we clarify how the induced regret is connected to sampling approximation error and emphasize that the bounded variation of rewards is essential to derive a meaningful sublinear regret, i.e., a per-iteration regret that decays to zero as $T$ becomes large. In that sense, we are the first to explicitly account for GNN training dynamic due to stochastic gradient descent (SGD) so as to establish a bounded variation of embedding-dependent rewards, which we present in Section~\ref{sec:training_dynamic}. 

Based on that, in Section~\ref{sec:algorithm_regret}, we prove our main result, namely, that the regret of the proposed algorithm as the order of $  (\T\sqrt{\ln\T})^{2/3} $, which is near-optimal and manifest that the sampling approximation error of our algorithm asymptotically converges to that of the optimal oracle with the near-fastest rate. Hence we name our algorithm as \textit{Thanos} from "Thanos Has A Near-Optimal Sampler". Finally, empirical results in Section~\ref{sec:experiments} demonstrate the improvement of Thanos over BanditSampler and others in terms of variance reduction and generalization performance.

\section{Background}\label{sec:background}
\subsection{Graph Neural Networks and Neighbor Sampling}
{\bf Graph Neural Networks}. Given a graph $ \mathcal{G} = ( \VV, \mathcal{E}) $, where $ \VV \text{ and } \mathcal{E} $ are node and edge sets respectively,  the forward propagation of a GNN is formulated as 
$ \hb_{v,t}^{(l+1)} = \sigma( \sum_{i\in\Nv} a_{vi} \hbl_{i,t} \Wtl )  $ for the node $ v \in \VV $ at training iteration $ t $.   
Here $ \hbl_{i,t} \in \R^{d} $ is the hidden embedding of node $ i $ at the layer $ l $,  $ \hb^{(0)}_{i,t} = \xb_{i} $ is the node feature, and $ \sigma(\cdot) $ is the activation function. Additionally, $ \Nv $ is the neighbor set of node $ v $, $ \Dv = |\Nv| $ is the degree of node $v$, and $ \avi > 0 $ is the edge weight between node $ v $ and $ i $. And $ \Wtl \in \R^{d\times d} $ is the GNN weight matrix, learned by minimizing the stochastic loss $\LL$ with SGD.  Finally, we denote $ \zbl_{i,t} = \avi\hbl_{i,t} $ as the weighted embedding, $ [\K] = \{i| 1\leq i\leq \K \} $, and for a vector $ \xb \in \R^{d_0} $, we refer to its 2-norm as $ \norm{\xb} $; for matrix $ W $, its spectral norm is $ \norm{W} $.

{\bf Neighbor Sampling}. Recursive neighborhood expansion will cover a large portion of the graph if the graph is dense or scale-free even within a few layers. Therefore, we consider to neighbor sampling methods which samples $\kk$ neighbors under the distribution $ p_{v,t}^{(l)} $ to approximate $ \sum_{i\in\Nv} \zbl_{i,t} $ with this subset $ \Ns $. We also call $ p_{v, t}^{(l)} $ the policy. 
For ease of notation, we simplify $ p_{v,t}^{(l)} $ as $ p_{t} = \{ p_{i,t} | i \in \Nv \} $; $ p_{i,t} $ is the probability of neighbor $ i $ to be sampled.  We can then approximate $ \mubl_{v,t} = \sum_{i\in\Nv} \zbl_{i,t} $ with an unbiased estimator $ \muhbl_{v,t} = \frac{1}{\kk}\sum_{i \in \Ns} {\zbl_{i,t}}/{p_{i,t}} $.  As it is unbiased, only the variance term $ \V_{p_t}(\muhbl_{v,t}) $ need to be considered when optimizing the policy $ p_t $.  
Define the variance term when $\kk=1$ as $\Vpt$. Then following \cite{salehi2017stochastic}, $\V_{p_t}(\muhbl_{v,t}) = \Vpt/\kk$ with $\Vpt$ decomposes as $ \Vpt = \Ve - \Vc $. 
with $ \Ve = \sum_{i \in \Nv} { \norm{ \zbl_{i,t} }^{2} }/{ p_{i,t} } $, which is dependent on $ p_t $ and thus refereed as the \textit{effective variance}. And $ \Vc = \norm{\sum_{j \in \Ni} \zbl_{j,t}}^{2} $ is independent on the policy and therefore referred to as \textit{constant variance}.

\subsection{Formulate Neighbor Sampling as Multi-Armed Bandit}

\begin{figure*}[t!]
  \centering
  \begin{subfigure}{0.35\linewidth}
    \centering
    \vspace{-0.3cm}
    \includegraphics[width=0.95\textwidth]{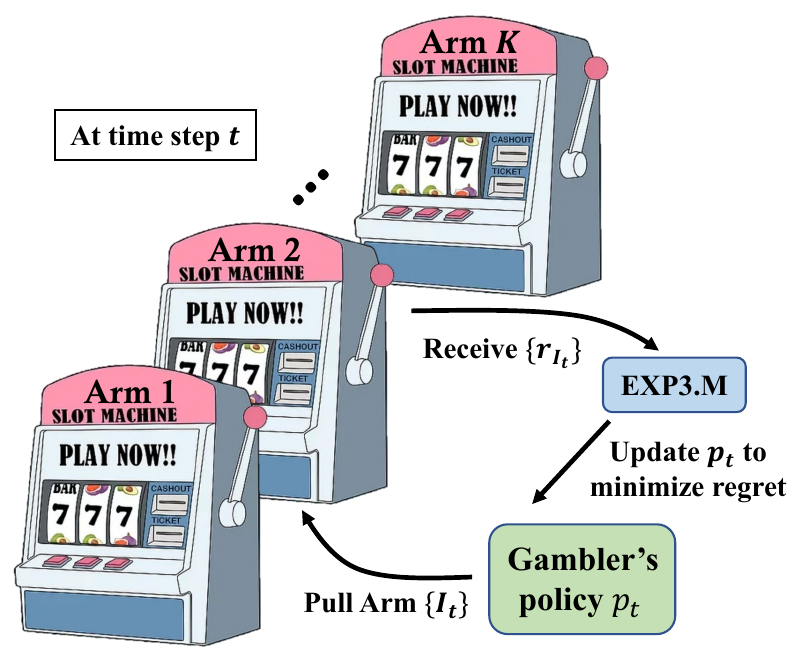}
    \vspace{-0.2cm}
    \caption{Adversary Multi-Armed Bandit.}
    \label{fig:MAB_visualization}
  \end{subfigure}
  \begin{subfigure}{0.64\linewidth}
    \centering
    \vspace{0.45cm}
    \includegraphics[width=0.95\textwidth]{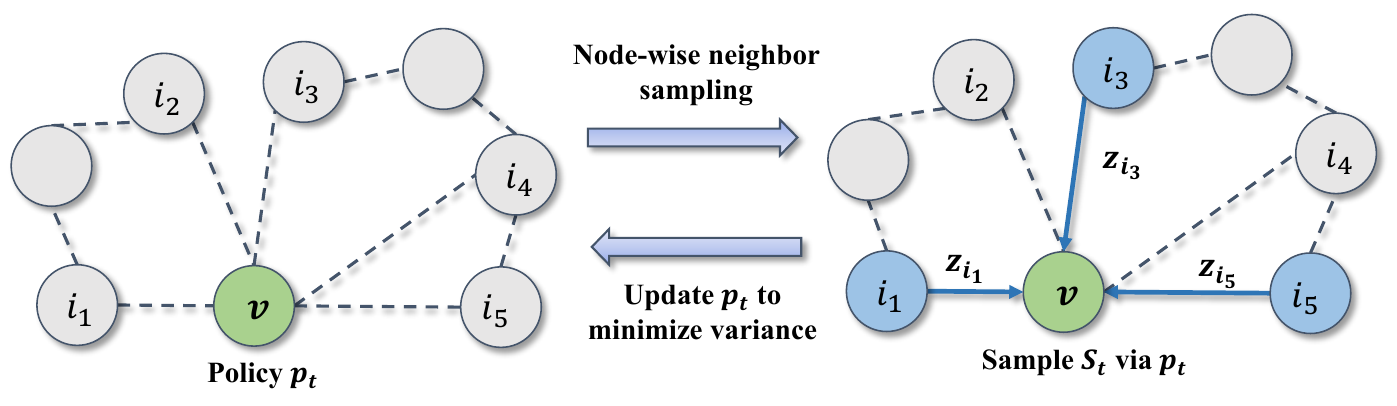}
    \vspace{0.25cm}
    \caption{Formulate neighbor sampling as a MAB problem.}
    \label{fig:NS_visualization}
  \end{subfigure}
  \vspace{-1mm}
  \caption{Fig.~\ref{fig:MAB_visualization} visualizes the pipeline of adversary multi-armed bandit, in which, the reward is prior unknown, non-stationary and only partially observable for the pulled arms. It motivates us to formulate the neighbor sampling as a MAB problem (Fig.~\ref{fig:NS_visualization}). }
  \label{fig:MAB_NS_Visualization}
\end{figure*} 

The optimal policy in terms of reducing the variance $\Vpt$ is given by $ p_{i,t}^{*} = \frac{\norm{\zb_{i,t}}}{ \sum_{j \in \Ni} \norm{\zb_{j,t}} }$ \cite{salehi2017stochastic}.  However, this expression is intractable to compute for the following reasons: (i) It is only after sampling and forward propagation that we can observe $ \zbl_{i,t} $, and $\zbl_{i,t} $ changes with time along an optimization trajectory with unknown dynamics. (ii) $ \zbl_{i,t} $ is only partially observable in that we cannot see the embeddings of the nodes we do not sample. While static policies \cite{hamilton2017inductive,chen2018fastgcn,zou2019layer} are capable of dealing with (ii), they are not equipped to handle (i) as required  to approximate $ p_t^{*} $ and reduce the sampling variance effectively. 
In contrast, adversarial MAB frameworks are capabable of addressing environments with unknown, non-stationary dynamics and partial observations alike (See Fig.\ref{fig:MAB_NS_Visualization}).  The basic idea is that a hypothetical gambler must choose which of $ K $ slot machines to play (See Fig.~\ref{fig:MAB_visualization}). For neighbor sampling, $K$ is equal to the degree $\Dv$ of root node $v$. At each time step, the gambler takes an action, meaning pulling an arm $ \It \in [K] $ according to his policy $ p_t $, and then receives a reward $ \rl_{\It} $. To maximize cumulative rewards, an algorithm is applied to update the policy based on the observed reward history $\{ \rl_{I_\tau} : \tau = 1\ldots,t \}$.

\citet{liu2020bandit} formulate node-wise neighbor sampling as a MAB problem.
Following the general strategy from \citet{salehi2017stochastic} designed to reduce the variance of stochastic gradient descent, they apply an adversarial MAB to GNN neighbor sampling using the reward
\begin{equation}\label{eq:bs_reward}
\rl_{i,t} = -\nabla_{p_{i,t}}\Ve(p_t) =
\norm{\zbl_{i,t}}~/~p_{i,t}^2,
\end{equation}
which is the negative gradient of the effective variance w.r.t.~the policy.  Since $ \Ve(p_t) - \Ve(p_t^{*}) \leq \langle p_t - p_t^{*}, \nabla_{p_t} \Ve(p_t) \rangle $, maximizing this reward over a sequence of arm pulls, i.e., $ \sum_{t=1}^{\T}\rl_{\It,t} $, is more-or-less equivalent to minimizing an upper bound on $ \sum_{t=1}^{\T} \Ve(p_t) - \sum_{t=1}^{\T} \Ve(p_t^{*}) $.  The actual policy is then updated using one of two existing algorithms designed for adversarial bandits, namely Exp3 \cite{auer2002nonstochastic} and Exp3.M \cite{uchiya2010algorithms}.  
Please see Appendix~\ref{app:related_alg} for details.
Finally, \citet{liu2020bandit} prove that the resulting BanditSampler can asymptotically approach the optimal variance with a factor of \textit{three}:
\begin{equation}\label{eq:bs_bound}
\textstyle{
\sum_{t=1}^{\T} \Ve(p_t) \leq 3 \sum_{t=1}^{\T} \Ve(p_t^{*}) + 10 \sqrt{ {\T\K^4 \ln(\K/\kk)}/{\kk^3}}.
}
\end{equation} 
Critically however, this result relies on strong implicit assumptions, and does \textit{not} account for the unavoidable dependency between the reward distribution and GNN model training dynamics.  We elaborate on this and other weaknesses of the BanditSampler next.

\vspace{-3mm}
\subsection{Limitation of BanditSampler}\label{sec:dilemma}
\vspace{-2mm}
Updated by Exp3, BanditSampler as described is sufficiently flexible to capture the embeddings' dynamics and give higher probability to ones with larger norm. And the dynamic policy endows it with promising performance on large datasets.  Moreover, it can be applied not only to GCN but GAT models \cite{velivckovic2017graph}, where $ \avi $ change with time as well. It is an advantage over previous sampling approaches. Even so, we still found several crucial drawbacks of the BanditSampler. 

\textbf{Numerical Instability } 
Due to the $ p_{i,t} $ in the denominator of \eqref{eq:bs_reward}, the reward of BanditSampler suffers from numerical instability especially when the neighbors with small $ p_{i,t} $ are sampled.
From Fig.~\ref{fig:bs_MaxMeanR} (in Appendix), we can observe that the rewards \eqref{eq:bs_reward} of BanditSampler range between a large scale. 
Even though the mean of received rewards is around $ 2.5 $, the max of received rewards can attain $ 1800 $. This extremely heavy tail distribution forces us to choose a quite small temperature hyperparameter $ \eta $ (Algorithm~\ref{alg:Exp3} and \ref{alg:Exp3.M} in Appendix~\ref{app:related_alg}), resulting in dramatic slowdown of the policy optimization. By contrast, the reward proposed by us in the following section is more numerically stable (See Fig.~\ref{fig:our_MaxMeanR} in Appendix) and possesses better practical interpretation (Fig.~\ref{fig:visualize_ourR}). 

\textbf{Limitation of Existing Regret and Rewards }
There are two types of regret analyses for bandit algorithms \cite{auer2002nonstochastic,besbes2014stochastic}: (i) the \textit{weak regret}  with a {static oracle} given by $ \hReg(\T) = \max_{j\in[\K]} (\sum_{t=1}^{\T}\E[\rl_{j,t}]) - \sum_{t=1}^{\T} \E[\rl_{\It,t}]  $,  which measures performance relative pulling the single best arm; and (ii) the \textit{worst-case regret} with a {dynamic oracle} given by $ \Reg(\T) = \sum_{t=1}^{\T}\max_{j\in[\K]}\E[\rl_{j,t}] - \sum_{t=1}^{\T} \E[\rl_{\It,t}] $, where the oracle can pull the best arm at each $t$. When the growth of the regret as a function of $ \T $ is sublinear, the policy is long-run average optimal, meaning the long-run average performance converges to that of the oracle. 
But from this perspective, the bound from \eqref{eq:bs_bound} can actually function more like  worst-case regret. To see this, note that the scale factor on the oracle variance is $3$, which implies that once we subtract $\Ve(p_t^*)$ from the upper bound, the effective regret satisfies $ \Reg(\T) \leq 2 \sum_{t=1}^{\T} \Ve(p_t^*) + \Oscale(\sqrt{\T}) $. 
By substituting $ p_t^{*} $ into $ \Ve $, we obtain $ \Ve(p_t^*) =  \sum_{i\in\Nv}\sum_{j\in\Nv}\norm{\zbl_{i,t}}\norm{\zbl_{j,t}} $, 
which can be regarded as a constant lower bound given the converged variation of $\zb_{i,t}$ (Lemma~\ref{lem:bound_embedding}).
Consequently, the regret is still linear about $ \T $. And linear worst-case regret cannot confirm the effectiveness of policy since uniform random guessing will also achieve linear regret.

\textbf{Crucial Implicit Assumptions}
There are two types of adversaries: if the current reward distribution is independent with the previous actions of the player, it is an oblivious adversary; otherwise, it is a non-oblivious adversary \cite{bubeck2012regret}.
GNN neighbor sampling is apparently non-oblivious setting but it is theoretically impossible to provide any meaningful guarantees on the worst-case regret in the non-oblivious setting (beyond what can be achieved by random guessing) unless explicit assumptions are made on reward variation \cite{besbes2014stochastic}.  BanditSampler \cite{liu2020bandit} circumvents this issue by implicitly assuming bounded variation and oblivious setting (See Appendix~\ref{app:implict_ass}), but this cannot possibly be true since embedding-dependent rewards must depend on training trajectory and previous sampling. In contrast, \textit{we are the first to explicitly account for training dynamic in deriving reward variation and further regret bound in non-oblivious setting, and without this consideration no meaningful bound can possibly exist. }

\section{Towards a More Meaningful Notion of Regret}
To address the limitations of the BanditSampler, we need a new notion of regret and the corresponding reward upon which it is based.  In this section we motivate a new biased reward function, interpret the resulting regret that emerges, and then conclude by linking with the GCN training dynamics.

\vspace{-3mm}
\subsection{Rethinking the Reward}\label{sec:rethink_reward}
\vspace{-2mm}
Consider the following bias-variance decomposition of approximation error:
\begin{align*}\label{eq:bias_var_decomp}
\E [\norm{\muhbl_{v,t} - \mubl_{v,t}}^{2}] 
=&  \norm{\mubl_{v,t} - \E[\muhbl_{v,t}]}^{2} + \V_{p_t}(\muhbl_{v,t})
\triangleq \Bias(\muhbl_{v,t}) + \V_{p_t}(\muhbl_{v,t}).
\end{align*} 
Prior work has emphasized the enforcement of zero bias as the starting point when constructing samplers; however, we will now argue that broader estimators that \emph{do} introduce bias should be reconsidered for the following reasons: 
(i) Zero bias itself may not be especially necessary given that even an unbiased $\small \muhbl_{v,t} $ will become biased for approximating $\small \hb_{v,t}^{(l+1)} $ once it is passed through the non-linear activation function. 
(ii) BanditSampler only tackles the variance reduction after enforcing zero bias in the bias-variance trade-off.  However, it is not clear that the optimal approximation error must always be achieved via a zero bias estimator, i.e., designing the reward to minimize the approximation error in aggregate could potentially perform better, even if the estimator involved is biased.  (iii) Enforcing a unbiased estimator induces other additional complications: the reward can become numerically unstable and hard to bound in the case of a non-oblivious adversary.  
And as previously argued, meaningful theoretical analysis must account for optimization dynamics that fall under the non-oblivious setting. 
Consequently, to address these drawbacks, we propose to trade variance with bias by adopting the biased estimator: $ \muhbl_{v,t} = \frac{\Dv}{\kk} \sum_{i\in\Ns} \zbl_{i,t} $ and redefine the reward:   
\begin{equation}\label{eq:our_reward}
\rl_{i,t} 
= 2 \zb^{(l)\top}_{i,t} \zbabl_{v,t} - \normbig{\zbl_{i,t}}^{2}, \quad \text{with}\quad \zbabl_{v,t} = \frac{1}{\K} \mubl_{v,t} = \frac{1}{\K} \sum_{j\in\Nv} \zbl_{j,t}. 
\end{equation}
Equation \eqref{eq:our_reward} is derived by weighting the gradient of bias and variance w.r.t. $ p_{i,t} $ equally, which we delegate to Appendix.
Additionally, because of partial observability,  we approximate $ \zbabl $ with $\small \frac{1}{\kk} \sum_{i \in \Ns} \zbl_{i,t} $. 
We also noticed,  due to the exponential function from the Exp3 algorithm (see line 6, Algorithm~\ref{alg:Exp3}), the negative rewards of some neighbors will shrink $\wo_{i,t}$ considerably, which can adversely diminish their sampling probability making it hard to sample these neighbors again. 
Consequently, to encourage the exploration on the neighbors with negative rewards, we add ReLU function over rewards (note that our theory from Section \ref{sec:algorithm_regret} will account for this change).  The practical reward is then formulated as
\begin{equation}\label{eq:our_practical_reward}
\rlt_{i,t} = \relu\Big( 2\zb^{(l)\top} \sum\nolimits_{j\in\Ns}\frac{1}{\kk} \zbl_{j,t} - \norm{\zbl_{i,t}}^2 \Big). 
\end{equation}
The intuition of \eqref{eq:our_reward} and by extension \eqref{eq:our_practical_reward} is that the neighbors whose weighted embeddings $ \zbl_{i,t} $ are closer to $\small \zbabl_{v,t} $ will be assigned larger rewards (See Fig.~\ref{fig:visualize_ourR}). Namely, our reward will bias the policy towards  neighbors that having contributed to the accurate approximation instead of ones with large norm as favored by BanditSampler. And in the case of large but rare weighted embeddings far from $ \mubl_{v,t}$, BanditSampler tends to frequently sample these large and rare embeddings, causing significant deviations. The empirical evidence is shown in Section~\ref{sec:exp_norm_bias}. 

The reward \eqref{eq:our_reward} possesses following practical and theoretical advantages, which will be expanded more in next sections: 
\begin{itemize}[noitemsep,topsep=0pt,parsep=0pt,partopsep=0mm,leftmargin=3mm]
  \item Since it is well bounded by $ \rl_{i, t} = \norm{\zbabl_{v,t}}^{2} - \norm{\zbl_{i,t} -\zbabl_{v,t}}^{2} \leq \norm{\zbabl_{v,t}}^{2} $, the proposed reward is more numerical stable as we show in Fig.~\ref{fig:reward_visualization} (See Appendix). 
  \item It will incur a more meaningful notion of regret, meaning the regret defined by \eqref{eq:our_reward} is equivalent to the gap between the policy and the oracle w.r.t.~approximation error. 
  \item The variation of reward \eqref{eq:our_reward} is tractable to bound as a function of training dynamics of GCN in non-oblivious setting, leading to a provable sublinear regret as the order of  $ (\K\ln\K)^{1/3} (\T\sqrt{\ln\T})^{2/3} $, which means the approximation error of policy asymptotically converges to the optimal oracle with a factor of \textit{one} rather than three. 
\end{itemize}

\vspace{-3mm}
\subsection{Interpreting the Resulting Regret}\label{sec:interpret_regret}
\vspace{-2mm}
We focus on the worst-case regret in the following analysis. The regret defined by reward \eqref{eq:our_reward} is directly connected to approximation error. More specifically, we notice $ \rl_{i,t} = - \norm{\zbl_{i,t} - \zbabl_{v,t}}^2 + \norm{\zbabl_{v,t}}^2 $. Since $ \norm{\zbabl_{v,t}}^{2} $ will be canceled out in $ \Reg(\T) $, we have $ \Reg(\T) = \sum_{t=1}^{\T} (\E\norm{\zbl_{\It, t}-\zbabl_{v,t}}^{2} - \max_{j\in[\K]} \E\norm{\zbl_{j,t} - \zbabl_{v,t}}^2) $, where the former term is the expected approximation error of the policy and the latter is that of the optimal oracle. Consequently, the regret defined by \eqref{eq:our_reward} is the gap between the policy and the optimal oracle w.r.t. the approximation error. 

Then we clarify how to bound this regret. The worst-case regret is a more solid guarantee of optimality than the weak regret. 
Even though some policies can establish the best achievable weak regret $ \Oscale(\sqrt{\T}) $, their worst-case regret still be linear. This is because the gap between static and dynamic oracles can be a linear function of $ \T $ if there is no constraint on rewards.    
For example, consider the following worst-case scenario. Given three arms $ \{ i_1, i_2, i_3 \} $, at every iteration, one of them will be assigned a reward of 3 while the others receive only 1. In that case, consistently pulling any arm will match the static oracle and any static oracle will have a linear gap with the dynamic oracle. 
Hence it is impossible to establish a sublinear worst-case regret unless additional assumptions are introduced on the variation of the rewards to bound the gap between static and dynamic oracles \cite{besbes2014stochastic}.
\citet{besbes2014stochastic} claim that the worst-case regret can be bounded as a function of the variation budget:
\begin{equation}\label{eq:vari_budget}
\sum_{t=1}^{\T-1} \sup_{i\in[\K]} \Big| \E[\rtl_{i, t+1}] - \E[\rtl_{i,t}] \Big| \leq \Va_{\T}
\end{equation}
where $ \Va_{\T} $ is called the variation budget. Then, \citet{besbes2014stochastic} derived the regret bound as $ \Reg(\T) = \Oscale(K \ln K \cdot \Va_{\T}^{1/3} \T^{2/3}) $ for Rexp3. Hence, if the variation budget is a sublinear function of $ \T $ in the given environment, the worst-case regret will be sublinear as well.

To fix the theoretical drawbacks of BanditSampler, we first drop the assumption of oblivious adversary, i.e. considering the dependence between rewards and previous sampling along the training horizon of GCN. Then to bound the variation budget, we account for GCN training dynamic in practically-meaningful setting (i.e. no unrealistic assumptions) as described next.

\vspace{-3mm}
\subsection{Accounting for the Training Dynamic of GCN}\label{sec:training_dynamic}
\vspace{-2mm}
One of our theoretical contributions is to study the dynamics of embeddings in the context of GNN training optimized by SGD. We present our assumptions as follows:
\begin{itemize}[noitemsep,topsep=0pt,parsep=1pt,partopsep=0mm,leftmargin=3mm]
  \item \textbf{Lipschitz Continuous Activation Function:} $ \forall \xb, \bm{y}, \norm{\sigma(\xb) - \sigma(\bm{y})} \leq \Csig\norm{\xb - \bm{y}} $ and $ \sigma(\bm{0}) = \bm{0} $. 
  \item \textbf{Bounded Parameters:} For any $ 1\leq t\leq \T  $ and $ 0\leq l \leq L-1 $, $ \norm{\Wtl} \leq \Cthe $. 
  \item \textbf{Bounded Gradients:} For $ 1\leq t\leq \T$,  $\exists \Cg$, such that $ \sum_{l=0}^{L-1} \norm{\nabla_{\Wtl} \LL} \leq \Cg $.  
\end{itemize}
Besides, given the graph $ \mathcal{G} $ and its feature $ \X $, since $ \avi $ is fixed in GCN,  define $ \Cx = \max_{v\in\VV} \norm{\sum_{i \in \Nv} \avi\xb_{i}} $. Define $ \Du = \max_{v \in \VV} \Dv $, $ \Au = \max_{v,i} \avi $, $ \G = \Csig\Cthe\Du\Au $, and $ \Deztl = \max_{i\in\VV} \norm{\zbl_{i,t+1} - \zbl_{i,t}} $. For SGD, we apply the learning rate schedule as $ \alpha_t = 1/t $. The above assumptions are reasonable. The bounded gradient is generally assumed in the non-convex/convex convergence analysis of SGD \cite{nesterov2003introductory, reddi2016stochastic}. And the learning rate schedule is necessary for the analysis of SGD to decay its constant gradient variance \cite{ge2019step}. Then we will bound $ \Deztl $ as a function of gradient norm and step size by recursively unfolding the neighbor aggregation. 
\begin{lemma}[Dynamic of Embedding]\label{lem:bound_embedding}
  Based on our assumptions on GCN, for any $ i \in\VV $ at the layer $ l $, we have:
  \begin{equation}\label{eq:bound_weight_embed}
  \normbig{\zbl_{i,t}} \leq \Cz,\quad \Big|\rtl_{i, t} \Big| \leq \Cr,\quad   \Big|\rl_{i,t} \Big| \leq \Cr ,
  \end{equation}
  where $ \Cz = \G^{l-1}\Au\Csig\Cthe\Cx $ and $ \Cr = 3\Cz^{2} $. Then,  
  consider the training dynamics of GCN optimized by SGD. For any node $ i \in \VV $ at the layer $ l $, we have 
  \begin{equation}\label{eq:bound_diff_z}
  \Deztl = \max_{i\in\VV} \normbig{ \zbl_{i,t+1} - \zbl_{i,t} } \leq \alpha_t \G^{l-1}\Au\Csig\Cx\Cg. 
  \end{equation}
\end{lemma}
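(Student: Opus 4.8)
The plan is to establish the three uniform bounds in \eqref{eq:bound_weight_embed} by induction on the layer index $l$, and then derive the one-step variation bound \eqref{eq:bound_diff_z} by a second induction on $l$ that reuses the first.

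First, for $\normbig{\zbl_{i,t}}$ I would set $\Chl = \max_{i\in\VV}\norm{\hbl_{i,t}}$, which will turn out to be uniform in $t$. Since $\sigma(\bm 0)=\bm 0$ and $\sigma$ is $\Csig$-Lipschitz, $\norm{\sigma(\xb)}\le\Csig\norm{\xb}$, so for $l=1$ submultiplicativity of the spectral norm together with the definition of $\Cx$ gives $\norm{\hb^{(1)}_{v,t}} = \norm{\sigma(\sum_{i\in\Nv}\avi\xb_i\W_{t}^{(0)})} \le \Csig\Cthe\Cx$. For the step $l\to l+1$ the same computation with $\sum_{i\in\Nv}\avi\norm{\hbl_{i,t}}\le\Du\Au\Chl$ yields $C_h^{(l+1)}\le\Csig\Cthe\Du\Au\Chl=\G\Chl$, hence $\Chl\le\G^{l-1}\Csig\Cthe\Cx$, and multiplying by $\avi\le\Au$ gives $\norm{\zbl_{i,t}}\le\Cz$. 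For the reward bounds, note $\zbabl_{v,t}$ is an average over $j\in\Nv$ of the $\zbl_{j,t}$, so $\norm{\zbabl_{v,t}}\le\Cz$, and likewise $\norm{\tfrac1\kk\sum_{j\in\Ns}\zbl_{j,t}}\le\Cz$; plugging these into the closed forms \eqref{eq:our_reward} and \eqref{eq:our_practical_reward} and applying Cauchy--Schwarz gives $|2\zb^{(l)\top}_{i,t}\zbabl_{v,t}-\norm{\zbl_{i,t}}^2|\le 2\Cz^2+\Cz^2=\Cr$, and since $0\le\relu(x)\le|x|$ the same bound carries over to $\rtl_{i,t}$.

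For the variation bound I would write $\zbl_{i,t+1}-\zbl_{i,t}=\avi(\hbl_{i,t+1}-\hbl_{i,t})$, so it suffices to bound $\Dehtl=\max_i\norm{\hbl_{i,t+1}-\hbl_{i,t}}$ and multiply by $\Au$. For $l=1$, Lipschitzness plus the SGD update $\W_{t+1}^{(0)}-\W_{t}^{(0)}=-\alpha_t\nabla_{\W_{t}^{(0)}}\LL$ gives $\Dehtone\le\Csig\Cx\alpha_t\norm{\nabla_{\W_{t}^{(0)}}\LL}$. For $l\to l+1$, I would split $\sum_i\avi\hbl_{i,t+1}\Wtpl-\sum_i\avi\hbl_{i,t}\Wtl$ as $\sum_i\avi\hbl_{i,t+1}(\Wtpl-\Wtl)+\sum_i\avi(\hbl_{i,t+1}-\hbl_{i,t})\Wtl$, bound each summand by submultiplicativity using the uniform bound $\Chl\le\G^{l-1}\Csig\Cthe\Cx$ from the first induction and $\Wtpl-\Wtl=-\alpha_t\nabla_{\Wtl}\LL$, and use the identity $\Du\Au\Csig\Cthe=\G$ to collapse constants into the recursion $\Dehtlp\le\G\Dehtl+\G^{l}\Csig\Cx\alpha_t\norm{\nabla_{\Wtl}\LL}$. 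Unrolling this (the $l=1$ case already fits the pattern) gives $\Dehtl\le\G^{l-1}\Csig\Cx\alpha_t\sum_{k=0}^{l-1}\norm{\nabla_{\W_{t}^{(k)}}\LL}\le\G^{l-1}\Csig\Cx\alpha_t\Cg$ by the bounded-gradient assumption, and multiplying by $\avi\le\Au$ yields \eqref{eq:bound_diff_z}.

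The hard part will be the bookkeeping in this last recursion: one has to keep the geometric growth $\G^{l-1}$ of the embedding magnitude (from the first induction) cleanly separate from the extra factor $\G$ picked up at each recursive Lipschitz step, and observe that the raw-constant product $\Du\Au\Csig\Cthe$ collapses exactly to $\G$ so that the unrolled sum telescopes into $\G^{l-1}\Csig\Cx\alpha_t\Cg$ rather than an exponentially worse constant; the $l=0\to l=1$ base step must also be handled directly through $\Cx$ rather than through $\Du\Au\max_i\norm{\xb_i}$. Everything else is a routine application of $\Csig$-Lipschitzness, the spectral bound $\norm{\Wtl}\le\Cthe$, Cauchy--Schwarz, and $\relu(x)\le|x|$.
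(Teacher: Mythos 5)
Your proposal is correct and follows essentially the same route as the paper's proof: the uniform bound $\norm{\hbl_{i,t}}\le \G^{l-1}\Csig\Cthe\Cx$ via layer-wise recursion from the Lipschitz/bounded-parameter assumptions, the reward bounds via Cauchy--Schwarz plus $\relu$ being $1$-Lipschitz with $\relu(0)=0$, and the variation bound via the split $\sum_i \avi\hbl_{i,t+1}(\Wtpl-\Wtl)+\sum_i \avi(\hbl_{i,t+1}-\hbl_{i,t})\Wtl$ leading to the recursion $\Dehtlp\le\G\Dehtl+\alpha_t\G^{l}\Csig\Cx\norm{\nabla_{\Wtl}\LL}$, unrolled against the bounded-gradient assumption. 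The only difference is presentational (explicit induction versus the paper's recursive unfolding), and your identification of the base case through $\Cx$ and the collapse of $\Du\Au\Csig\Cthe$ into $\G$ matches the paper exactly.
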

Lemma~\ref{lem:bound_embedding} is obtained by recursively unfolding neighbor aggregations and training steps, and can be generally applied to any GCN in practical settings. Based on it, we can derive the variation budget of reward \eqref{eq:our_reward} and \eqref{eq:our_practical_reward} as a function of $ \Deztl $ in the non-oblivious setup. 
\begin{lemma}[Variation Budget]\label{lem:variation_budget}
  Given the learning rate schedule of SGD as $ \alpha_t = 1/t $ and our assumptions on the GCN training, for any $ \T\geq 2 $, any $ v\in\VV $, the variation of the expected reward in \eqref{eq:our_reward} and  \eqref{eq:our_practical_reward} can be bounded as:
  \begin{equation}\label{eq:bound_variation_budget}
  \sum_{t=1}^{\T} \Big|\E[\rl_{i,t+1}] - \E[\rl_{i,t}]\Big| \leq \Va_{\T} = \Cv \ln\T, \quad
  \sum_{t=1}^{\T} \Big|\E[\rtl_{i,t+1}] - \E[\rtl_{i,t}]\Big| \leq \Va_{\T} = \Cv \ln\T
  \end{equation}
  where $ \Cv = 12 \G^{2(l-1)} \Csig^2 \Cx^2 \Au^2 \Cthe\Cg $. 
\end{lemma}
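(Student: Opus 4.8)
The plan is to bound the per-step variation $\big|\E[\rtl_{i,t+1}] - \E[\rtl_{i,t}]\big|$ by a quantity proportional to the learning rate $\alpha_t$, and then sum the resulting harmonic-type series against $\alpha_t = \frac{1}{t}$. First, since expectation is linear, $\big|\E[\rtl_{i,t+1}] - \E[\rtl_{i,t}]\big| = \big|\E[\rtl_{i,t+1} - \rtl_{i,t}]\big| \le \E\big[|\rtl_{i,t+1} - \rtl_{i,t}|\big]$, so it suffices to control the absolute variation of a single reward realization. Because $\relu$ is $1$-Lipschitz, $|\rtl_{i,t+1} - \rtl_{i,t}| \le |g_{i,t+1} - g_{i,t}|$, where $g_{i,t} := 2\,\zb^{(l)\top}_{i,t}\zbabl_{v,t} - \norm{\zbl_{i,t}}^2$ is the argument of the $\relu$ in \eqref{eq:our_reward} (the implemented reward \eqref{eq:our_practical_reward} replaces $\zbabl_{v,t}$ by the Monte Carlo estimate $\frac{1}{\kk}\sum_{j\in\Ns}\zbl_{j,t}$, which shares the norm and variation bounds used below). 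By Lemma~\ref{lem:bound_embedding}, $\norm{\zbl_{i,t}} \le \Cz$, and since $\zbabl_{v,t} = \frac{1}{\K}\sum_{j\in\Nv}\zbl_{j,t}$ is an average of such vectors, $\norm{\zbabl_{v,t}} \le \Cz$ as well.

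Next I would expand $g_{i,t+1} - g_{i,t}$ by telescoping inside the bilinear and quadratic terms:
\begin{align*}
g_{i,t+1} - g_{i,t} &= 2(\zbl_{i,t+1} - \zbl_{i,t})^{\top}\zbabl_{v,t+1} + 2\,\zb^{(l)\top}_{i,t}(\zbabl_{v,t+1} - \zbabl_{v,t}) \\
&\quad - (\zbl_{i,t+1} - \zbl_{i,t})^{\top}(\zbl_{i,t+1} + \zbl_{i,t}).
\end{align*}
Applying the triangle and Cauchy--Schwarz inequalities with $\norm{\zbl_{i,t}}, \norm{\zbabl_{v,t}} \le \Cz$ and the fact that the per-step variation of any weighted embedding — hence of their average $\zbabl_{v,t}$ — is at most $\Deztl$ yields the deterministic pointwise bound $|g_{i,t+1} - g_{i,t}| \le 6\,\Cz\,\Deztl$, and therefore $\E\big[|\rtl_{i,t+1} - \rtl_{i,t}|\big] \le 6\,\Cz\,\Deztl$.

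It then remains to substitute the embedding-dynamics estimates from Lemma~\ref{lem:bound_embedding}: plugging $\Deztl \le \alpha_t\,\G^{l-1}\Au\Csig\Cx\Cg$ from \eqref{eq:bound_diff_z} and $\Cz = \G^{l-1}\Au\Csig\Cthe\Cx$ into $6\,\Cz\,\Deztl$ gives the deterministic bound $\E\big[|\rtl_{i,t+1} - \rtl_{i,t}|\big] \le 6\,\alpha_t\,\G^{2(l-1)}\Csig^2\Cx^2\Au^2\Cthe\Cg$. Summing over $t$ with $\alpha_t = \frac{1}{t}$ and $\sum_{t=1}^{\T}\frac{1}{t} \le 1 + \ln\T \le 2\ln\T$ (for $\T\ge 3$; the case $\T=2$ is a direct check) then gives
\begin{equation*}
\sum_{t=1}^{\T}\Big|\E[\rtl_{i,t+1}] - \E[\rtl_{i,t}]\Big| \le 12\,\G^{2(l-1)}\Csig^2\Cx^2\Au^2\Cthe\Cg\,\ln\T = \Cv\ln\T ,
\end{equation*}
which is exactly the claim, with the bound holding uniformly over $v$ and $i$ (matching the variation-budget definition in \eqref{eq:vari_budget}).

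The step I expect to need the most care is controlling the per-step variation of the \emph{sampled} mean $\frac{1}{\kk}\sum_{j\in\Ns}\zbl_{j,t}$ that actually appears in \eqref{eq:our_practical_reward}: since the sampled set $\Ns$ generally differs from $\Nstp$, this difference is not $\Oscale(\Deztl)$ for a fixed realization, so the telescoping above does not apply verbatim. The clean fix is to carry out the regret analysis with the idealized reward $\relu\big(2\,\zb^{(l)\top}_{i,t}\zbabl_{v,t} - \norm{\zbl_{i,t}}^2\big)$, whose mean vector $\zbabl_{v,t}$ is a deterministic function of the (random) embeddings and hence inherits the variation bound $\Deztl$; alternatively one conditions on the sequence of sampled sets before taking expectations. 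In either route the non-oblivious expectation $\E[\cdot]$ over the prior randomized sampling and SGD history is ultimately dominated by the deterministic scalars $\Cz$ and $\Deztl$ from Lemma~\ref{lem:bound_embedding}, so the remaining steps go through unchanged.
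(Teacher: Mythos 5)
Your proposal follows essentially the same route as the paper's proof: ReLU's $1$-Lipschitz property, telescoping the bilinear and quadratic terms, Cauchy--Schwarz with the bounds $\norm{\zbl_{i,t}}\le \Cz$ and $\Deztl$ from Lemma~\ref{lem:bound_embedding} to get the pointwise bound $6\Cz\Deztl$, and then summing $\alpha_t=\frac{1}{t}$ to absorb the harmonic sum into $12(\cdots)\ln\T$. The subtlety you flag at the end — that the sampled set in \eqref{eq:our_practical_reward} changes between $t$ and $t+1$, so the naive telescoping of $\zbabl_{v,t+1}-\zbabl_{v,t}$ is not valid for a fixed realization — is a genuine point that the paper's own proof silently glosses over (it writes both time steps with the same $\Nst$), and your proposed resolution (analyze the idealized mean, or condition on the sampled sets) is the right repair.
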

\vspace{-1.5mm}
The derivation of Lemma~\ref{lem:variation_budget} is attributed to that our reward variation can be explicitly formulated as a function of embeddings' variation. In contrast, $ p_{i,t} $ emerging in the denominator of \eqref{eq:bs_reward} incurs not only the numerically unstable reward but hardship to bound its variation. More specifically, $ p_{i,t} $ is proportional to the summation of observed reward history of neighbor $ i $, which is hard to bound due to the complication to explicitly keep track of overall sampling trajectory as well as its bilateral dependency with $ p_{i,t} $. It is potentially why BanditSampler's regret \eqref{eq:bs_bound} ignores the dependency between rewards and previous training/sampling steps.  On the contrary, our rewards are tractable to bound as a function of embeddings' dynamic in practical non-oblivious setting, leading to a sublinear variation budget \eqref{eq:bound_variation_budget}, and further a solid near-optimal worst-case regret as presented next.

\section{Main Result: Thanos and Near-Optimal Regret}
\label{sec:algorithm_regret}
\begin{algorithm}[h!]
  \caption{Thanos}
  \label{alg:our_sampler}
  \begin{algorithmic}[1]
    \STATE {\bfseries Input:} $ \eta>0, \gamma \in (0,1), \kk, \T, \DeltaT, \mathcal{G}, \X, \{\alpha_t\}_{t=1}^{\T}. $
    \STATE {\bfseries Initialize}: For any $ v \in \VV $, any $ i \in \Nv $, set $ p_{i,1} = 1/\Dv $. 
    \FOR{$ t = 1, 2, \dots, \T$} 
    \STATE Reinitialize the policy every $ \DeltaT $ steps: $ \forall v\in\VV, \forall i\in\Nv $, set $ p_{i,t} = 1/\Dv $. 
    \STATE Sample $\kk$ neighbors with $ p_t $ and estimate $ \mubl_{v,t} $ with the estimator $ \muhbl_{v,t} = \frac{\Dv}{\kk} \sum_{i\in\Ns} \zbl_{i,t} $.    
    \STATE Forward GNN model and calculate the reward $ \rl_{i, t} $ according to \eqref{eq:our_practical_reward}. 
    \STATE Update the policy and optimize the model following \cite{liu2020bandit} using $\eta$, $\gamma$, and $\{\alpha_t\}_{t=1}^{\T}$. 
    \ENDFOR
  \end{algorithmic}
\end{algorithm}
Algorithm~\ref{alg:our_sampler} presents the condensed version of our proposed algorithm. See Algorithm~\ref{alg:app_thanosM} in Appendix~\ref{app:detailed_alg} for the detailed version. 
Besides the trade-off between bias and variance, and exploration and exploitation, our proposed algorithm also accounts for a third trade-off between \textit{remembering} and \textit{forgetting}: 
given the non-stationary reward distribution, while keeping track of more observations can decrease the variance of reward estimation, the non-stationary environment implies that ``old'' information is potentially less relevant due to possible changes in the underlying rewards. The changing rewards give incentive to dismiss old information, which in turn encourages exploration. Therefore, we apply Rexp3 algorithm \cite{besbes2014stochastic} to tackle the trade-off between remembering and forgetting by reinitializing the policy every $ \DeltaT $ steps (line 4 in Algorithm~\ref{alg:our_sampler}).

Then, we present our main result: bounding the worst-case regret of the proposed algorithm:
\begin{align}
\Reg(\T) =& \sum_{t=1}^{\T} \sum_{i\in\Nkstar} \E[\rl_{i,t}] - \sum_{t=1}^{\T} \sum_{\It\in\Nst} \E^{\pi} [\rl_{\It,t}].  \label{eq:dynamic_regret}
\end{align}
where $ \Nkstar = \argmax_{\Nk\subset\Nv} \sum_{i\in\Nk} \E[\rl_{i,t}], |\Nkstar| = \kk $. Because we consider the non-oblivious adversary, $ \E[\rl_{i,t}] $ is taken over the randomness of rewards caused by the previous history of randomized arm pulling. $ \E^{\pi}[\rl_{\It,t}]  $ is taken over the joint distribution $ \pi $ of the action sequence $ (\Nsone, \Nstwo, \dots, \NsT) $. 
\begin{theorem}[Regret Bound]\label{the:main_theorem}
  Consider Algorithm~\ref{alg:our_sampler} as the neighbor sampling algorithm for training GCN. Given either \eqref{eq:our_reward} or \eqref{eq:our_practical_reward} as reward function, we can bound its regret as follows. Let $ \DeltaT = (\Cv \ln\T)^{-\frac{2}{3}} (\K\ln\K)^{\frac{1}{3}} \T^{\frac{2}{3}} $, $ \eta = \sqrt{\frac{ 2\kk\ln(\K/\kk) }{ \Cr(\exp(\Cr)-1) \K\T }} $, and $ \gamma = \min\{1, \sqrt{ \frac{ (\exp(\Cr)-1) \K\ln(\K/\kk) }{ 2\kk\Cr\T } } \} $. Given the variation budget in \eqref{eq:bound_variation_budget}, for every $ \T \geq \K \geq 2 $, we have the regret bound for either \eqref{eq:our_reward} or \eqref{eq:our_practical_reward} as
  \begin{equation}\label{eq:regret_main}
  \Reg(\T) \leq \Cba (\K\ln\K)^{\frac{1}{3}} \cdot (\T \sqrt{\ln\T})^{\frac{2}{3}}. 
  \end{equation}
  where $ \Cba $ is a absolute constant independent with $ \K $ and $ \T $.  
\end{theorem}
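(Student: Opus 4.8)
The plan is to instantiate the batched-Exp3 (``Rexp3'') argument of \citet{besbes2014stochastic} in our non-oblivious setting, using Lemma~\ref{lem:bound_embedding} for the reward scale and Lemma~\ref{lem:variation_budget} for the drift. First I would split $\{1,\dots,\T\}$ into $m=\lceil \T/\DeltaT\rceil$ consecutive blocks $B_1,\dots,B_m$ of length $\DeltaT$; by the reinitialization in lines~4--5 of Algorithm~\ref{alg:our_sampler}, the weights $\wv_{i,t}$ are reset at every block boundary, so inside each $B_j$ the algorithm is exactly a fresh run of Exp3.M. The regret \eqref{eq:dynamic_regret} then decomposes, block by block, as (i) the gap between the dynamic oracle restricted to $B_j$ and the best fixed size-$\kk$ arm set within $B_j$, plus (ii) the expected weak regret of that fresh Exp3.M run against its own in-block static oracle.

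For term (ii) I would invoke the Exp3.M weak-regret guarantee of \citet{uchiya2010algorithms}, adapted to the reward range $[0,\Cr]$ supplied by Lemma~\ref{lem:bound_embedding}: since $\rtl_{i,t}\in[0,\Cr]$ path-by-path and $\rlh_{i,t}=\rtl_{i,t}/p_{i,t}\,\mathbf 1_{i\in\Nst}$ is, conditioned on the history, an unbiased estimate of $\rtl_{i,t}$, the standard potential argument with the inequality $e^{x}\le 1+x+\frac{\exp(\Cr)-1}{\Cr}\,x^{2}$ on $[0,\Cr]$ gives a per-block expected weak regret of order $\sqrt{\Cr(\exp(\Cr)-1)\,\K\,\DeltaT\,\ln(\K/\kk)/\kk}$; this is precisely why $\eta$ and $\gamma$ in the theorem carry the factors $\Cr(\exp(\Cr)-1)$ and $\ln(\K/\kk)$, and the $\relu$ in \eqref{eq:our_practical_reward} is actually helpful here since it keeps the gains nonnegative and bounded. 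For term (i) I would use the usual drift estimate: inside $B_j$ any per-round expected reward $\E[\rtl_{i,t}]$ moves from its block average by at most the in-block variation $\Va_{B_j}:=\sum_{t\in B_j}\sup_i|\E[\rtl_{i,t+1}]-\E[\rtl_{i,t}]|$, so the oracle gap on $B_j$ is $O(\kk\,\DeltaT\,\Va_{B_j})$, and summing over $j$ with $\sum_j\Va_{B_j}\le\Va_{\T}=\Cv\ln\T$ (Lemma~\ref{lem:variation_budget}) gives a total of $O(\kk\,\DeltaT\,\Cv\ln\T)$.

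Adding the two contributions gives $\Reg(\T)=O\!\big(\tfrac{\T}{\DeltaT}\sqrt{\Cr(\exp(\Cr)-1)\K\DeltaT\ln(\K/\kk)/\kk}\big)+O(\kk\DeltaT\Cv\ln\T)$, i.e.\ a term decreasing in $\DeltaT$ like $\T/\sqrt{\DeltaT}$ and a term increasing like $\DeltaT\,\Cv\ln\T$. Choosing $\DeltaT$ to balance them is exactly the stated $\DeltaT=(\Cv\ln\T)^{-2/3}(\K\ln\K)^{1/3}\T^{2/3}$, at which point both terms are $O\!\big((\K\ln\K)^{1/3}(\Cv\ln\T)^{1/3}\T^{2/3}\big)$; since $(\Cv\ln\T)^{1/3}\T^{2/3}$ is a constant multiple of $(\T\sqrt{\ln\T})^{2/3}$, this is \eqref{eq:regret_main} with $\Cba$ absorbing $\Cv^{1/3}$, $\Cr$, $\exp(\Cr)$ and the numerical constants, while $\T\ge\K\ge2$ is what makes $m$, $\ln(\K/\kk)$ and the rounding $\lceil\T/\DeltaT\rceil$ behave.

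The hard part will be making term~(ii) rigorous under a \emph{non-oblivious} adversary. Both $\rlh_{i,t}$ and the target quantities $\E[\rtl_{i,t}]$ are defined over the realized randomness of past arm pulls $(\Nsone,\dots,\Nstm)$, so the telescoping of $\ln(W_{t+1}/W_t)$ inside a block must be carried out conditionally on the filtration generated by that history and then averaged over the joint law $\pi$ of the action sequence; the fact that lets this go through cleanly is that the bound $|\rtl_{i,t}|\le\Cr$ of Lemma~\ref{lem:bound_embedding} holds on \emph{every} trajectory (it is a deterministic consequence of the bounded-parameter assumption), so the conditional second-moment terms in the potential argument are controlled uniformly regardless of how the past sampling steered the embeddings. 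Once the conditional weak-regret inequality is in hand, the drift estimate in term~(i) and the parameter balancing are routine.
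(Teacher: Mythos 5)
Your proposal is correct and follows essentially the same route as the paper: the same partition of $[\T]$ into blocks of size $\DeltaT$, the same per-block decomposition into the dynamic-vs-static oracle gap (bounded by the in-block variation, summing to $O(\kk\,\DeltaT\,\Va_{\T})$) plus the Exp3.M weak regret over reward range $[0,\Cr]$ handled conditionally on the sampling history, and the same balancing choice of $\DeltaT$. The only deviations are immaterial constant-level details (e.g.\ the exact $\kk$-dependence of the per-block weak-regret bound), which are absorbed into $\Cba$.
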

The obtained regret is as the order of $ (\Dv\ln\Dv)^{1/3} (\T\sqrt{\ln\T})^{2/3} $. According to Theorem 1 in \cite{besbes2014stochastic}, the worst-case regret of any policy is lower bounded by $ \Oscale((\K\Va_{\T})^{1/3} \T^{2/3}) $, suggesting our algorithm is near-optimal (with a modest $ \ln\T $ factor from optimal). In that sense, we name our algorithm as \textit{Thanos} from ``Thanos Has A Near-Optimal Sampler.''  

The near-optimal regret from Theorem~\ref{the:main_theorem} can be obtained due to the following reasons: (i) Our proposed reward leads to a more meaningful notion of regret which is directly connected to approximation error. (ii) Its variation budget is tractable to be formulated by the dynamic of embeddings. (iii) We explicitly study training dynamic of GCN to bound embeddings' dynamic by recursively unfolding the neighbor aggregation and training steps in the practical setting.  

As mentioned in Section~\ref{sec:interpret_regret}, the regret based on rewards \eqref{eq:our_reward} is equivalent to approximation error. The result of Theorem~\ref{the:main_theorem} says the approximation error of Thanos asymptotically converges to that of the optimal oracle with the near-fastest convergence rate. In the case of enforcing zero bias like BanditSampler, sampling variance is the exact approximation error. However, even if we ignore other previously-mentioned limitations, its regret \eqref{eq:bs_bound} suggests the approximation error of their policy asymptotically converges to three (as opposed to one) times of the oracle's approximation error, so the regret is still linear. We compare the existing theoretical convergence guaratees in Table~\ref{tab:compare_regret}.

\begin{table}[h]
\vspace{-1.5mm}
\caption{Comparison of existing asymptotic convergence guarantees.}
\vspace{-1mm}
\label{tab:compare_regret}
\begin{center}
\begin{small}
\begin{tabular}{ccccccc}
\toprule
&\multirow{2}{*}{\scriptsize \shortstack{Dyanmic\\ policy}}
&\multirow{2}{*}{\scriptsize \shortstack{Convergence\\ analysis}}
&\multirow{2}{*}{\scriptsize \shortstack{Theory accounts for\\ practical training}}
&\multirow{2}{*}{\scriptsize \shortstack{Bound reward var-\\iation explicitly}} 
&\multirow{2}{*}{\scriptsize \shortstack{Sublinear gap to\\ the optimal oracle}}
&\multirow{2}{*}{\scriptsize \shortstack{Stable re-\\ward/policy}}   
\\
~\\
\midrule
{\scriptsize Uniform policy}
&\xmark
&\xmark
&\xmark
&\xmark
&\xmark
&\cmark
\\
{\scriptsize BanditSampler}
&\cmark
&\cmark
&\xmark
&\xmark
&\xmark
&\xmark
\\
{Thanos} 
&\cmark
&\cmark
&\cmark
&\cmark
&\cmark
&\cmark
\\
\bottomrule
\end{tabular}
\end{small}
\end{center}
\end{table}

\section{Experiments}\label{sec:experiments}
We describe the experiments to verify the effectiveness of Thanos and its improvement over BanditSampler in term of sampling approximation error and final practical performance.  
\subsection{Illustrating Policy Differences via Synthetic Stochastic Block Model Data}\label{sec:Exp_cSBM}
As mentioned in Section \ref{sec:rethink_reward}, our reward will bias to sample the neighbors having contributed to accurate approximation. Fig.~\ref{fig:visualize_ourR} is the visualization of this intuition: after setting $ \zbab_{v,t} = (1,1)^{\top}$, the reward inside the dashed circle is positive; otherwise negative. And the embeddings closer to $ \zbab_{v,t} $ will have larger rewards. 
In order to understand how this bias differentiates the policy of two samplers given different distribution of features and edges, we propose to use cSBM\cite{NEURIPS2018_08fc80de,chien2021adaptive} to generate synthetic graphs.
We consider a cSBM \cite{NEURIPS2018_08fc80de} with two classes, whose node set $\VV_1$ and $\VV_2$ have 500 nodes.  The node features are sampled from class-specific Gaussians $\Uni_1, \Uni_2$.  We set feature size to 100, average degree $2\bar{d}=20$, $\kk = 10$, and $\mu = 1$, and we note that $\mu$ controls the difference between two Gaussian's mean \cite{NEURIPS2018_08fc80de}.  The average number of inter-class and intra-class edges per node is $ \bar{d} - \lambda {\bar{d}}^{1/2} $ and $ \bar{d} + \lambda {\bar{d}}^{1/2} $ respectively.  Then, we scale down the node features of $\VV_1$ by 0.1 to differentiate the distribution of feature norm and test the sampler's sensitivity to it.
The configuration of training and samplers is same as Section~\ref{sec:benchmark_datasets} and listed in Appendix.

In the case of cSBMs, an ideal sampler should sample more intra-class neighbors than inter-class neighbors to get linear-separable embeddings and better classification.
Thus, we inspect for each $v$ the $\kk$ neighbors having the top-$ \kk $ highest sampling probability, and compute the ratio of intra-class neighbors among them, i.e. $ \pratio = {\sum_{i\in\Nv} \bm{1}\{(y_i=y_v) \cap (p_{i,t} \text{ is top-}k) \} }/{\kk} $.  We report the average of $\pratio$ for $\VV_1 \cap \VV_{\text{train}}$ versus $t$ in Fig.~\ref{fig:cSBM_ratio}.
For the scaled community $ \VV_{1} $, Thanos will be biased to sample more intra-class neighbors due to the intuition explained by Fig.~\ref{fig:visualize_ourR}, leading to more accurate approximation and improvement on test accuracy over BanditSampler as shown in Fig.~\ref{fig:cSBM_approximate_error} and \ref{fig:cSBM_test_acc}. This claim holds true under different edge distributions ($ \lambda \in \{ 0.5, 1, 1.5 \} $). We additionally report the results on unscaled $ \VV_{2} $ for comparison in Appendix. 

\begin{figure*}[t!]
	\centering
	\begin{subfigure}{0.32\linewidth}
		\centering
		\vspace{-0.2cm}
		\includegraphics[width=0.95\textwidth]{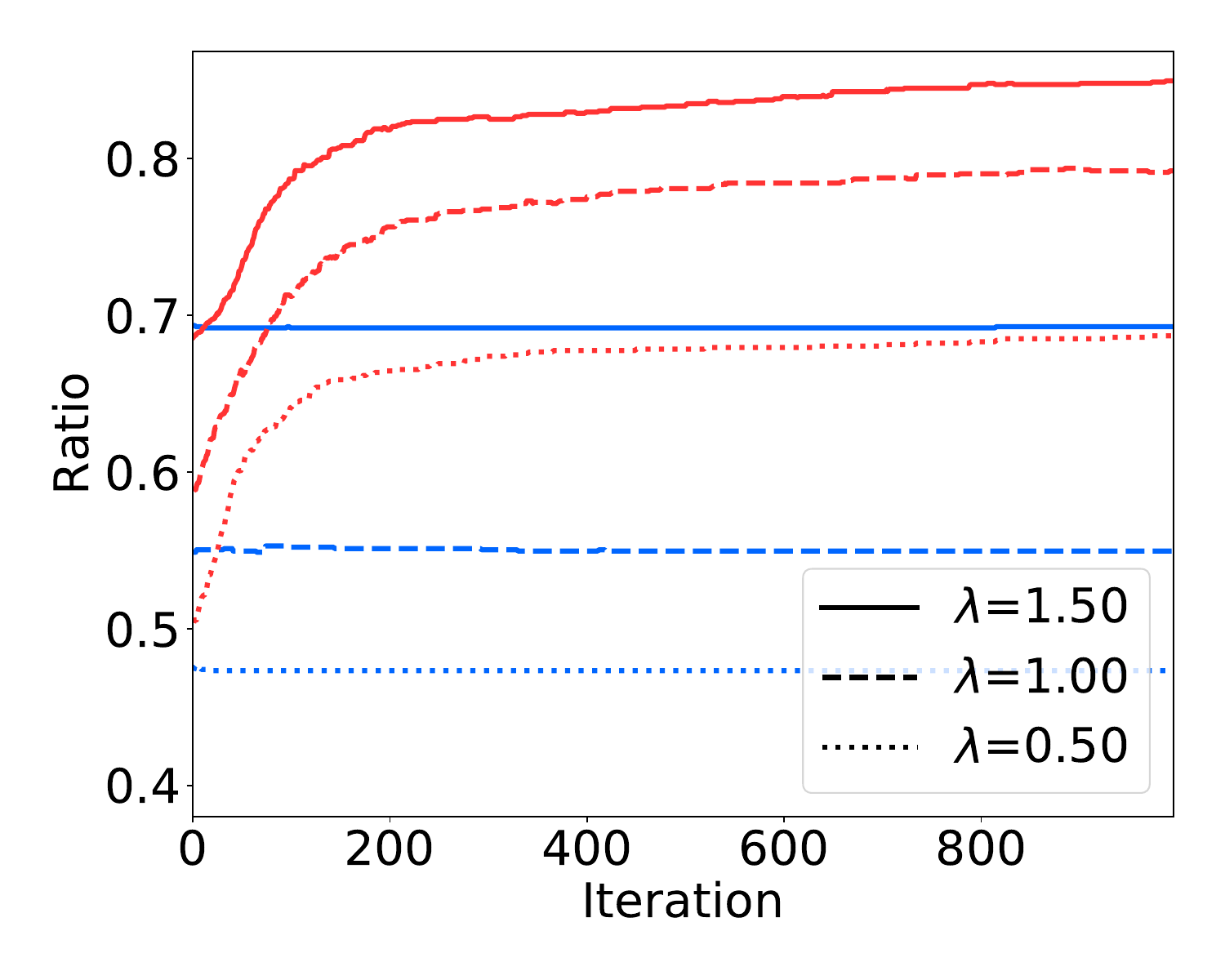}
		\vspace{-0.35cm}
		\caption{Average $ \pratio $ among $ \VV_{1} \cap \VV_{\text{train}} $}
		\label{fig:cSBM_ratio}
	\end{subfigure}
	\begin{subfigure}{0.32\linewidth}
		\centering
		\vspace{-0.2cm}
		\includegraphics[width=0.95\textwidth]{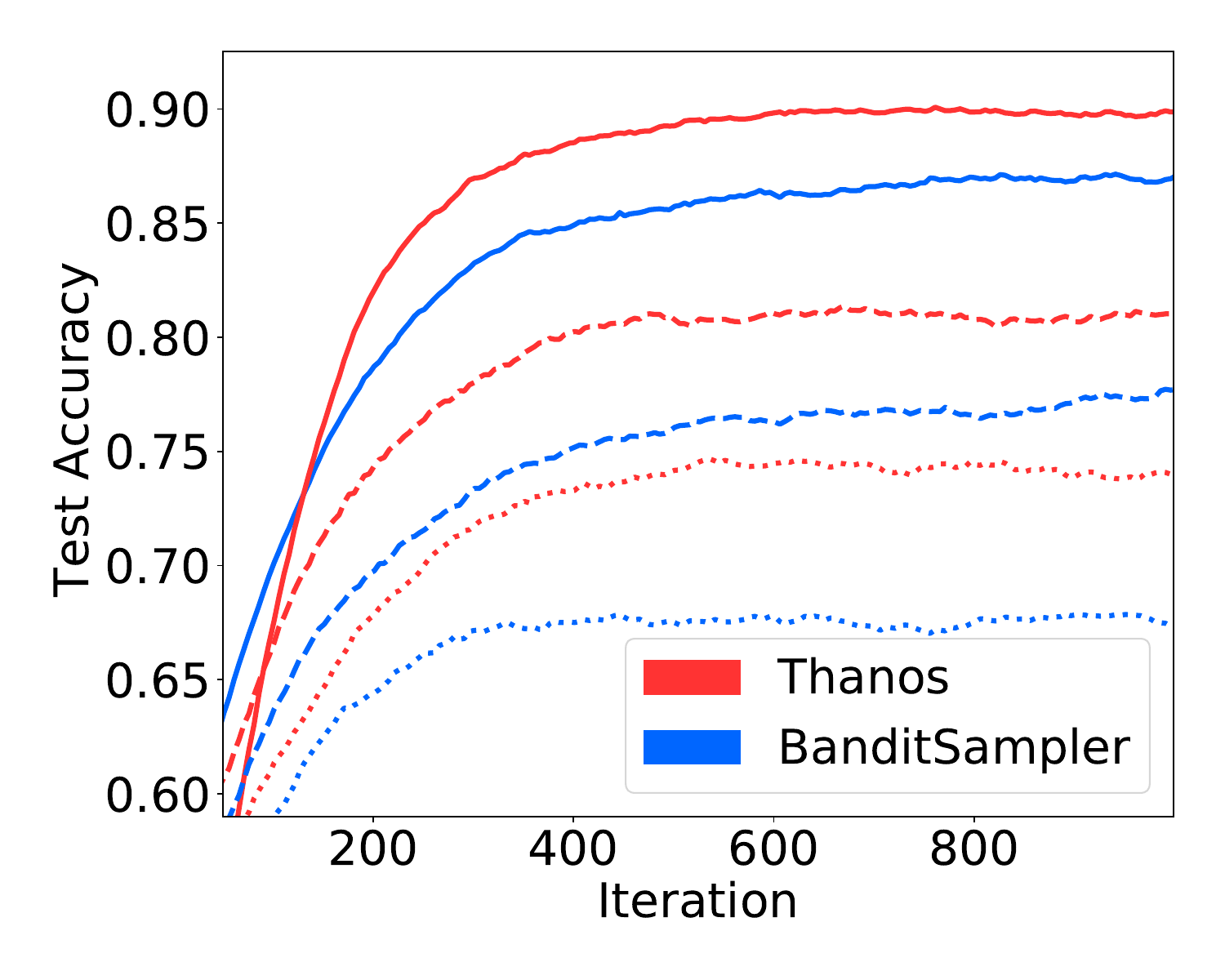}
		\vspace{-0.35cm}
		\caption{Test Accuracy}
		\label{fig:cSBM_test_acc}
	\end{subfigure}
	\begin{subfigure}{0.32\linewidth}
		\centering
		\vspace{-0.2cm}
		\includegraphics[width=0.95\textwidth]{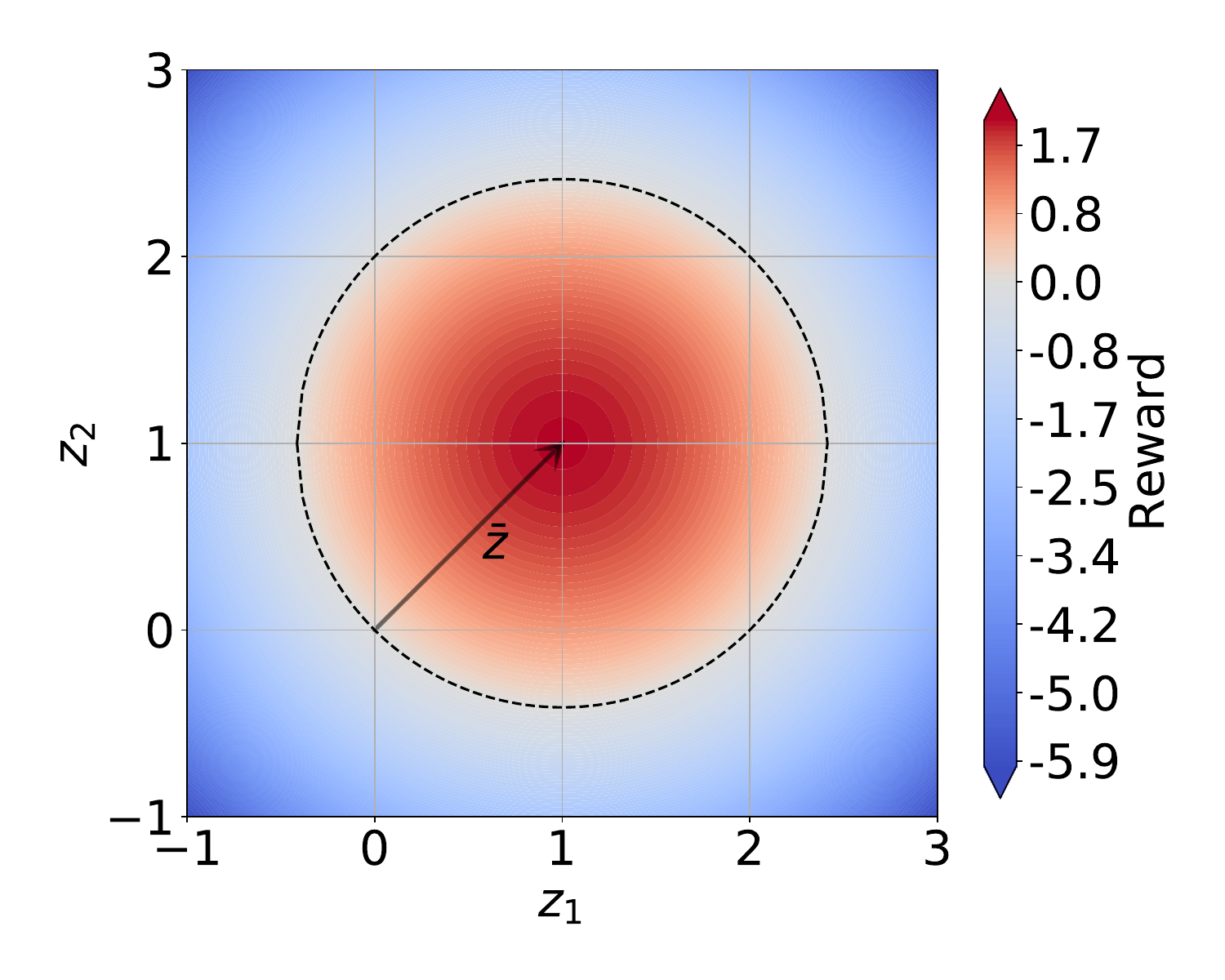}
		\vspace{-0.38cm}
		\caption{Visualization of our reward  \eqref{eq:our_reward}}
		\label{fig:visualize_ourR}
	\end{subfigure}
	\vspace{-0.1cm}
	\caption{\small Figs.~\ref{fig:cSBM_ratio} and \ref{fig:cSBM_test_acc} illustrate the policy difference and compare their practical performance via cSBM synthetic data. Fig.~\ref{fig:visualize_ourR} plot our reward function \eqref{eq:our_reward} after setting $ \zbabl_{v,t}=(1, 1)^{\top} $. }
	\label{fig:cSBM}
\end{figure*}

\subsection{Evaluating the Sampling Approximation Error}\label{sec:Exp_Approximate_Error}

\begin{figure*}[h!]
	\centering
	\begin{subfigure}{0.32\linewidth}
		\centering
		\vspace{-0.0cm}
		\includegraphics[width=0.95\textwidth]{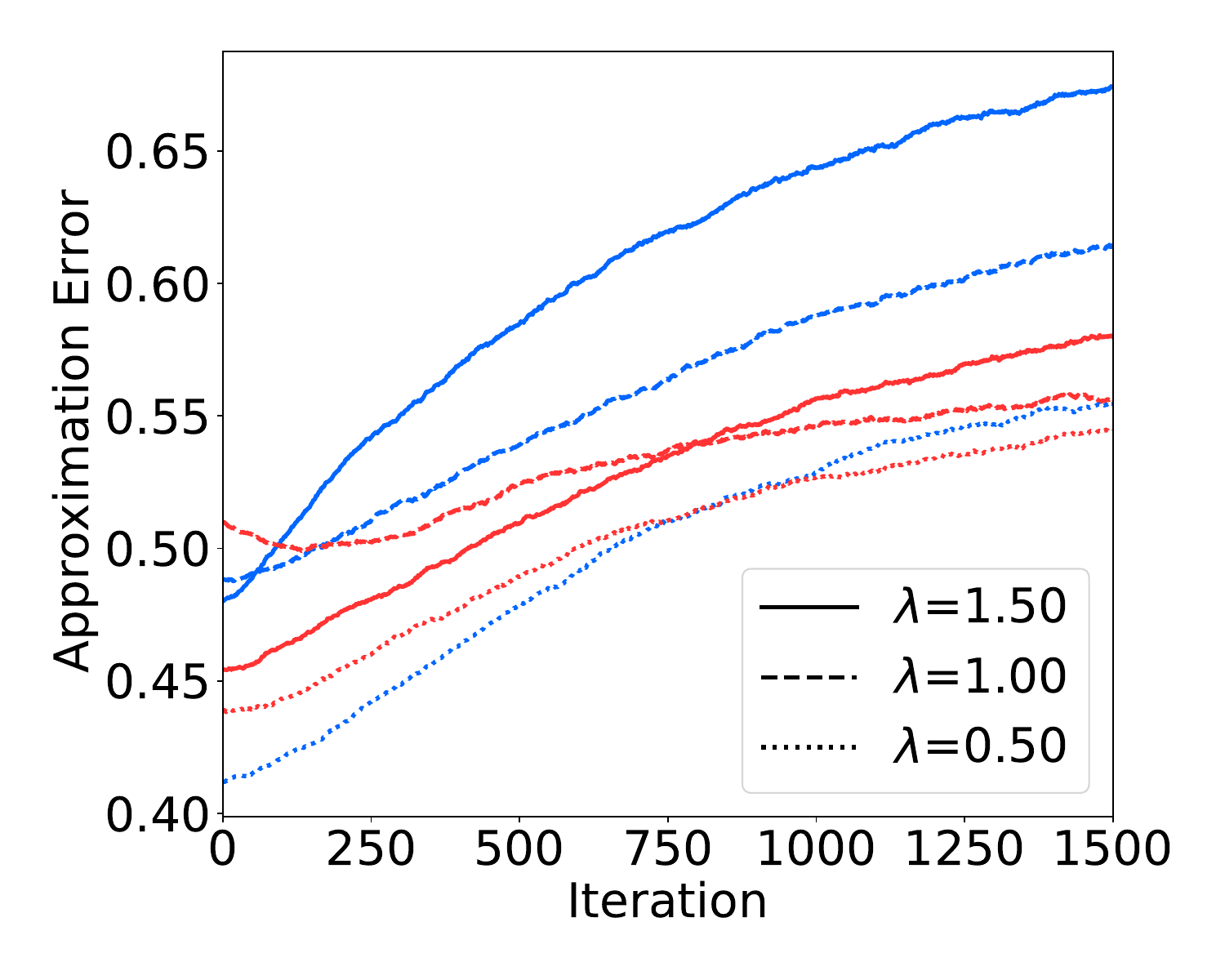}
		\vspace{-0.38cm}
		\caption{Comparison on cSBM data.}
		\label{fig:cSBM_approximate_error}
	\end{subfigure}
	\begin{subfigure}{0.32\linewidth}
		\centering
		\vspace{-0.1cm}
		\includegraphics[width=0.95\textwidth]{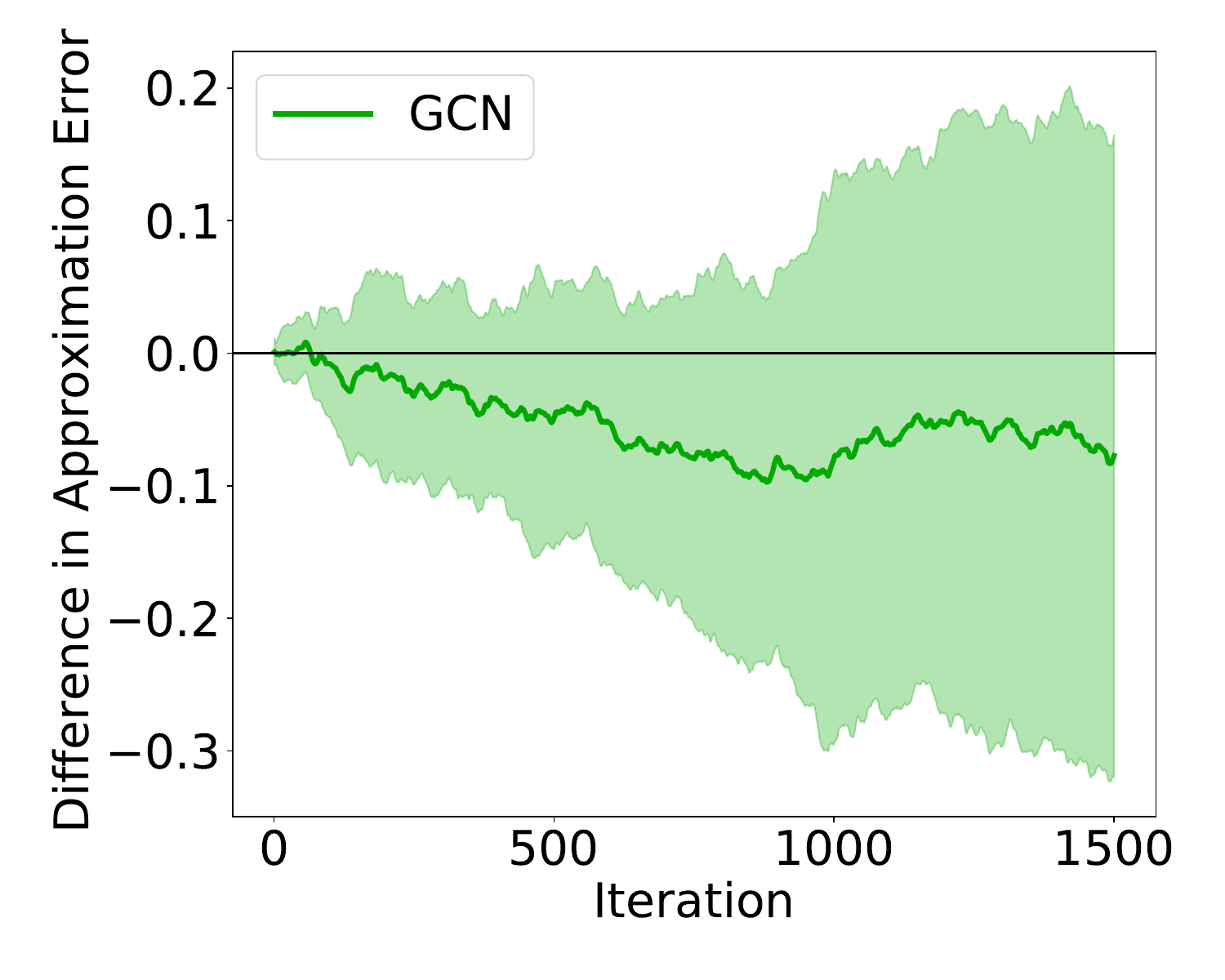}
		\vspace{-0.35cm}
		\caption{GCN on CoraFull.}
		\label{fig:est_error_gcn}
	\end{subfigure}
	\begin{subfigure}{0.32\linewidth}
		\centering
		\vspace{-0.1cm}
		\includegraphics[width=0.95\textwidth]{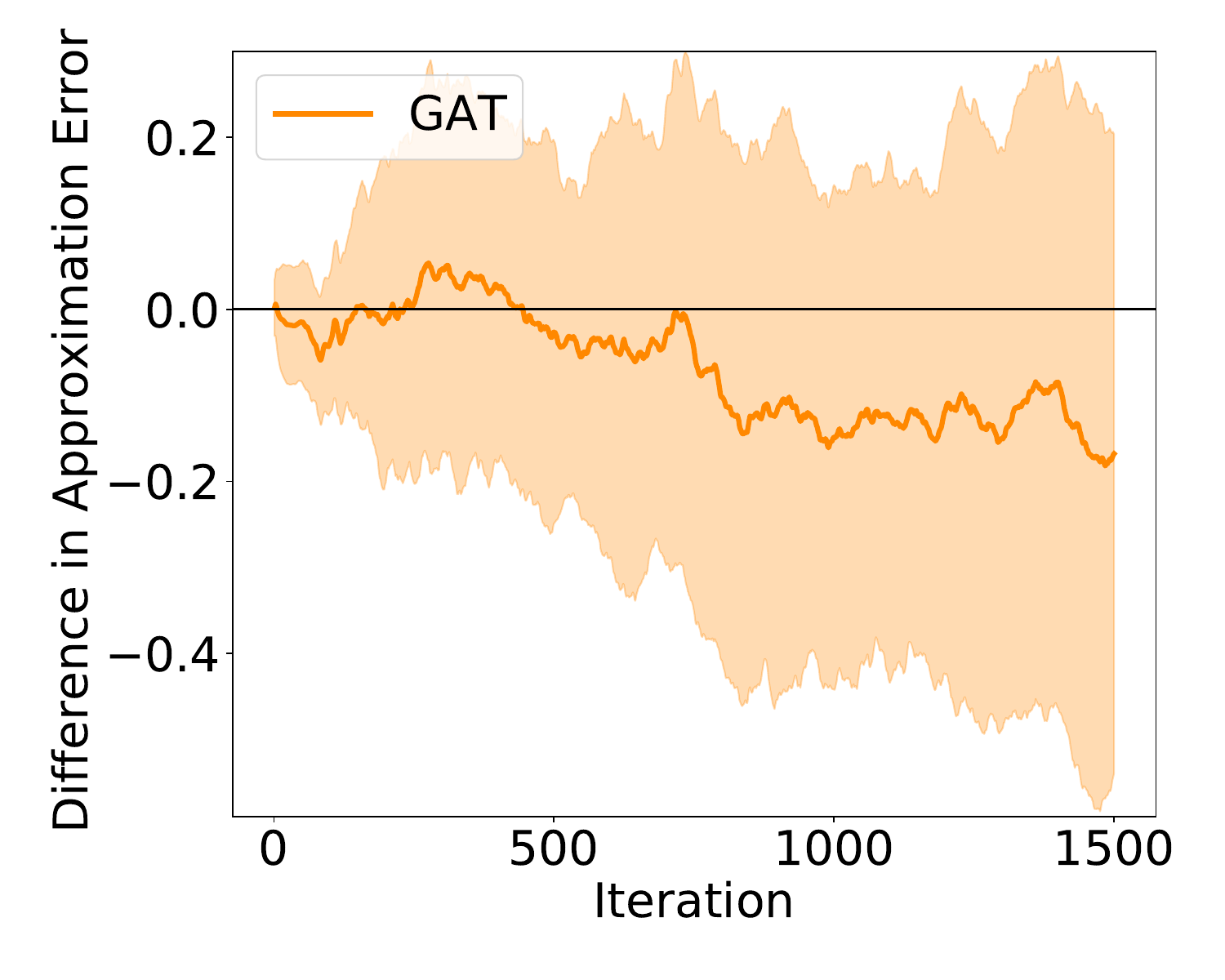}
		\vspace{-0.35cm}
		\caption{GAT on CoraFull.}
		\label{fig:est_error_gat}
	\end{subfigure}
	\vspace{-1mm}
	\caption{\small Compare the approximation error between two samplers in cSBM synthetic graphs (Fig.~\ref{fig:cSBM_approximate_error}) and training of GCN (Fig.~\ref{fig:est_error_gcn}) and GAT (Fig.~\ref{fig:est_error_gat}) on Cora. In Fig.~\ref{fig:est_error_gcn} and \ref{fig:est_error_gat}, negative values indicate that Thanos has a lower approximation error than BanditSampler.}
	\label{fig:compare_err}
\end{figure*} 

We numerically compare the approximation error between two samplers in the training of GCN and GAT on Cora dataset from \citet{kipf2017semi} as well as cSBM synthetic data in Section~\ref{sec:Exp_cSBM}. 
At each iteration, given a batch of nodes $ \VL $ at the top layer, we perform sampling with BanditSampler and Thanos respectively, getting two subgraphs $ \GG_{bs} $ and $ \GG_{our} $.
For Cora, we perform forward propagation on the original graph $ \GG $ as well as $ \GG_{bs} $ and $ \GG_{our} $ respectively with the same model parameters $ \{ \Wtl\}_{l\in[L]} $, and we get the accurate $ \mub_{v,t}^{(1)} $ of the first layer aggregation as well as its estimated values $ \muhb_{v,t}^{(bs)} $ and $ \muhb_{v,t}^{(our)} $ from both samplers.
We compute $ \distbs = \sum_{v\in\VL} \norm{\muhb_{v,t}^{(bs)} - \mub_{v,t}^{(1)} } $ and $ \distour = \sum_{v\in\VL} \norm{\muhb_{v,t}^{(our)} - \mub_{v,t}^{(1)} } $. 
We set $ \kk=2 $, $ \gamma=0.1 $, $ \eta=0.1 $ for Thanos, $ \eta=0.01 $ for BanditSampler (since its unstable rewards require smaller $\eta$), $ \DeltaT=200 $, $ \alpha_{t} = 0.001 $, $ L=2 $ and the dimension of hidden embeddings $d= 16 $. 
Fig.~\ref{fig:compare_err} plots the mean and the standard deviation of $ \Delta_\dist = \sum_{t=1}^{\T}\distour - \sum_{t=1}^{\T}\distbs $ with 10 trials.  The mean curves of both GCN and GAT are below zero, suggesting Thanos establishes lower approximation error in practice. For cSBM synthetic graphs, we follow the setting as Section~\ref{sec:Exp_cSBM}, compare two samplers under different edge distributions ($ \lambda \in \{0.5, 1, 1.5\} $) and directly plot $ \sum_{t} \dist_{bs} $ (blue) and $ \sum_{t} \dist_{our} $ (red). From Fig.~\ref{fig:cSBM_approximate_error}, we know Thanos achieves quite lower approximation error and higher test accuracy (Fig.~\ref{fig:cSBM_test_acc}) in the setting of less inter-edges (e.g.~$ \lambda=1 $ or $ 1.5 $) due to the intuition manifested by Fig.~\ref{fig:visualize_ourR}, whereas BanditSampler is biased to sample large-norm neighbors, resulting in high approximation error and degenerated performance. For small $ \lambda = 0.5 $, the almost-equal number of inter/intra edges will shift $ \mub_{v,t} $ to the unscaled community $ \VV_{2} $. Hence two samplers' approximation error are close.

\subsection{Sensitivity to Embedding Norms}\label{sec:exp_norm_bias}
\begin{figure*}[h!]
  \centering
  \vspace{1mm}
  \begin{subfigure}{0.32\linewidth}
    \centering
    \vspace{-0.2cm}
    \includegraphics[width=0.95\textwidth]{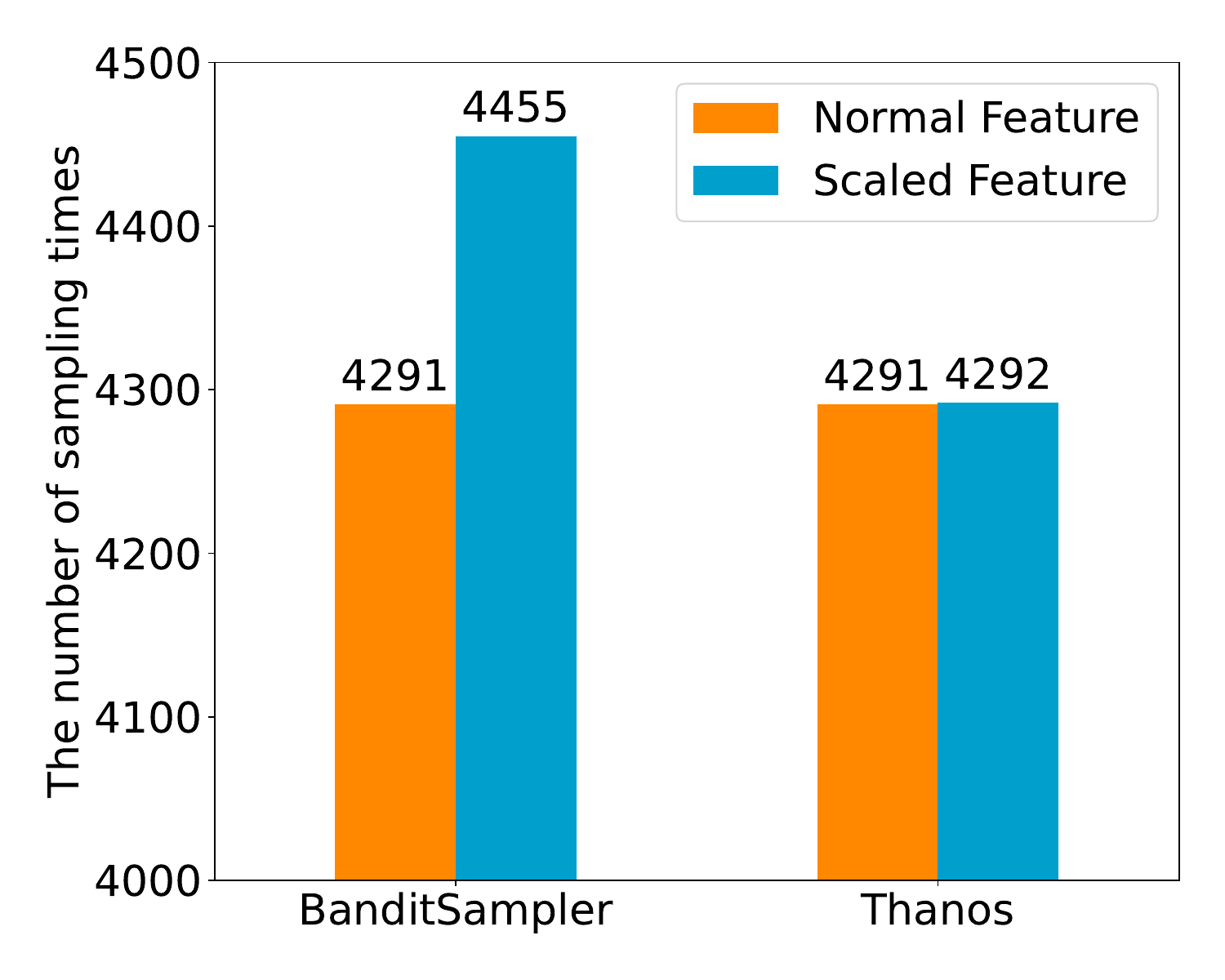}
    \vspace{-0.3cm}
    \caption{\scriptsize Times of sampling corrupted nodes.}
    \label{fig:ave_times}
  \end{subfigure}
  \begin{subfigure}{0.32\linewidth}
    \centering
    \vspace{-0.2cm}
    \includegraphics[width=0.95\textwidth]{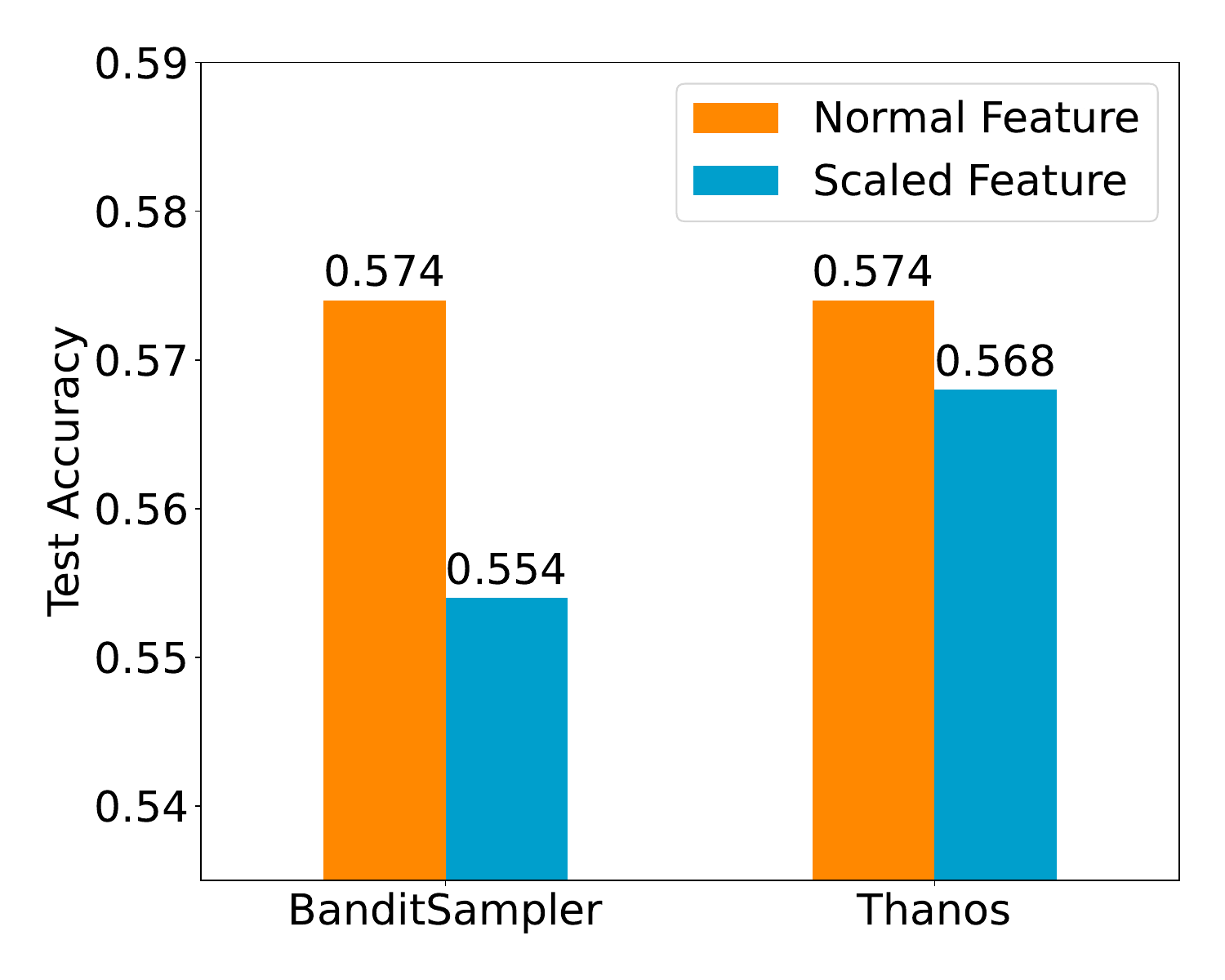}
    \vspace{-0.3cm}
    \caption{Test Accuracy}
    \label{fig:corrupted_test_acc}
  \end{subfigure}
  \begin{subfigure}{0.32\linewidth}
    \centering
    \vspace{-0.1cm}
    \includegraphics[width=0.95\textwidth]{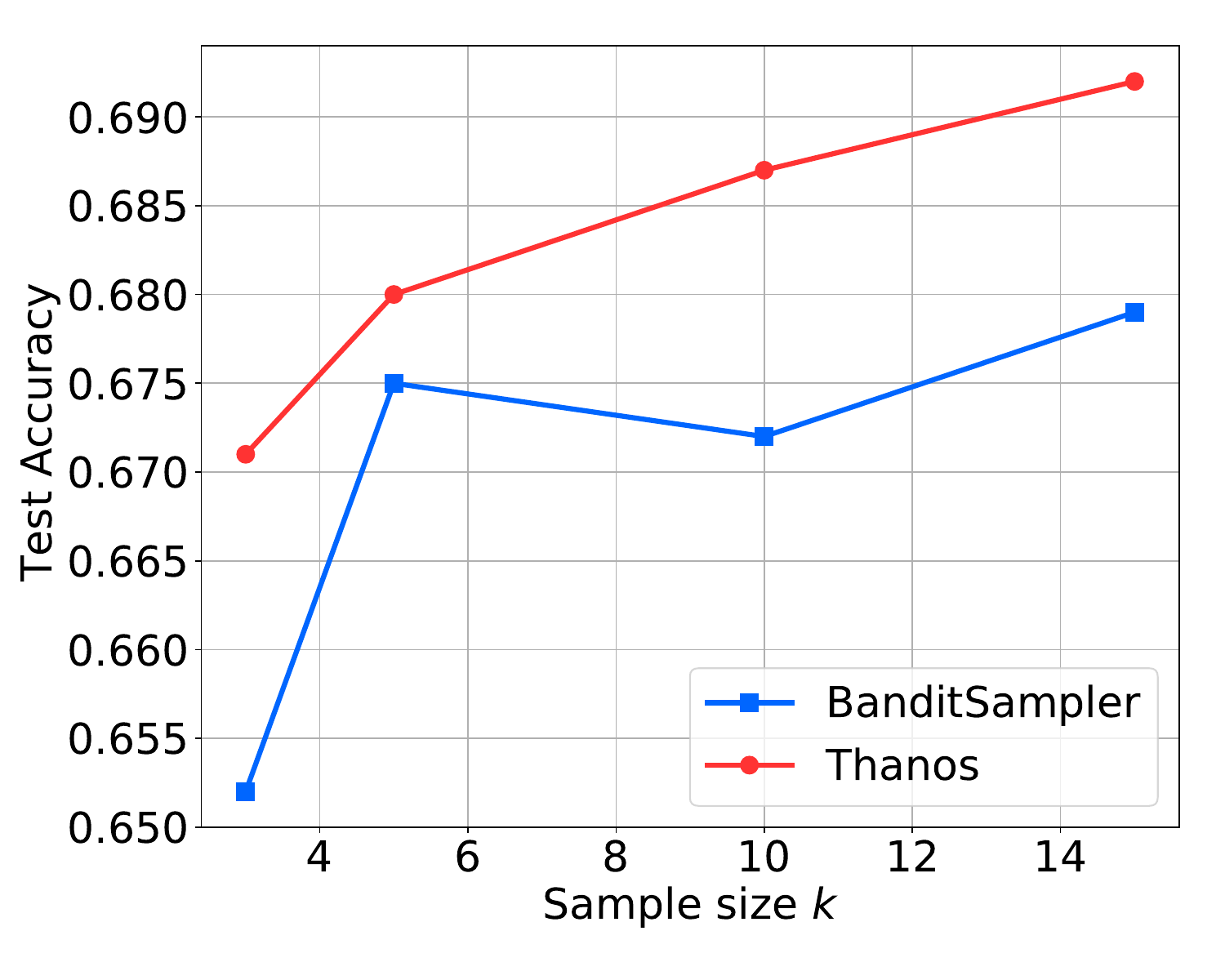}
    \vspace{-0.35cm}
    \caption{\small Test Acc.~vs.~sample size $ \kk $.}
    \label{fig:sample_size}
  \end{subfigure}
  \vspace{-0.1cm}
  \caption{\small Fig.~\ref{fig:ave_times} plots the average number of times the corrupted/rescaled nodes were sampled per epoch by both samplers on corrupted  CoraFull. And Fig.~\ref{fig:corrupted_test_acc} compares their corresponding test accuracy, suggesting the performance of BanditSampler will be degenerate by its sensitivity of embedding norm. Fig.~\ref{fig:sample_size} shows the ablation study on sample size $ \kk $ with ogbn-arxiv.} 
  \label{fig:norm_bias_ablation}
\end{figure*} 

Previously, we claim that BanditSampler will bias policy to the neighbors with large norm, potentially leading to severe deviation from $ \mub_{v,t} $ as well as a drop of performance. In this section, we present the evidence on CoraFull \cite{bojchevski2017deep} with corrupted features and demonstrate that our algorithm resolves this issue. For CoraFull, we randomly corrupt 800 (roughly 5\% of) training nodes by multiplying their features by 40.  We run both samplers 300 epochs with the corrupted CoraFull and count the total number of times that these corrupted nodes were sampled per epoch.  We set $ \kk=3, \eta=1, \gamma = 0.2 $ and the other hyperparameters the same as Section~\ref{sec:benchmark_datasets}. We repeat 5 trials for each algorithm and report the average over epochs and trials. We also record the test accuracy with the best validation accuracy in each trial and report its mean across trials. From Fig.~\ref{fig:ave_times}, we can tell BanditSampler biases to corrupted nodes, degenerating its performance more as shown in Fig.~\ref{fig:corrupted_test_acc}.

\subsection{Accuracy Comparisons across Real-World Benchmark Datasets}\label{sec:benchmark_datasets}
We conduct node classification experiments on several benchmark datasets with large graphs: ogbn-arxiv, ogbn-products \citep{hu2020ogb}, CoraFull, Chameleon \cite{chien2021adaptive} and Squirrel \cite{pei2020geom}. The models include GCN and GAT. For GCN, we compare the test accuracy among Thanos, BanditSampler, GraphSage\cite{hamilton2017inductive}, LADIES\cite{zou2019layer}, GraphSaint\cite{zeng2020graphsaint}, ClusterGCN\cite{chiang2019cluster} and vanilla GCN. For GAT, we compare test accuracy among Thanos, BanditSampler and vanilla GAT. The experimental setting is similar with \citet{liu2020bandit}. The dimension of hidden embedding $d$ is 16 for Chameleon and Squirrel, 256 for the others. The number of layer is fixed as 2.  We set $ \kk=3 $ for CoraFull; $ \kk=5 $ for ogbn-arxiv, Chameleon, Squirrel; $ \kk=10 $ for ogbn-products. We searched the learning rate among $ \{ 0.001, 0.002, 0.005, 0.01\} $ and found 0.001 optimal. And we set the penalty weight of $ l_2$ regularization $ 0.0005 $ and dropout rate $ 0.1 $. We do grid search for sampling hyperparameters: $ \eta, \gamma, \text{ and } \DeltaT $ and choose optimal ones for each. Their detailed settings and dataset split are listed in Appendix. 
Also we apply neighbor sampling for test nodes for all methods, which is consistent with prior LADIES and GraphSaint experiments, and is standard for scalability in practical setting. 
From Table~\ref{tab:benchmark_dataset}, we can tell our algorithm achieves superior performance over BanditSampler for training GAT, and competitive or superior performance for training GCN.

\begin{table}[h]
\vspace{-2.5mm}
\caption{\small Test accuracy. `$ \times $' means the program crashed after a few epochs due to the massive memory cost and segmentation fault in TensorFlow. Bold indicates first; red second.}
\vspace{-1mm}
\label{tab:benchmark_dataset}
\begin{center}
\begin{small}
\begin{tabular}{llccccc}
\toprule
~ & \multirow{2}{*}{Methods} & \multicolumn{5}{c}{Test Accuracy} \\
\cline{3-7} 
~ & ~ 
& {Chameleon} 
& {Squirrel} 
& {Ogbn-arxiv} 
& {CoraFull} 
& {Ogbn-products} 
\\
\cline{1-7}
\multirow{6}*{\rotatebox{90}{GCN}} & {\footnotesize Vanilla GCN} 
&{0.518}($ \pm $0.021)
&{0.327}($ \pm $0.023)
&{0.659}($ \pm $0.004)
&{0.565}($ \pm $0.004)
&$ \times $\\
\cline{2-7}
~ & {\footnotesize GraphSage} 
&0.559($ \pm $0.013)
&{0.385}($ \pm $0.007)
&{0.652}($ \pm $0.005)
&0.554($ \pm $0.004)
&0.753($ \pm $0.002)
\\
~ & {\footnotesize LADIES} 
&0.547($ \pm $0.008)
&0.338($ \pm $0.021)
&0.651($ \pm $0.003)
&0.564($ \pm $0.001)
&0.673($ \pm $0.004)
\\
~ & {\footnotesize GraphSaint} 
&0.525($ \pm $0.022)
&0.352($ \pm $0.007)
&0.565($ \pm $0.002)
&{\bf 0.583}($ \pm  $0.003)
&0.746($ \pm $0.005)
\\
~ & {\footnotesize ClusterGCN}
&0.577($ \pm $0.022)
&{\color{red}0.391}($ \pm $0.015)
&0.575($ \pm $0.004)
&0.390($ \pm $0.005)
&0.746($ \pm $0.014)
\\
~ & {\scriptsize BanditSampler} 
&{\color{red}0.578}($ \pm $0.016)
&0.383($ \pm $0.005)
&{\color{red}0.652}($ \pm $0.005)
&0.555($ \pm $0.009)
&{\color{red}0.754}($ \pm $0.007)
\\
~ & {\footnotesize Thanos} 
&{\bf 0.607}($ \pm $0.012)
&{\bf 0.401}($ \pm $0.013)
&{\bf 0.663}($ \pm $0.006)
&{\color{red}0.574}($ \pm $0.010)
&{\bf0.759}($ \pm $0.001) 
\\ 
\cline{1-7}
\multirow{3}{*}{\rotatebox{90}{GAT}}&{\footnotesize Vanilla GAT}
&{0.558}($ \pm $0.009)
&{0.339}($ \pm $0.011)
&{0.682}($ \pm $0.005)
&{0.519}($ \pm $0.012)
&$ \times $
\\
\cline{2-7}
~ & {\scriptsize BanditSampler} 
&0.602($ \pm $0.005)
&0.386($ \pm $0.006)
&0.675($ \pm $0.002)
&0.544($ \pm $0.002)
&0.756($ \pm $0.001)
\\
~ & {\footnotesize Thanos}
&{\bf0.620}($ \pm $0.014)
&{\bf0.412}($ \pm $0.003)
&{\bf0.680}($ \pm $0.001)
&{\bf0.559}($ \pm $0.011)
&{\bf0.759}($ \pm $0.002) 
\\
\bottomrule
\end{tabular}
\end{small}
\end{center}
\end{table}

\subsection{Sample Size Ablation}
To verify the sensitivity of Thanos w.r.t.~sample size $ \kk $, we compare the test accuracy between Thanos and BanditSampler  as sample size $ \kk $ increases on Ogbn-arxiv. The other hyperparameter setting is the same as Section~\ref{sec:benchmark_datasets}. We compare two samplers with $ \kk =3, \kk=5, \kk=10, \kk=15 $. The result from Fig.~\ref{fig:sample_size} suggests Thanos still exhibits a mainfest improvement over BanditSampler as $ \kk $ increases.  


\section{Related Work}
\citet{hamilton2017inductive} initially proposed to uniformly sample subset for each root node. 
Many other methods extend this strategy, either by reducing variance  \citep{chen2018stochastic}, by redefining neighborhoods \citep{ying2018graph} \citep{zhang2019heterogeneous} \citep{li2019spam}, or by reweighting the  policy with MAB \citep{liu2020bandit} and reinforcement learning \citep{oh2019advancing}.   
Layer-wise sampling further reduces the memory footprint by sampling a fixed number of nodes for each layer.   
Recent layer-wise sampling approaches include \cite{chen2018fastgcn} and \cite{zou2019layer} 
that use importance sampling according to graph topology, as well as 
\cite{huang2018adaptive} and \cite{cong2020minimal} that also consider node features.
Moreover, training GNNs with subgraph sampling involves taking random subgraphs from the original graph and apply them for each step.  \citet{chiang2019cluster} partitions the original graph into smaller subgraphs before training.  \citet{zeng2020graphsaint} and \citet{hu2020heterogeneous} samples subgraphs in an online fashion. 
However, most of them do not provide any convergence guarantee on the sampling variance. We are therefore less likely to be confident of their behavior as GNN models are applied to larger and larger graphs.

\section{Conclusion}
In this paper, we build upon bandit formulation for GNN sampling and propose a newly-designed reward function that introduce some degree of bias to reduce variance and avoid numerical instability. Then, we study the dynamic of embeddings introduced by SGD so that bounding the variation of our rewards. Based on that, we prove our algorithm incurs a near-optimal regret. 
Besides, our algorithm named Thanos addresses another trade-off between remembering  and forgetting caused by the non-stationary rewards by employing Rexp3 algorithm. 
The empirical results demonstrate the improvement of Thanos over BanditSampler in term of approximation error and test accuracy.

\section*{Acknowledgements}
We would like to thank Amazon Web Service for supporting the computational resources, Hanjun Dai for the extremely helpful discussion, and the anonymous reviewers for providing constructive feedback on our manuscript.

\bibliography{ref}
\bibliographystyle{abbrvnat}

\newpage
\appendix

\section{Appendix}

\section{A Detailed Version of Our Algorithm}\label{app:detailed_alg}
\vspace{-3mm}
\begin{algorithm}[h!]
  \caption{Thanos}
  \label{alg:app_thanosM}
  \begin{algorithmic}[1]
    \STATE {\bfseries Input:} $ \eta>0, \gamma \in (0,1), \kk, \T, \DeltaT, \mathcal{G}, \X, \{\alpha_t\}_{t=1}^{\T}. $
    \STATE {\bfseries Initialize}: For any $ v \in \VV $, any $ i \in \Nv $, set $ \wv_{i,1} = 1 $. 
    \FOR{$ t = 1, 2, \dots, \T$} 
    \IF{$ t \mod \DeltaT = 0 $} 
    \STATE {\bfseries Reinitialize the policy:} (Rexp3) for any $ v \in \VV$, for any  $ i \in \Nv $, set $ \wv_{i,t} = 1 $.
    \ENDIF
    \FOR{$ v\in\VV, i\in\Nv $}
    \STATE Set the policy $ p_t $ with Exp3.M algorithm (See Algorithm~\ref{alg:Exp3.M}). 
    \ENDFOR
    \STATE Read a batch of labeled nodes $ \VL $ at the top layer $ L $. 
    \FOR{$ l = L, L-1, \dots, 1 $}
    \STATE For any $ v \in \Vl $, sample $ \kk $ neighbors with DepRound algorithm: $ \Nst = \text{DepRound}(\kk, p_t) $.  
    \STATE $ \VV_{l-1} := \VV_{l-1} + \Nst $.
    \ENDFOR 
    \STATE Forward GNN model and estimate $ \mubl_{v,t} $ with the estimator $ \muhbl_{v,t} = \frac{\Dv}{\kk} \sum_{i\in\Ns} \zbl_{i,t} $. 
    \FOR{$ v \in \VV_{1} $} 
    \STATE For any $ \It \in \Nst $, calculate the reward $ \rl_{\It,t}$. 
    \STATE Update the policy with Exp3 algorithm: for any $ i\in\Nv $
    \begin{equation*}
    \rlh_{i,t} = \frac{\rl_{i,t}}{p_{i,t}} \textbf{1}_{i\in\Ns},\quad \wv_{i,t+1} = \wv_{i,t} \exp(\eta\rlh_{i,t}). 
    \end{equation*}
    \ENDFOR
    \STATE Optimize the parameters with SGD: $ \Wtpl = \Wtl - \alpha_t \nabla_{\Wtl}\LL, 0\leq l \leq L-1 $. 
    \ENDFOR
  \end{algorithmic}
\end{algorithm}
{Same as BanditSampler, we also use the embeddings of 1-st layer to calculate rewards and update the policy, i.e. the policy of 1-st layer also serves other layers. }

\vspace{-3mm}
\section{Related Algorithms}\label{app:related_alg}
\vspace{-2mm}
\begin{algorithm}[h!]
  \caption{Exp3}
  \label{alg:Exp3}
  \begin{algorithmic}[1]
    \STATE {\bfseries Input:} $ \eta>0, \gamma \in (0,1], \kk, \T. $
    \STATE {\bfseries} For any $ i\in [\K] $, set $ \wo_{i,1} = 1$.
    \FOR{$ t = 1, 2, \dots, \T$} 
    \STATE Compute the policy $ p_t $: $ \forall i \in [\K], p_{i,t} = (1-\gamma)\frac{\wo_{i,t}}{\sum_{j\in[\K]} \wo_{j,t} } + \frac{\gamma}{\K} $. 
    \STATE Draw an arm $ \It $ according to the distribution $ p_t $ and receive a reward $ \rl_{\It, t} $.
    \STATE For $ i \in [\K] $, compute $ \rlh_{i,t} = \frac{\rl_{i,t}}{p_{i,t}} \textbf{1}_{i\in\Ns}, \wo_{i,t+1} = \wo_{i,t} \exp(\eta\rlh_{i,t})$. 
    \ENDFOR
  \end{algorithmic}
\end{algorithm}
\begin{algorithm}[h!]
  \caption{DepRound}
  \label{alg:depround}
  \begin{algorithmic}[1]
    \STATE {\bfseries Input:} Sample size $ \kk (\kk\leq \K) $, sample distribution $ (p_{1}, p_{2}, \dots, p_{\K}) $ with $ \sum_{i=1}^{\K} = \kk $. 
    \WHILE{there is an $ i $ with $ 0 < p_{i} < 1  $}
    \STATE Choose distinct $ i \text{ and } j $ with $ 0<p_i<1 $ and $ 0<p_j<1 $.
    \STATE Set $ \beta = \min\{1-p_i , p_j \} $ and $ \zeta = \min\{ p_i, 1-p_j \} $. 
    \STATE Update $ p_i $ and $ p_{j} $ as:
    \begin{equation*}
    (p_i, p_j) = 
    \begin{cases}
    (p_i+\beta, p_j-\beta), & \text{with probability } \frac{\zeta}{\beta+\zeta};\\
    (p_i-\zeta, p_j+\zeta), & \text{with probability } \frac{\beta}{\beta+\zeta}. 
    \end{cases}
    \end{equation*}
    \ENDWHILE
    \STATE {\bfseries return:} $ \{ i: p_i = 1, 1\leq i\leq \K \} $. 
  \end{algorithmic}
\end{algorithm}

\begin{algorithm}[h!]
  \caption{Exp3.M}
  \label{alg:Exp3.M}
  \begin{algorithmic}[1]
    \STATE {\bfseries Input:} $ \eta>0, \gamma \in (0,1], \kk, \T. $
    \STATE {\bfseries} For any $ i\in [\K] $, set $ \wo_{i,1} = 1$. 
    \FOR{$ t = 1, 2, \dots, \T$} 
    \IF{ $ \argmax_{j\in[\K]} \wo_{j,t} \geq (\frac{1}{\kk} -\frac{\gamma}{\K}) $ }
    \STATE Decides $ \bat $ so as to satisfy 
    \begin{equation*}
    \frac{\bat}{\sum_{\wo_{i,t}\geq \bat} \bat + \sum_{\wo_{i,t} < \bat} \wo_{i,t} } = (\frac{1}{\kk} -\frac{\gamma}{\K} ) / (1-\gamma)
    \end{equation*}
    \STATE Set $ \Ut = \{ i: \wo_{i,t} \geq \bat \} $ and $ \wo_{i,t}' = \bat  $ for $ i \in \Ut $
    \ELSE 
    \STATE Set $ \Ut = \emptyset $. 
    \ENDIF
    \STATE Compute the policy $ p_t $: for $ i\in[\K] $ 
    \begin{equation*}
    p_{i,t} = \kk \Big( (1-\gamma) \frac{\wo_{i,t}'}{\sum_{j=1}^{\K} \wo_{j,t}'} + \frac{\gamma}{\K} \Big). 
    \end{equation*}
    \STATE Sample a subset $ \Ns $ with $ \kk $ elements: $ \Ns = \text{DepRound}(\kk, (p_{1,t}, p_{2,t}, \dots, p_{\K,t})) $. 
    \STATE Receive the rewards $ \rl_{\It, t}, \It\in\Ns $. 
    \STATE For $ i\in[\K] $, set
    \begin{equation*}
    \rlh_{i,t} = 
    \begin{cases}
    \rl_{i,t}/ p_{i,t}, & i\in\Ns;\\
    0, & \text{otherwise}, 
    \end{cases}
    \quad\quad
    \wo_{i,t+1} = 
    \begin{cases}
    \wo_{i,t}\exp(\eta \rlh_{i,t}), & i\notin \Ut \\
    \wo_{i,t}, & \text{otherwise}. 
    \end{cases}
    \end{equation*}
    \ENDFOR
  \end{algorithmic}
\end{algorithm}

\section{The Derivation of Reward \eqref{eq:our_reward}}
\vspace{-5mm}
For the biased estimator:
\begin{equation*}
\muhbl_{v,t} = \frac{\K}{\kk} \sum_{i\in\Ns} \avi \hbl_{i,t}. 
\end{equation*} 
We have its bias-variance decomposition as 
\begin{align*}
\E [\norm{\muhbl_{v,t} - \mubl_{v,t}}^{2}] 
=&  \norm{\mubl_{v,t} - \E[\muhbl_{v,t}]}^{2} + \V_{p_t}(\muhbl_{v,t}) \triangleq \Bias(\muhbl_{v,t}) + \V_{p_t}(\muhbl_{v,t}).
\end{align*} 
Then, by letting $ \kk = 1 $, we have:
\begin{align*}
&\Bias(\muhbl_{v,t}) = \K^2 \normbig{\sum_{i\in\Nv} p_{i,t} \zbl_{i,t} - \frac{1}{\K} \sum_{j\in\Nv} \zbl_{j,t} }^2 \\
& \V_{p_t}(\muhbl_{v,t}) = \K^2 \E\Big[ \normbig{\zbl_{i,t} - \sum_{j\in\Nv} p_{j,t}\zbl_{j,t} }^{2} \Big]\\
& \nabla_{p_{i,t}} \Bias(\muhbl_{v,t}) = \K^2 \Big( p_{i,t}\norm{\zbl_{i,t}}^2 + \zb^{(l)\top}_{i,t} \sum_{j\in\Nv} p_{j,t}\zbl_{j,t} - 2\zb^{(l)\top}_{i,t}\sum_{j\in\Nv}\frac{\zbl_{j,t}}{\K} \Big) \\
& \nabla_{p_{i,t}} \V_{p_t} = \K^2 \Big(  (1-p_{i,t}) \norm{\zbl_{i,t}}^2 - \zb^{(l)\top}_{i,t}\sum_{j\in\Nv}p_{j,t}\zbl_{j,t} \Big)
\end{align*}
In an effort to find an improved balance for bias-variance trade-off, we optimize the bias and variance simultaneously, defining the reward as the negative gradient w.r.t.~both terms:
\begin{equation}
\begin{split}
\rl_{i,t} 
=& - \frac{1}{\K^2} \nabla_{p_{i,t}} \Bias(\mubl_{v,t}) - \frac{1}{\K^2} \nabla_{p_{i,t}} \V_{p_t}(\mubl_{v,t}) \\
= & 2 \zb^{(l)\top}_{i,t} \zbabl_{v,t} - \normbig{\zbl_{i,t}}^{2}
\end{split}
\end{equation}
where $ \zbabl_{v,t} = \frac{1}{\K} \mubl_{v,t} = \frac{1}{\K} \sum_{j\in\Nv} \zbl_{j,t} $. 
Note that our reward assigns the same weight to the gradients of bias and variance terms. Actually, we can in principle scale the two gradients differently to explore a different balance between bias and variance. 
For example, deriving the reward with $ k>1 $ is equivalent to weighting variance lower than bias.
We leave it as a future work.

\section{The Proof of Variation Budget}
\textbf{Lemma}~\ref{lem:bound_embedding} (Dynamic of Embedding)
\textit{
  Based on our assumptions on GCN, for any $ i \in\VV $ at the layer $ l $, we have:
  \begin{equation}\label{eq:ap_weight_embed}
  \normbig{\zbl_{i,t}} \leq \Cz,\quad \Big|\rtl_{i, t} \Big| \leq \Cr, \quad \Big|\rl_{i,t} \Big| \leq \Cr ,
  \end{equation}
  where $ \Cz = \G^{l-1}\Au\Csig\Cthe\Cx $ and $ \Cr = 3\Cz^{2} $. Then,  
  consider the training dynamic of GCN optimized by SGD. For any node $ i \in \VV $ at the layer $ l $, we have 
  \begin{equation*}
  \Deztl = \max_{i\in\VV} \normbig{ \zbl_{i,t+1} - \zbl_{i,t} } \leq \alpha_t \G^{l-1}\Au\Csig\Cx\Cg. 
  \end{equation*}
}
\vspace{-5mm}
\begin{proof}
  We have to clarify that a typo was found in \eqref{eq:bound_weight_embed} after submitting the full paper. \eqref{eq:ap_weight_embed} is the correct version of \eqref{eq:bound_weight_embed} and the other proofs actually depend on \eqref{eq:ap_weight_embed} and still hold true after switching \eqref{eq:bound_weight_embed} to \eqref{eq:ap_weight_embed}. We will correct this typo in the final version.  
  Let $ v = \argmax_{i\in\VV}\left(\norm{\hbl_{i,t}} \right) $, for any $ 1\leq t\leq \T $
  \begin{align}
  \norm{\hbl_{v,t}} 
  =& \normbig{\sigma\Big( \sum_{i\in\Nv} a_{vi} \hblm_{i,t} \Wtlm  \Big)  - \sigma\Big( {\bf 0} \Big)  } \leq \Csig \norm{\sum_{i\in\Nv} a_{vi} \hblm_{i,t} \Wtlm } \label{eq:proof_lem1_Lip} \\
  \leq& \Csig \norm{\sum_{i\in\Nv} a_{vi} \hblm_{i,t} } \norm{\Wtlm} \leq \Csig\Cthe \sum_{i \in \Nv} |a_{vi}| \norm{\hblm_{i,t}} \label{eq:proof_lem1_boundedpara} \\
  \leq& \Csig\Cthe \Au\Du \max_{i\in\VV} \norm{\hblm_{i,t}} \label{eq:proof_lem1_beforerecursive} \\
  \leq& ( \Csig \Cthe \Au \Du )^{l-1} \max_{i\in\VV} \norm{\hb_{i,t}^{(1)}} \label{eq:proof_lem1_recursive}\\
  \leq& ( \Csig \Cthe \Au \Du )^{l-1} \Csig \Cthe \max_{j\in\VV} \norm{ \sum_{i\in \mathcal{N}_{j}} a_{j i} \xb_{i} } \\
  \leq& ( \Csig \Cthe \Au \Du )^{l-1} \Csig \Cthe \Cx \\
  \leq& \G^{l-1} \Csig \Cthe \Cx
  \end{align}
  \eqref{eq:proof_lem1_Lip} uses the assumption that the activation function $ \sigma(\cdot) $ is $\Csig\text{-Lipschitz}$ continuous function. 
  \eqref{eq:proof_lem1_boundedpara} uses the assumption of bounded parameters and the triangle inequality. \eqref{eq:proof_lem1_recursive} recursively expands \eqref{eq:proof_lem1_beforerecursive} from layer-$l$ to 1-st layer. 
  Therefore, we can obtain, for any $ v\in\VV $, any $ 1\leq t\leq \T $
  \begin{equation}\label{eq:proof_max_hz}
  \norm{\hbl_{v,t}} \leq \G^{l-1} \Csig \Cthe \Cx, \quad\text{ and }\quad \norm{\zbl_{i,t}} \leq \G^{l-1} \Au \Csig \Cthe \Cx = \Cz.
  \end{equation}
  Then, define $ \Dehtl = \max_{i\in\VV} \norm{\hbl_{i,t+1} - \hbl_{i,t}} $,   for any $ v\in\mathcal{V} $, 
  \begin{align}
  \norm{\hblp_{v,t+1} - \hblp_{v,t}} 
  &= \normbig{ \sigma\Big( \sum_{i\in\Nv} a_{vi} \hbl_{i,t+1} \Wtpl \Big) - \sigma\Big( \sum_{i\in\Nv} a_{vi}\hbl_{i,t} \Wtl \Big) }\\
  &\leq \Csig \normbig{ \sum_{i\in\Nv} a_{vi} \hbl_{i,t+1} \Big( \Wtl - \alpha_{t} \nabla_{\Wtl}\LL \Big) - \sum_{i\in\Nv} a_{vi} \hbl_{i,t} \Wtl } \label{eq:proof_lem1_sgd} \\
  &\leq \Csig \norm{ \sum_{i\in\Nv} a_{vi} ( \hbl_{i,t+1} - \hbl_{i,t} ) \Wtl } + \Csig \alpha_{t} \norm{ \sum_{i\in\Nv} a_{vi} \hbl_{i,t+1} \nabla_{\Wtl}\LL }\\
  &\leq \Csig \norm{\sum_{i\in\Nv} a_{vi} (\hbl_{i,t+1} - \hbl_{i,t} ) } \norm{\Wtl} + \alpha_t \Csig \norm{\sum_{i\in\Nv} a_{vi} \hbl_{i,t+1} } \norm{\nabla_{\Wtl}\LL}\\
  &\leq \Csig\Cthe \Au \sum_{i \in \Nv} \norm{ \hbl_{i,t+1} - \hbl_{i,t} } + \alpha_{t}  \Csig \sum_{i\in\Nv} |a_{vi}| \norm{\hbl_{i,t+1}} \norm{\nabla_{\Wtl}\LL}\\
  &\leq \Csig\Cthe \Au \Du \max_{i\in\VV} \norm{ \hbl_{i,t+1} - \hbl_{i,t} }  + \alpha_{t} \norm{\nabla_{\Wtl}\LL} \cdot \Csig \Au\Du \max_{i
    \in\VV} \norm{\hbl_{i,t+1}} \\
  &\leq \G \Dehtl + \alpha_{t}\norm{\nabla_{\Wtl}\LL} \cdot \Csig \Au\Du \Csig \Cthe ( \Csig \Cthe \Au \Du )^{l-1} \Cx \label{eq:proof_lem1_maxh} \\
  &= \G \Dehtl + \alpha_{t} \G^{l} \Csig \Cx \norm{\nabla_{\Wtl}\LL} 
  \end{align} 
  i.e. 
  \begin{align}
  \Dehtlp \leq& \G \Dehtl + \alpha_{t} \G^{l} \Csig \Cx \norm{\nabla_{\Wtl} \LL}\\
  \Dehtl \leq& \G \Dehtlm + \alpha_{t} \G^{l-1} \Csig\Cx \norm{\nabla_{\Wtlm} \LL}
  \end{align}
  \eqref{eq:proof_lem1_sgd} uses the update rule of SGD and the $\Csig\text{-Lipschitz}$ continuous assumption of $ \sigma(\cdot) $. \eqref{eq:proof_lem1_maxh} is based on \eqref{eq:proof_max_hz}.  
  Then, we recursively unfold the above inequality from layer-$ l $ to 1-st layer:
  \begin{align*}
  \Dehtl \leq& \G \Dehtlm + \alpha_{t} \G^{l-1} \Csig\Cx \norm{\nabla_{\Wtlm} \LL} \\
  \leq& \G \Big( \G \Dehtlmm + \alpha_{t} \G^{l-2} \Csig \Cx \norm{\nabla_{\Wtlmm}\LL } \Big) + \alpha_{t} \G^{l-1} \Csig \Cx \norm{\nabla_{\Wtlm}\LL} \\
  \leq& \G^2 \Dehtlmm + \alpha_{t} \G^{l-1} \Csig \Cx \Big( \norm{\nabla_{\Wtlm} \LL } + \norm{\nabla_{\Wtlmm}\LL} \Big) \\
  & \cdots\\
  \leq& \G^{l-1} \Dehtone + \alpha_{t} \G^{l-1} \Csig \Cx \sum_{j=1}^{l-1} \norm{ \nabla_{\W_{t}^{(j)}} \LL } \\
  \leq& \G^{l-1}  \Csig \norm{ \sum_{i\in\mathcal{N}_{v_{1}} } a_{v_1 i} \xb_{i} } \norm{\W_{t+1}^{(0)} - \W_{t}^{(0)} } + \alpha_{t} \G^{l-1} \Csig \Cx \sum_{j=1}^{l-1} \norm{ \nabla_{\W_{t}^{(j)}} \LL} \\
  \leq& \alpha_{t} \G^{l-1} \Csig\Cx \norm{\nabla_{ \W_{t}^{(0)} } \LL} + \alpha_{t} \G^{l-1} \Csig \Cx \sum_{j=1}^{l-1} \norm{ \nabla_{\W_{t}^{(j)}} \LL}\\
  =& \alpha_{t} \G^{l-1}\Csig\Cx \sum_{j=0}^{l-1} \norm{\nabla_{\W_{t}^{(j)}} \LL} \leq \alpha_{t} \G^{l-1} \Csig\Cx \Cg
  \end{align*}
  Therefore, 
  \begin{equation}\label{eq:Deltah_bound_proof}
  \Dehtl = \max_{i\in\VV} \norm{\hbl_{i,t+1} - \hbl_{i,t} } \leq \alpha_{t} \G^{l-1} \Csig \Cx \Cg
  \end{equation}
  Since $ \avi $ is fixed in GCN, we can further get 
  \begin{align}
  \Deztl 
  = \max_{i\in\VV} \normbig{ \zbl_{i,t+1} - \zbl_{i,t} } 
  \leq \Au \Dehtl 
  \leq \alpha_t \G^{l-1}\Au\Csig\Cx\Cg. 
  \end{align}
  Meanwhile, for any $ i\in\VV $, we have 
  \begin{align*}
  |\rtl_{i, t}| =& \Big| \relu\Big( 2\zb^{(l)\top}_{i,t} \sum_{j\in\Nst}\frac{1}{\kk} \zbl_{j,t} - \norm{\zbl_{i,t}}^2 \Big) \Big|\\
  \leq& \Big| 2\zb^{(l)\top}_{i,t} \sum_{j\in\Nst}\frac{1}{\kk} \zbl_{j,t} - \norm{\zbl_{i,t}}^2 \Big| 
  \leq 2\Big| \zb^{(l)\top}_{i,t} \sum_{j\in\Nst}\frac{1}{\kk} \zbl_{j,t} \Big| + \normbig{\zbl_{i,t}}^2  \label{eq:proof_lem1_Lip2}\\
  \leq& 2 \normbig{ \zbl_{i,t} } \normbig{ \sum_{j\in\Nst}\frac{1}{\kk} \zbl_{j,t} } + \normbig{\zbl_{i,t}}^{2} \\
  \leq& 2\normbig{\zbl_{i,t}} \cdot \frac{1}{\kk} \sum_{j\in\Nst} \normbig{\zbl_{j,t}} + \normbig{\zbl_{i,t}}^2 \leq 3\Cz^2
  \end{align*} 
  The first inequality uses the fact that $ \relu(\cdot) $ is 1-Lipschitz continuous and $ \relu(0) = 0 $. Similarly, for $ \rl_{i, t} $ in \eqref{eq:our_reward}, we have:
  \begin{align*}
  |\rl_{i, t}| 
  \leq& 2\Big| \zb^{(l)\top}_{i,t} \sum_{j\in\Nv}\frac{1}{\K} \zbl_{j,t} \Big| + \normbig{\zbl_{i,t}}^2 \\  
  \leq& 2 \normbig{ \zbl_{i,t} } \normbig{ \sum_{j\in\Nv}\frac{1}{\K} \zbl_{j,t} } + \normbig{\zbl_{i,t}}^{2} \\
  \leq& 2 \normbig{ \zbl_{i,t} } \cdot \frac{1}{\K}  \sum_{j\in\Nv}\normbig{ \zbl_{j,t} } + \normbig{\zbl_{i,t}}^{2} \leq 3\Cz^2. 
  \end{align*}
\end{proof}

\vspace{-8mm}
\textbf{Lemma}~\ref{lem:variation_budget} (Variation Budget)
\textit{
  Given the learning rate schedule of SGD as $ \alpha_t = 1/t $ and our assumptions on the GCN training, for any $ \T\geq 2 $, any $ v\in\VV $, the variation of the expected reward in \eqref{eq:our_reward} and  \eqref{eq:our_practical_reward} can be bounded as:
  \begin{equation*}
  \sum_{t=1}^{\T} \Big|\E[\rl_{i,t+1}] - \E[\rl_{i,t}]\Big| \leq \Va_{\T} = \Cv \ln\T, \quad
  \sum_{t=1}^{\T} \Big|\E[\rtl_{i,t+1}] - \E[\rtl_{i,t}]\Big| \leq \Va_{\T} = \Cv \ln\T
  \end{equation*}
  where $ \Cv = 12 \G^{2(l-1)} \Csig^2 \Cx^2 \Au^2 \Cthe\Cg $. 
}
\vspace{-6mm}
\begin{proof}
  Based on Lemma.~\ref{lem:bound_embedding}, we then discuss the variation budget of our reward $ \rtl_{i,t} = \relu\Big( 2 {\zbl_{i,t}}^{\top}  \zbabl_{v,t}  - \norm{\zbl_{i,t} }^{2} \Big)$, where $ \zbabl_{v,t} = \frac{1}{k}\sum_{j\in\Nst} \zbl_{j,t} $. For any $ v\in\VV $, any $ i\in \Nv $, 
  \begin{align}
  \Big| \rtl_{i, t+1} - \rtl_{i, t} \Big| =& \Big| \relu(2\zb^{(l)\top}_{i,t+1}\zbabl_{v,t+1} -\norm{\zbl_{i,t+1}}^{2}) - \relu( 2\zb^{(l)\top}_{i,t}\zbabl_{v,t} - \norm{\zbl_{i,t}}^2) \Big|\\
  \leq& \Big| (2\zb^{(l)\top}_{i,t+1}\zbabl_{v,t+1} -\norm{\zbl_{i,t+1}}^{2}) - (2\zb^{(l)\top}_{i,t}\zbabl_{v,t} - \norm{\zbl_{i,t}}^2 ) \Big| \label{eq:proof_lem2_reluLip} \\
  \leq& 2 \Big| \zb^{(l)\top}_{i,t+1} \zbabl_{v,t+1} - \zb^{(l)\top}_{i,t} \zbabl_{v,t} \Big| + \Big| \norm{\zbl_{i,t+1}}^2 - \norm{\zbl_{i,t}}^2 \Big| \\
  \leq& 2 \Big| \zb^{(l)\top}_{i,t+1} \zbabl_{v,t+1} - \zb^{(l)\top}_{i,t}\zbabl_{v,t+1} \Big| + 2\Big| \zb^{(l)\top}_{i,t}\zbabl_{v,t+1} - \zb^{(l)\top}_{i,t}\zbabl_{v,t} \Big| \\
  &\quad\quad + \Big| (\zbl_{i,t+1} - \zbl_{i,t})^{\top} \zbl_{i,t+1} \Big| + \Big| \zb_{i,t}^{(l)\top} ( \zbl_{i,t+1} - \zbl_{i,t} ) \Big|\\
  \leq& 2 \normbig{ \zbl_{i,t+1} - \zbl_{i,t} } \normbig{\zbabl_{v,t+1}} + 2 \normbig{\zbl_{i,t}} \normbig{\zbabl_{v,t+1} - \zbabl_{v,t}} \\
  &\quad\quad + \normbig{\zbl_{i,t+1} - \zbl_{i,t}} \normbig{\zbl_{i,t+1}} + \normbig{\zbl_{i,t}} \normbig{\zbl_{i,t+1} - \zbl_{i,t}} \label{eq:proof_lem2_cs} \\
  \leq& 2 \Deztl \cdot \frac{1}{\kk} \sum_{j\in\Nst} \normbig{ \zbl_{j,t+1} } + 2 \Cz \cdot \frac{1}{\kk} \sum_{j\in\Nst} \normbig{ \zbl_{j,t+1} - \zbl_{j,t} } \\
  &\quad + \Deztl \cdot \Cz + \Cz \cdot \Deztl \label{eq:proof_lem2_triangle} \\
  \le& 6\Cz \Deztl \\
  =& 6 \G^{2(l-1)}\Csig^2\Au^2\Cx^2\Cg\Cthe \cdot \alpha_{t} \label{eq:proof_lem2_rtlbound}
  \end{align}
  \eqref{eq:proof_lem2_reluLip} uses the fact that $ \relu $ is 1-Lipschitz continuous function. \eqref{eq:proof_lem2_cs} uses the Cauchy-Schwarz inequality. \eqref{eq:proof_lem2_triangle} uses the triangle inequality. 
  Then, given the expected reward $ \E[\rtl_{i, t}] = \sum_{\Nsone, \dots, \Nstm} p(\Nsone, \dots, \Nstm) \rtl_{i,t} $, we can obtain that for any $ v\in\VV $, any $ i\in\Nv $ and any $ 1\leq t\leq \T $,
  
  \begin{align}
  \Big| \E[\rtl_{i, t+1}] - \E[\rtl_{i, t}]  \Big| 
  =& \Big| \sum_{\Nsone, \Nstwo, \dots, \Nst} p(\Nsone, \dots, \Nst) \rtl_{i, t+1} - \sum_{\Nsone, \Nstwo, \dots,\Nstm} p(\Nsone, \dots, \Nstm) \rtl_{i, t}  \Big|\\
  =& \left| \sum_{\Nsone, \dots, \Nstm} p(\Nsone, \dots, \Nstm) \sum_{\Nst} p(\Nst | \Nsone, \dots, \Nstm) \Big( \rtl_{i, t+1} - \rtl_{i, t} \Big) \right|\label{eq:proof_lem2_indep} \\
  =& \sum_{\Nsone, \dots, \Nstm} p(\Nsone, \dots, \Nstm) \sum_{\Nst} p(\Nst | \Nsone, \dots, \Nstm) \Big| \rtl_{i, t+1} - \rtl_{i,t} \Big|\\
  \leq& \sum_{\Nsone, \dots, \Nstm} p(\Nsone, \dots, \Nstm) \sum_{\Nst} p(\Nst | \Nsone, \dots, \Nstm) 6\Cz \Deztl \\
  =& 6\Cz\Deztl = 6 \G^{2(l-1)}\Csig^2\Au^2\Cx^2\Cg\Cthe \cdot \alpha_{t} 
  \end{align}
  \eqref{eq:proof_lem2_indep} uses the fact that $ \rtl_{i, t} $ is only a function of previous actions $ (\Nsone, \Nstwo, \dots, \Nstm) $ and does not depends on $ \{ \mathcal{S}_{\tau} | \tau \geq t \} $. 
  Therefore, we have the variation budget of expected rewards as
  \begin{align*}
  \sum_{t=1}^{\T} \sup\Big| \E[\rtl_{i, t+1}] - \E[\rtl_{i, t}] \Big| 
  \leq& \sum_{t=1}^{\T} 6 \G^{2(l-1)}\Csig^2\Au^2\Cx^2\Cg\Cthe \cdot \alpha_{t} \\
  =& 6 \G^{2(l-1)}\Csig^2\Au^2\Cx^2\Cg\Cthe \sum_{t=1}^{\T} \frac{1}{t} \\
  =& 6 \G^{2(l-1)}\Csig^2\Au^2\Cx^2\Cg\Cthe (\ln\T+\epsilon) 
  \end{align*}
  where $ \epsilon $ is the Euler-Mascheroni constant. Hence, for $ T\geq 2 $, we have, 
  
  \begin{align}
  \sum_{t=1}^{\T} \sup\Big| \E[\rtl_{i, t+1}] - \E[\rtl_{i, t}] \Big| \leq \Cv \ln\T
  \end{align} 
  where $ \Cv = 12\G^{2(l-1)}\Csig^2\Au^2\Cx^2\Cg\Cthe $.
  
  Then, for the reward function in \ref{eq:our_reward}, we also have
  \begin{align*}
  \Big| \rtl_{i, t+1} - \rtl_{i, t} \Big| =& \Big| (2\zb^{(l)\top}_{i,t+1}\zbabl_{v,t+1} -\norm{\zbl_{i,t+1}}^{2}) - ( 2\zb^{(l)\top}_{i,t}\zbabl_{v,t} - \norm{\zbl_{i,t}}^2) \Big|\\
  \leq& 2 \Big| \zb^{(l)\top}_{i,t+1} \zbabl_{v,t+1} - \zb^{(l)\top}_{i,t} \zbabl_{v,t} \Big| + \Big| \norm{\zbl_{i,t+1}}^2 - \norm{\zbl_{i,t}}^2 \Big| \\
  \leq& 2 \Big| \zb^{(l)\top}_{i,t+1} \zbabl_{v,t+1} - \zb^{(l)\top}_{i,t}\zbabl_{v,t+1} \Big| + 2\Big| \zb^{(l)\top}_{i,t}\zbabl_{v,t+1} - \zb^{(l)\top}_{i,t}\zbabl_{v,t} \Big| \\
  &\quad\quad + \Big| (\zbl_{i,t+1} - \zbl_{i,t})^{\top} \zbl_{i,t+1} \Big| + \Big| \zb_{i,t}^{(l)\top} ( \zbl_{i,t+1} - \zbl_{i,t} ) \Big|\\
  \leq& 2 \normbig{ \zbl_{i,t+1} - \zbl_{i,t} } \normbig{\zbabl_{v,t+1}} + 2 \normbig{\zbl_{i,t}} \normbig{\zbabl_{v,t+1} - \zbabl_{v,t}} \\
  &\quad\quad + \normbig{\zbl_{i,t+1} - \zbl_{i,t}} \normbig{\zbl_{i,t+1}} + \normbig{\zbl_{i,t}} \normbig{\zbl_{i,t+1} - \zbl_{i,t}} \\
  \leq& 2 \Deztl \cdot \frac{1}{\K} \sum_{j\in\Nv} \normbig{ \zbl_{j,t+1} } + 2 \Cz \cdot \frac{1}{\K} \sum_{j\in\Nv} \normbig{ \zbl_{j,t+1} - \zbl_{j,t} } + \Deztl \cdot \Cz + \Cz \cdot \Deztl \\
  \le& 6\Cz \Deztl = 6 \G^{2(l-1)}\Csig^2\Au^2\Cx^2\Cg\Cthe \cdot \alpha_{t}
  \end{align*}   
  For its expected reward $ \E[\rl_{i, t}] = \sum_{\Nsone, \dots, \Nstm} p(\Nsone, \dots, \Nstm) \rl_{i,t} $, we can obtain that for any $ v\in\VV $, any $ i\in\Nv $ and any $ 1\leq t\leq \T $,
  \begin{align*}
  \Big| \E[\rl_{i, t+1}] - \E[\rl_{i, t}]  \Big| 
  =& \Big| \sum_{\Nsone, \Nstwo, \dots, \Nst} p(\Nsone, \dots, \Nst) \rl_{i, t+1} - \sum_{\Nsone, \Nstwo, \dots,\Nstm} p(\Nsone, \dots, \Nstm) \rl_{i, t}  \Big|\\
  =& \left| \sum_{\Nsone, \dots, \Nstm} p(\Nsone, \dots, \Nstm) \sum_{\Nst} p(\Nst | \Nsone, \dots, \Nstm) \Big( \rl_{i, t+1} - \rl_{i, t} \Big) \right| \\
  =& \sum_{\Nsone, \dots, \Nstm} p(\Nsone, \dots, \Nstm) \sum_{\Nst} p(\Nst | \Nsone, \dots, \Nstm) \Big| \rl_{i, t+1} - \rl_{i,t} \Big|\\
  \leq& \sum_{\Nsone, \dots, \Nstm} p(\Nsone, \dots, \Nstm) \sum_{\Nst} p(\Nst | \Nsone, \dots, \Nstm) \cdot 6\Cz \Deztl \\
  =& 6\Cz\Deztl = 6 \G^{2(l-1)}\Csig^2\Au^2\Cx^2\Cg\Cthe \cdot \alpha_{t} 
  \end{align*}
  Further, for $ \T \geq 2 $, we have the variation budget of expected rewards as
  \begin{align}
  \sum_{t=1}^{\T} \sup\Big| \E[\rl_{i, t+1}] - \E[\rl_{i, t}] \Big| 
  \leq& \sum_{t=1}^{\T} 6 \G^{2(l-1)}\Csig^2\Au^2\Cx^2\Cg\Cthe \cdot \alpha_{t} \\
  =& 6 \G^{2(l-1)}\Csig^2\Au^2\Cx^2\Cg\Cthe \sum_{t=1}^{\T} \frac{1}{t} \\
  =& 6 \G^{2(l-1)}\Csig^2\Au^2\Cx^2\Cg\Cthe (\ln\T+\epsilon)\leq \Cv \ln\T \label{eq:proof_coro_var_rl}. 
  \end{align}
\end{proof}

\section{The Worst-Case Regret with a Dynamic Oracle}\label{app:proof_regret}
\textbf{Theorem 3} (Regret Bound)
\textit{
  Consider \eqref{eq:our_practical_reward} as the reward function and Algorithm~\ref{alg:our_sampler} as the neighbor sampling algorithm for training GCN. Let $ \DeltaT = (\Cv \ln\T)^{-\frac{2}{3}} (\K\ln\K)^{\frac{1}{3}} \T^{\frac{2}{3}} $, $ \eta = \sqrt{\frac{ 2\kk\ln(\K/\kk) }{ \Cr(\exp(\Cr)-1) \K\T }} $, and $ \gamma = \min\{1, \sqrt{ \frac{ (\exp(\Cr)-1) \K\ln(\K/\kk) }{ 2\kk\Cr\T } } \} $. Given the variation budget in \eqref{eq:bound_variation_budget}, for every $ \T \geq \K \geq 2 $, we have the regret bound of \eqref{eq:dynamic_regret} as
  \begin{equation*}
  \Reg(\T) \leq \Cba (\K\ln\K)^{\frac{1}{3}} \cdot (\T \sqrt{\ln\T})^{\frac{2}{3}}. 
  \end{equation*}
  where $ \Cba $ is a absolute constant independent with $ \K $ and $ \T $.
}
\begin{proof}
  Theorem~\ref{the:main_theorem} is a non-trivial adaptation of Theorem 2 from \cite{besbes2014stochastic} in the context of GCN training and multiple-play setting. 
  We consider the following regret: 
  \begin{align*}
  \Reg(\T) = \sum_{t=1}^{\T} \sum_{i\in\Nkstar} \E[\rlt_{i,t}] - \sum_{t=1}^{\T} \sum_{\It\in\Nst} \E^{\pi} [\rlt_{\It,t}]
  \end{align*}
  where $ \Nkstar = \argmax_{\Nk\subset\Nv} \sum_{i\in\Nk} \E[\rlt_{i,t}], |\Nk|=\kk $. As the current reward $ \rtl_{i,t} $ \eqref{eq:our_practical_reward} determined by previous sampling and optimization steps, the expectation $ \E[\rtl_{i,t}] $ is taken over the randomness of rewards caused by the previous history of arm pulling (action trajectory), i.e. $ \E[\rtl_{i,t}] = \sum_{(\Nsone, \Nstwo, \dots, \Nstm)} p(\Nsone, \Nstwo, \dots,\Nstm) \cdot \rtl_{i,t} $. 
  On the other hand, the expectation $ \E^{\pi}[\rtl_{\It,t}]  $ is taken over the joint distribution of action trajectory $ (\Nsone, \Nstwo, \dots, \NsT) $ of policy $ \pi $. Namely, we have
  \begin{align*}
  \E^{\pi} [\rtl_{\It,t}] 
  =& \sum_{\Nsone, \Nstwo, \dots, \NsT} p(\Nsone, \dots, \NsT) \cdot \rtl_{\It,t}  \\
  =& \sum_{\Nsone, \Nstwo, \dots, \Nst} p(\Nsone,\dots,\Nst) \sum_{\Nstp, \dots, \NsT} p(\Nstp, \dots, \NsT| \Nsone,\Nstwo, \dots, \Nst) \cdot \rtl_{\It,t} \\
  =& \sum_{\Nsone, \Nstwo, \dots, \Nst} p(\Nsone,\dots,\Nst) \cdot \rtl_{\It,t} \sum_{\Nstp, \dots, \NsT} p(\Nstp, \dots, \NsT| \Nsone,\Nstwo, \dots, \Nst)  \\
  =& \sum_{\Nsone, \Nstwo, \dots, \Nst} p(\Nsone,\dots,\Nst) \cdot \rtl_{\It,t}. 
  \end{align*}
  We adopt the similar idea to prove the worst-case regret: decompose $ \Reg(\T) $ into the gap between two oracles and the weak regret with the static oracle. 
  \begin{align*}
  \Reg(\T) 
  =& \sum_{t=1}^{\T} \Big( \sum_{i\in\Nkstar} \E[\rtl_{i,t}] - \sum_{\It\in\Nst}  \E^{\pi}[ \rtl_{\It, t}] \Big) \\
  =& \left( \sum_{t=1}^{\T}  \sum_{i\in\Nkstar} \E[\rtl_{i,t}] - \max_{\Nk\subset\Nv} \sum_{t=1}^{\T} \sum_{i\in\Nk} \E[\rtl_{i, t}] \right) + \left( \max_{\Nk\subset\Nv} \sum_{t=1}^{\T} \sum_{i\in\Nk} \E[\rtl_{i, t}] - \sum_{t=1}^{\T} \sum_{\It\in\Nst} \E^{\pi}[\rtl_{\It,t}] \right)
  \end{align*} 
  where $ \Nkstar = \argmax_{\Nk\subset\Nv} \sum_{i\in\Nk} \E[\rtl_{i,t}], |\Nk|=\kk $, is the dynamic oracle at each step. 
  
  First, we break the horizon $ [\T] $ in a sequence of batches $ \Tau_{1}, \Tau_{2}, \dots, \Tau_{s} $ of size $ \DeltaT $ each (except possible $ \Tau_{s} $) according to Algorithm~\ref{alg:our_sampler}. For batch $ m $, we decompose its regret as:
  \begin{align}
  &\sum_{t\in\Tau_{m}} \Big( \sum_{i\in\Nkstar} \E[\rtl_{i,t}] - \sum_{\It\in\Nst} \E[\rtl_{\It,t}] \Big) 
  = \underbrace{ \sum_{t\in\Tau_{m}} \sum_{i\in\Nkstar} \E[\rtl_{i, t}] - \max_{\Nk\subset\Nv}\left\{ \sum_{t\in\Tau_{m}} \sum_{j\in\Nk} \E[\rtl_{j,t}] \right\} }_{\J_{1,m}}\\
  &\quad\quad\quad\quad\quad\quad\quad\quad\quad\quad\quad\quad\quad\quad + \underbrace{ \max_{\Nk\subset\Nv}\left\{ \sum_{t\in\Tau_{m}} \sum_{j\in\Nk} \E[\rtl_{j,t}] \right\} - \sum_{t\in\Tau_{m}}\sum_{\It\in\Nst} \E^{\pi}[\rtl_{\It,t}] }_{\J_{2,m}}
  \end{align}
  $ \J_{1,m} $ is the gap between dynamic oracle and static oracle; $ \J_{2,m} $ is the weak regret with the static oracle. We analyze them separately. Denote the variation of rewards along $ \Tau_{m} $ by $ \Va_{m} $, i.e. $ \Va_{m} = \sum_{t\in\Tau_{m}} \max_{i\in[\K]} |\E[\rtl_{i,t+1}] - \E[\rtl_{i,t}]| $, we note that: 
  \begin{align}
  \sum_{m=1}^{s} \Va_{m} = \sum_{m=1}^{s} \sum_{t\in\Tau_{m}} \max_{i\in[\K]} \Big| \E[\rtl_{i, t+1}] - \E[\rtl_{i, t}] \Big| = \Va_{\T}
  \end{align}
  Let $ \Nkzero $ be the static oracle in batch $ \Tau_{m} $, i.e. $ \Nkzero = \argmax_{\Nk\subset\Nv}\sum_{t\in\Tau_{m}} \sum_{j\in\Nk} \E[\rtl_{j, t}] $. Then, we have: for any $ t\in\Tau_{m} $, 
  \begin{align}\label{eq:proof_bound_oracle}
  \sum_{i\in\Nkstar} \E[\rtl_{i,t}] - \sum_{i_0\in\Nkzero} \E[\rtl_{i_0,t}] \leq 2\kk \Va_{m}. 
  \end{align} 
  \eqref{eq:proof_bound_oracle} holds by following arguments: otherwise, there exist a time step $ t_0 \in \Tau_{m} $, for which $ \sum_{i\in\Nkstar}\E[\rtl_{i,t_0}] - \sum_{i_0\in\Nkzero} \E[\rtl_{i_0,t_0}] \geq 2\kk\Va_{m} $. If so, let $ \Nkone = \argmax_{\Nk\subset\Nv} \sum_{i_1\in\Nk} \E[\rtl_{i_1, t_0}] $. In such case, for all $ t\in\Tau_{m} $, one has:
  \begin{align}\label{eq:proof_dyreg_confit}
  \sum_{i_1\in\Nkone} \E[\rtl_{i_1, t}] \geq \sum_{i_1 \in\Nkone} (\E[\rtl_{i_1, t_0}] -\Va_{m}) \geq \sum_{i_0\in\Nkzero} (\E[\rtl_{i_0, t_0}] + \Va_{m}) \geq \sum_{i_0\in\Nkzero} \E[\rtl_{i_0, t}]
  \end{align}
  since $ \Va_{m} $ is the maximum variation of expected rewards along batch $ \Tau_{m} $. However, \eqref{eq:proof_dyreg_confit} contradicts the optimality of $ \Nkzero $ in batch $ \Tau_{m} $. Thus, \eqref{eq:proof_bound_oracle} holds. Therefore, we obtain
  \begin{align}
  \J_{1,m} \leq 2\kk\Va_{m} \DeltaT
  \end{align}
  As for $ \J_{2,m} $, according to Lemma~\ref{lem:weak_regret}, the weak regret with the static oracle incurred by Exp3.M along batch $ \Tau_{m} $ with size $ \DeltaT $, tuned by $ \eta = \sqrt{\frac{ 2\kk\ln(\K/\kk) }{ \Cr(\exp(\Cr)-1) \K\T }} $ and $ \gamma = \min\{1, \sqrt{ \frac{ (\exp(\Cr)-1) \K\ln(\K/\kk) }{ 2\kk\Cr\T } } \} $, is bounded by $ \sqrt{2\kk\Cr(\exp(\Cr)-1)} \sqrt{\K\DeltaT\ln(\K/\kk)} $. Therefore, for each $ m\in\{1, 2, \dots, s\} $, we have
  \begin{align}\label{eq:proof_J2j}
  \J_{2,m} = \max_{\Nk\subset\Nv}\left\{ \sum_{t\in\Tau_{m}} \sum_{i\in\Nk} \E[\rtl_{i,t}] \right\} - \sum_{t\in\Tau_{m}}\sum_{\It\in\Nst} \E^{\pi}[\rtl_{\It,t}] \leq \sqrt{2\kk\Cr(\exp(\Cr)-1)} \sqrt{\K\DeltaT\ln\K}
  \end{align}
  \eqref{eq:proof_J2j} holds because the arm is pulled according to Exp3.M policy within batch $ \Tau_{m} $. 
  
  Then, summing over $ s = \Big\lceil \T/\DeltaT \Big\rceil $, we have
  \begin{align*}
  \Reg(\T) 
  \leq& \sum_{m=1}^{s} \left( 2\kk\Va_{m}\DeltaT + \sqrt{2\kk\Cr(\exp(\Cr)-1)} \sqrt{\K\DeltaT\ln\K} \right)\\
  \leq& 2\kk\Va_{\T} \DeltaT + \left( \frac{\T}{\DeltaT} + 1 \right)\sqrt{2\kk\Cr(\exp(\Cr)-1)} \sqrt{\K\DeltaT\ln\K}\\
  \leq& 2\kk\Va_{\T} \DeltaT + \sqrt{2\kk\Cr(e^{\Cr}-1)} \sqrt{\frac{\K\ln\K}{\DeltaT}} \T + \sqrt{2\kk\Cr(e^{\Cr}-1)}\sqrt{\DeltaT\K\ln\K}
  \end{align*}
  Let $ \DeltaT = (\K\ln\K)^{\frac{1}{3}} (\T/\Va_{\T})^{\frac{2}{3}} $. We have
  \begin{align}
  \Reg(\T) 
  \leq& 2\kk(\Va_{\T}\K\ln\K)^{\frac{1}{3}}\T^{\frac{2}{3}} + \sqrt{2\kk\Cr(e^{\Cr}-1)} (\Va_{\T}\K\ln\K)^{\frac{1}{3}} \T^{\frac{2}{3}}\\
  & + \sqrt{2\kk\Cr(e^{\Cr}-1)} (\K\ln\K)^{\frac{2}{3}} (\T/\Va_{\T})^{\frac{1}{3}}  \label{eq:proof_dyreg_var} 
  \end{align}
  In \eqref{eq:proof_dyreg_var}, we consider the variation budget under the GCN training (Lemma~\ref{lem:variation_budget}): $ \Va_{\T} = \Cv\ln\T $. Assuming $ \Cv = 12 \Csig^2 \Cx^2 \Au^2 \Cthe\Cg \cdot \G^{2(l-1)} = 12\Csig^2 \Cx^2 \Au^2 \Cthe\Cg \cdot (\Csig\Cthe\Au)^{2(l-1)} \cdot \Du^{2(l-1)} \geq \Du $, given $ 2\leq \K\leq \Du $ and $ \T \geq \K \geq 2 $, it is easy to conclude
  \begin{align}
  (\Cv (\ln\T) \K\ln\K )^{\frac{1}{3}} \T^{\frac{2}{3}} \geq  \frac{(\K\ln\K)^{\frac{2}{3}} \T^{\frac{1}{3}} }{(\Cv\ln\T)^{\frac{1}{3}}}
  \end{align}
  Therefore,
  \begin{align}
  \Reg(\T) \leq& \Big(2\sqrt{2\kk\Cr(e^{\Cr}-1)} + 2\kk\Big) \Cv^{\frac{1}{3}} \Big(\K\ln\K\Big)^{\frac{1}{3}} \Big( \T \sqrt{\ln\T} \Big)^{\frac{2}{3}} \label{eq:proof_dyreg}
  \end{align}
  where we define $ \Cba = \Big( 2\sqrt{2\kk\Cr(e^{\Cr}-1)} + 2\kk \Big)\Cv^{\frac{1}{3}} $.
  
  Then, we consider \eqref{eq:our_reward} as the reward function and bound its regret. Note that we already bounded its variation budget in \eqref{eq:proof_coro_var_rl}, same as the variation budget of \eqref{eq:our_practical_reward}. Hence, by substituting it into \eqref{eq:proof_dyreg_var}, we can get the same regret bound of \eqref{eq:our_reward} as \eqref{eq:proof_dyreg}
  \begin{align}
  \Reg(\T) \leq& \Big(2\sqrt{2\kk\Cr(e^{\Cr}-1)} + 2\kk\Big) \Cv^{\frac{1}{3}} \Big(\K\ln\K\Big)^{\frac{1}{3}} \Big( \T \sqrt{\ln\T} \Big)^{\frac{2}{3}}
  \end{align}
  where we define $ \Cba = \Big( 2\sqrt{2\kk\Cr(e^{\Cr}-1)} + 2\kk \Big)\Cv^{\frac{1}{3}} $. 
\end{proof}

\section{Weak Regret with a Static Oracle}
\begin{lemma}[Weak Regret]\label{lem:weak_regret}
  Given the reward function $ \rlt_{i,t} \leq \Cr $, set $ \eta = \sqrt{\frac{ 2\kk\ln(\K/\kk) }{ \Cr(\exp(\Cr)-1) \K\T }} $ and $ \gamma = \min\{1, \sqrt{ \frac{ (\exp(\Cr)-1) \K\ln(\K/\kk) }{ 2\kk\Cr\T } } \} $. Then, we have the regret bound for Exp3.M as:
  \begin{equation}\label{eq:weak_regret_exp3m}
  \hReg(\T) = \max_{\Nk\subset [\K]} \sum_{t=1}^{\T} \sum_{j\in \Nk} \E[\rtl_{j,t}] - \sum_{t=1}^{\T} \sum_{\It \in \Nst}\E^{\pi} [\rtl_{\It, t}] \leq \sqrt{2\kk\Cr(\exp(\Cr)-1)} \sqrt{\K\T\ln(\K/\kk)}
  \end{equation} 
  where $ \Nk $ is a subset of $ [\K] $ with $ \kk $ elements. 
\end{lemma}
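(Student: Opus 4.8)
The plan is to run the classical weight–potential analysis of Exp3.M \cite{uchiya2010algorithms} on the reward scale $[0,\Cr]$ and for a \emph{non-oblivious} adversary. The non-obliviousness is handled by performing every estimate conditionally on the sampling history $\mathcal H_{t-1}=(\Nsone,\dots,\Nstm)$, so that the adaptivity of $\rtl_{i,t}$ never enters: since DepRound (Algorithm~\ref{alg:depround}) places arm $i$ into $\Nst$ with marginal probability exactly $p_{i,t}$ and $\sum_{i\in[\K]}p_{i,t}=\kk$, and since $\rtl_{i,t}$ is $\mathcal H_{t-1}$-measurable, one has $\E[\rlh_{i,t}\mid\mathcal H_{t-1}]=\rtl_{i,t}$, $\E[\rlh_{i,t}^{2}\mid\mathcal H_{t-1}]=\rtl_{i,t}^{2}/p_{i,t}$, and $\E^{\pi}[\sum_{\It\in\Nst}\rtl_{\It,t}\mid\mathcal H_{t-1}]=\sum_{i\in[\K]}p_{i,t}\rtl_{i,t}$. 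These are exactly the identities the standard argument needs, and an outer expectation over $\mathcal H_{t-1}$ converts a path-wise bound into a bound on $\hReg(\T)$ in the stated form. The boundary case $\gamma=1$ is immediate: then $(\exp(\Cr)-1)\K\ln(\K/\kk)\ge 2\kk\Cr\T$, each of the $\kk$ plays earns at most $\Cr$, so $\hReg(\T)\le\kk\Cr\T\le\sqrt{2\kk\Cr(\exp(\Cr)-1)}\sqrt{\K\T\ln(\K/\kk)}$; assume $\gamma\in(0,1)$ henceforth.

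Introduce the potential $\mathcal W_t:=\sum_{i\in[\K]}\wo_{i,t}$ (so $\mathcal W_1=\K$) and study $\ln(\mathcal W_{t+1}/\mathcal W_t)$ round by round. Arms in the rest set $\Ut$ keep their weight frozen and are played with probability $p_{i,t}=1$; for the remaining arms $\wo_{i,t+1}=\wo_{i,t}e^{\eta\rlh_{i,t}}$, and using $p_{i,t}\ge\kk\gamma/\K$ together with the stated tuning one checks that $\eta\rlh_{i,t}$ stays in a bounded interval on which a suitable elementary inequality $e^{x}\le 1+x+c_{\Cr}x^{2}$ applies; the constant $c_{\Cr}=\Theta\big((\exp(\Cr)-1)/\Cr\big)$ is precisely what produces the $\Cr(\exp(\Cr)-1)$ factor in the final bound. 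Dividing by $\mathcal W_t$, using $\wo_{i,t}/\mathcal W_t\le\wo'_{i,t}/\sum_{j}\wo'_{j,t}\le p_{i,t}/(\kk(1-\gamma))$ and $1+u\le e^{u}$, and telescoping over $t=1,\dots,\T$ gives a bound on $\ln(\mathcal W_{\T+1}/\K)$ of the shape $\tfrac{\eta}{\kk(1-\gamma)}\sum_{t}\sum_{i}p_{i,t}\rlh_{i,t}+\tfrac{c_{\Cr}\eta^{2}}{\kk(1-\gamma)}\sum_{t}\sum_{i}p_{i,t}\rlh_{i,t}^{2}$.

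Next I would lower-bound $\mathcal W_{\T+1}$ by the weight carried by the optimal $\kk$-subset $\Nkstar=\argmax_{\Nk}\sum_{t}\sum_{j\in\Nk}\E[\rtl_{j,t}]$; on rounds where some $j\in\Nkstar$ lies in $\Ut$ its weight is frozen but $\rlh_{j,t}=\rtl_{j,t}$ is still observed (since then $p_{j,t}=1$), and a convexity/rearrangement argument yields $\ln\mathcal W_{\T+1}\ge\ln\kk+\tfrac{\eta}{\kk}\sum_{t}\sum_{j\in\Nkstar}\rlh_{j,t}$ up to a term that is absorbed into the $\gamma$-exploration cost. Combining the upper and lower bounds, rearranging, and taking $\E[\cdot\mid\mathcal H_{t-1}]$ and then the full expectation — so that $\sum_i p_{i,t}\rlh_{i,t}\mapsto\sum_i p_{i,t}\rtl_{i,t}$ and $\sum_i p_{i,t}\E[\rlh_{i,t}^{2}\mid\mathcal H_{t-1}]=\sum_i\rtl_{i,t}^{2}\le\K\Cr^{2}$ by $\rtl_{i,t}\le\Cr$ — produces
\[
\hReg(\T)\le\frac{\kk\ln(\K/\kk)}{\eta}+\frac{\gamma}{1-\gamma}\,\kk\Cr\T+\frac{c_{\Cr}\eta}{1-\gamma}\,\K\Cr^{2}\T .
\]
Substituting $\eta=\sqrt{2\kk\ln(\K/\kk)/(\Cr(\exp(\Cr)-1)\K\T)}$ and the stated $\gamma$ — exactly the values that equalize these three terms while keeping $\gamma\le\tfrac12$ — collapses the right-hand side to $\sqrt{2\kk\Cr(\exp(\Cr)-1)}\sqrt{\K\T\ln(\K/\kk)}$.

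The step I expect to be the main obstacle is the rest-set bookkeeping: making the telescoping of $\ln(\mathcal W_{t+1}/\mathcal W_t)$ and the lower bound $\mathcal W_{\T+1}\ge\sum_{j\in\Nkstar}\wo_{j,\T+1}$ airtight when the weights of high-probability arms are capped to $\bat$ and frozen. One must verify that capping only shrinks the potential (i.e. $\sum_{j}\wo'_{j,t}\le\sum_{j}\wo_{j,t}$), that the frozen rounds contribute ``definitely played'' mass rather than regret, and that the subset version of the oracle lower bound still goes through. This is the genuinely multiple-play part of the argument; by contrast the reward-scale adaptation (which only changes the exponential inequality and hence $c_{\Cr}$) and the non-oblivious conditioning are routine once the DepRound identities of the first paragraph are in place.
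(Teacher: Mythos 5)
Your proposal is correct and follows essentially the same route as the paper's proof: the standard Exp3.M potential argument on $\mathcal{W}_t=\sum_i \wo_{i,t}$ with the inequality $e^{x}\le 1+x+\tfrac{e^{\Cr}-1}{2\Cr}x^{2}$ for $x\le\Cr$, the rest-set/capping bookkeeping for $\Ut$ (where $p_{i,t}=1$), the AM--GM lower bound on the weight of the optimal $\kk$-subset, and the non-oblivious adversary handled by conditioning on the sampling history and using the DepRound marginal identity $\E[\rlh_{i,t}\mid\mathcal{H}_{t-1}]=\rtl_{i,t}$. The only differences are cosmetic (your explicit treatment of the $\gamma=1$ edge case and a slightly different placement of the $(1-\gamma)^{-1}$ factors before rearrangement), and both yield the same three-term bound and the same tuning of $\eta$ and $\gamma$.
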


\begin{proof}
  The techniques are similar with Theorem 2 in \citet{uchiya2010algorithms} except the scale of our reward is $ \rtl_{i, t} \leq \Cr $. Besides, we explain how to take expectation over joint distribution of $ (\Nsone, \Nstwo, \dots, \NsT) $ in \eqref{eq:proof_wreg_before_expe} more clearly. Let $ \Wr_{t}, \Wr_{t}' $ denote $ \sum_{i=1}^{\K} \wo_{i,t}, \sum_{i=1}^{\K} \wo_{i,t}' $ respectively. Then, for any $ t\in[\T] $, 
  \begin{align}
  \frac{\Wrtp}{\Wrt} 
  =& \sum_{i\in [\K] - \Ut} \frac{\wo_{i,t+1}}{\Wrt} + \sum_{i\in \Ut}\frac{\wo_{i,t+1}}{\Wrt}\\
  =& \sum_{i\in[\K] - \Ut} \frac{\wo_{i,t}}{\Wrt} \exp(\eta\rlh_{i,t}) + \sum_{i\in \Ut}\frac{\wo_{i,t}}{\Wrt}\\
  \leq& \sum_{i\in[\K] - \Ut} \frac{\wo_{i,t}}{\Wrt} \left[ 1+ {\eta\rlh_{i,t}} + \frac{e^{\Cr}-1}{2\Cr} (\eta\rlh_{i,t})^2 \right] + \sum_{i\in\Ut} \frac{\wo_{i,t}}{\Wrt}  \label{eq:proof_1_leq}\\
  =& 1+ \frac{\Wrt'}{\Wrt} \sum_{i\in[\K] - \Ut} \frac{\wo_{i,t}}{\Wrt'}\left[ \eta\rlh_{i,t} + \frac{e^{\Cr}-1}{2\Cr} (\eta\rlh_{i,t})^2 \right] \\
  =& 1 + \frac{\Wrt'}{\Wrt}\sum_{i\in[\K]-\Ut} \frac{ p_{i,t}/k - \gamma/\K }{1-\gamma} \left[ \eta\rlh_{i,t} + \frac{e^{\Cr}-1}{2\Cr} (\eta\rlh_{i,t})^2 \right]\\
  \leq& 1+ \frac{\eta}{\kk(1-\gamma)} \sum_{i\in[\K]-\Ut} p_{i,t} \rlh_{i,t} + \frac{e^{\Cr}-1}{2\Cr} \frac{\eta^2}{\kk(1-\gamma)} \sum_{i\in[\K]-\Ut}p_{i,t}\rlh_{i,t}^2  \label{eq:proof_wreg_2leq}\\
  \leq& 1+ \frac{\eta}{\kk(1-\gamma)} \sum_{\It\in\Nst-\Ut} \rtl_{\It,t} + \frac{e^{\Cr}-1}{2} \frac{\eta^2}{\kk(1-\gamma)} \sum_{i\in[\K]} \rlh_{i,t} \label{eq:proof_wre_3leq}.
  \end{align}
  Inequality \eqref{eq:proof_1_leq} uses $ \exp(x) \leq 1+ x+\frac{\exp(\Cr)-1}{2\Cr} x^2 $ for $ x\leq \Cr $. Inequality \eqref{eq:proof_wreg_2leq} holds because $ \frac{\Wrt'}{\Wrt} \leq 1 $ and inequality \eqref{eq:proof_wre_3leq} uses the facts that $ p_{i,t} \rlh_{i,t} = \rtl_{i, t} $ for $ i\in\Nst $ and $ p_{i,t} \rlh_{i,t} = 0 $ for $ i\notin \Nst $. Since $ \ln(1+x) \leq x $, we have
  \begin{align}
  \ln\frac{\Wrtp}{\Wrt} \leq \frac{\eta}{\kk(1-\gamma)} \sum_{\It\in\Nst-\Ut} \rtl_{\It,t} + \frac{e^{\Cr}-1}{2} \frac{\eta^2}{\kk(1-\gamma)} \sum_{i\in[\K]} \rlh_{i,t}
  \end{align}
  By summing over $ t $, we obtain
  \begin{align}\label{eq:proof_sum_right}
  \ln\frac{\WrT}{\Wr_{1}} \leq \frac{\eta}{\kk(1-\gamma)} \sum_{t=1}^{\T} \sum_{\It \in\Nst-\Ut} \rtl_{\It, t} + \frac{e^{\Cr}-1}{2}\frac{\eta^2}{\kk(1-\gamma)} \sum_{t=1}^{\T} \sum_{i\in[\K]} \rlh_{i,t} 
  \end{align}
  On the other hand, we have
  \begin{align}
  \ln \frac{\WrT}{\Wr_1} \geq& \ln\frac{ \sum_{j\in\Nk} \wo_{j,\T} }{\Wr_1}\geq \frac{\sum_{j\in\Nk} \ln\wo_{j,\T}}{\kk} - \ln\frac{\K}{\kk} \label{eq:proof_wreT_jesen} \\
  =& \frac{\eta}{\kk} \sum_{j\in\Nk} \sum_{t:j\notin\Ut} \rlh_{j,t} - \ln\frac{\K}{\kk} \label{eq:proof_wo_expansion}
  \end{align}
  \eqref{eq:proof_wreT_jesen} uses the Cauchy-Schwarz inequality: 
  \begin{align*}
  \sum_{j\in\Nk} \wo_{j,\T} \geq \kk\Big( \prod_{j\in\Nk} \wo_{j,\T} \Big)^{1/k}. 
  \end{align*}
  \eqref{eq:proof_wo_expansion} uses the update rule of EXP3.M: $ \wo_{j,\T} = \exp(\eta \sum_{t:j\notin\Ut} \rlh_{j,t}) $. 
  
  Thus, from \eqref{eq:proof_sum_right} and \eqref{eq:proof_wo_expansion}, we conclude:
  \begin{align}
  \sum_{j\in\Nk} \sum_{t:j\notin\Ut} \rlh_{j,t} - \frac{\kk}{\eta} \ln\frac{\K}{\kk} \leq \frac{1}{1-\gamma} \sum_{t=1}^{\T} \sum_{\It\in \Nst-\Ut} \rtl_{\It, t} + \frac{e^{\Cr} -1}{2} \frac{\eta}{1-\gamma} \sum_{t=1}^{\T} \sum_{i\in[\K]} \rlh_{i,t}
  \end{align}
  Since $ \sum_{j\in\Nk} \sum_{t:j\in\Ut} \rtl_{j,t} \leq \frac{1}{1-\gamma} \sum_{t=1}^{\T} \sum_{i\in\Ut} \rtl_{i, t}  $ trivially holds, we have
  \begin{align}\label{eq:proof_wreg_before_expe}
  \sum_{t=1}^{\T} \sum_{j\in\Nk} \rlh_{j,t} \leq \sum_{t=1}^{\T} \sum_{\It\in \Nst} \rtl_{\It, t} + \frac{\kk}{\eta} \ln\frac{\K}{\kk} + \frac{e^{\Cr}-1}{2} {\eta} \sum_{t=1}^{\T} \sum_{i\in[\K]} \rlh_{i,t} + \gamma \sum_{t=1}^{\T} \sum_{j\in\Nk} \rlh_{j,t} 
  \end{align}
  Then, we take expectation on both side of \eqref{eq:proof_wreg_before_expe} over the joint distribution $ \pi $ of action trajectory $ (\Nsone, \Nstwo, \dots, \NsT) $ with $ \rlh_{j,t}, \rtl_{\It, t} $ as the random variable. Then,  
  \begin{align}
  &\E^{\pi}[\rlh_{i,t}] = \sum_{\Nsone, \Nstwo, \dots,\NsT} p(\Nsone, \Nstwo, \dots, \NsT) \cdot \rlh_{i,t} \\
  =& \sum_{\Nsone, \dots, \Nstm}p(\Nsone, \dots, \Nstm) \sum_{\Nst} p(\Nst | \Nsone, \Nstwo, \dots, \Nstm) \sum_{\Nstp, \dots, \NsT} p(\Nstp, \dots,\NsT | \Nsone, \dots, \Nst) \cdot \rlh_{i,t} \\
  =& \sum_{\Nsone, \dots, \Nstm}p(\Nsone, \dots, \Nstm) \sum_{\Nst} p(\Nst | \Nsone, \Nstwo, \dots, \Nstm) \cdot \rlh_{i,t} \sum_{\Nstp, \dots, \NsT} p(\Nstp, \dots,\NsT | \Nsone, \dots, \Nst) \label{eq:proof_weakreg_indep} \\
  =& \sum_{\Nsone, \dots, \Nstm}p(\Nsone, \dots, \Nstm) \sum_{\Nst} p(\Nst | \Nsone, \Nstwo, \dots, \Nstm) \cdot \rlh_{i,t} \\
  =& \sum_{\Nsone, \dots, \Nstm}p(\Nsone, \dots, \Nstm) \cdot \E_{\Nst}[\rlh_{i,t}| \Nsone, \dots, \Nstm] 
  \end{align} 
  \eqref{eq:proof_weakreg_indep} uses the fact that $ \rlh_{i,t} $ does not depends on future actions $ (\Nstp, \dots, \NsT) $. 
  Then, given DepRound (Algorithm~\ref{alg:depround}) selects arm-$ i $ with probability $ p_{i,t} $, we have $ \E_{\Nst} [\rlh_{i,t} | \Nsone, \Nstwo, \dots, \Nstm] = p_{i,t} \cdot \frac{\rtl_{i,t}}{p_{i,t}} + (1-p_{i,t}) \cdot 0 = \rtl_{i, t} $. 
  Thus, we have
  \begin{align}
  \E^{\pi}[\rlh_{i,t}] = \sum_{\Nsone, \dots, \Nstm}p(\Nsone, \Nstwo, \dots, \Nstm) \cdot \rtl_{i, t} = \E[\rtl_{i, t}]. 
  \end{align}
  Thus, while taking expectation on both side of \eqref{eq:proof_wreg_before_expe} over the joint distribution of action trajectory, we have
  \begin{align}
  \sum_{t=1}^{\T} \sum_{j\in\Nk} \E[\rtl_{j,t}] - \sum_{t=1}^{\T} \sum_{\It\in\Ns} \E^{\pi}[\rtl_{\It, t}] 
  \leq& \frac{\kk}{\eta} \ln\frac{\K}{\kk} + \frac{e^{\Cr} -1}{2} \eta \sum_{t=1}^{\T} \sum_{i\in[\K]} \E[\rtl_{i,t}] + \gamma \sum_{t=1}^{\T} \sum_{j\in\Nk} \E[\rtl_{j,t}] \\
  \leq& \frac{\kk}{\eta}\ln\frac{\K}{\kk} + \frac{\Cr(e^{\Cr}-1)}{2} \eta \K\T + \gamma \Cr \kk\T. 
  \end{align}
  Let $ \eta = \sqrt{\frac{ 2\kk\ln(\K/\kk) }{ \Cr(\exp(\Cr)-1) \K\T }} $ and $ \gamma = \min\{1, \sqrt{ \frac{ (\exp(\Cr)-1) \K\ln(\K/\kk) }{ 2\kk\Cr\T } } \} $, we have the weak regret with the static oracle as: 
  \begin{align}
  \hReg(\T) =& \max_{\Nk\subset [\K]} \sum_{t=1}^{\T} \sum_{j\in \Nk} \E[\rtl_{j,t}] - \sum_{t=1}^{\T} \sum_{\It \in \Ns} \E^{\pi}[\rtl_{\It, t}]\\
  \leq& \sqrt{2\kk\Cr(\exp(\Cr)-1)} \sqrt{\K\T\ln(\K/\kk)}
  \end{align}
\end{proof}

\section{Implicit Assumptions for \eqref{eq:bs_bound}}\label{app:implict_ass}
In this section, we will explain the implicit assumptions \citet{liu2020bandit} made to hold \eqref{eq:bs_bound} true so we focus on the reward function defined in \eqref{eq:bs_reward}. The most crucial issue lies in taking expectation on their equation (51) in \citet{liu2020bandit}. The only random variable while taking expectation is the action, i.e. arm pulling. The estimated reward $ \rlh_{i,t} $ and the policy $ p_{i,t} $ can be regard as the function of actions. If the adversary is assumed non-oblivious, the setting of GNN neighbor sampling, there should be expected reward $ \E_{(\Nsone, \dots, \Nstm)}[\rl_{i, t}] $ instead of $ \rl_{i, t} $ after taking expectation over joint distribution of actions for $ \rlh_{i, t} $ in the equation (51) of \citet{liu2020bandit}. Since $ \rl_{i, t} $ appears in \eqref{eq:bs_bound}, they should implicitly assume the reward distribution at time step $ t $ is independent with previous neighbor sampling and optimization step, i.e. oblivious adversary. Hence, they have
\begin{align*}
\E_{(\Nsone, \dots, \NsT)} \Big[ \sum_{t=1}^{\T} \rlh_{i,t} \Big]
=& \sum_{t=1}^{\T} \E_{(\Nsone, \dots, \NsT)} [\rlh_{i, t}] = \sum_{t=1}^{\T} \E_{\Nst} [\rlh_{i, t}] = \sum_{t=1}^{\T} \rl_{i, t}
\end{align*}

Even so, there is a second issue lying in the first term $ \sum_{t=1}^{\T} \sum_{j\in\Nv} p_{j,t} \rlh_{j, t}  $ of r.h.s of the equation (51) in \citet{liu2020bandit}. Although $ \rl_{i, t} $ might be assumed from an oblivious adversary, the policy $ p_{i,t} $ as a function of previous observed rewards of sampled arms cannot be assumed independent with previous actions. Hence, taking expectation for this term, i.e. $ \E_{(\Nsone, \dots, \NsT)} \Big[ \sum_{t=1}^{\T} \sum_{j\in\Nv} p_{j,t} \rlh_{j, t} \Big] $ will be quite complicated since $ p_{i,t} $ is a function of $ (\Nsone, \dots, \Nstm) $ and $ \rlh_{i, t} $ is a function of $ \Nst $ given oblivious adversary, incurring a expected policy $ \E_{(\Nsone,\dots, \Nstm)} [p_{i,t}] $ instead of $ p_{i,t} $. Providing non-oblivious adversary, the expectation of this term will be more complicated since $ p_{i,t} $ and $ \rl_{i, t} $ depend with each other and get intertwined while taking expectation. 

\citet{auer2002nonstochastic, uchiya2010algorithms} avoid this issue by rewriting $ \sum_{j\in\Nv}p_{j,t} \rlh_{j, t} $ as $ \rl_{\It,t} $ so that $ p_{j,t} $ does not emerge before taking expectation. \citet{liu2020bandit} does not adopt this technique and encounters these issues. Furthermore, \citet{liu2020bandit} assumed the embedding is bounded: $ \norm{\hb_{i,t}} \leq 1, \forall i\in \VV $ in their proof, but did not verify the sensitivity of this assumption. If $ \hb_{i,t} $ grows beyond 1, they implicitly assumed the variation of embedding has to be bounded in that scenario.

\section{Experimental Details}\label{app:detail_exp}
All datasets we use are public standard benchmark datasets: ogbn-arxiv, ogbn-products \citep{hu2020ogb}, CoraFull \cite{bojchevski2017deep}, Chameleon \cite{chien2021adaptive} and Squirrel \cite{pei2020geom}.
For Chameleon and Squirrel, the dataset split for train/validate/test is 0.6/0.2/0.2. For ogb datasets, the dataset split follows the default option of OGB \footnote{\url{https://ogb.stanford.edu/}} (See Table~\ref{tab:dataset_statistic}). For CoraFull, we select 20 nodes each class for validation set, 30 nodes each class for test set and the others for training set. 
\begin{table}[h]
  \caption{Summary of the statistics and data split of datasets.}
  \label{tab:dataset_statistic}
  \begin{center}
    \begin{small}
      \begin{tabular}{c|ccccccc}
        \toprule
        Dataset & \# Node & \# Edges & \# Classes & \# Features & \# Train & \# Val. & \# Test  \\
        \midrule
        Chameleon & 2,277 & 31,371 & 5 & 2325 & 1,367 & 455 & 455 \\ 
        Squirrel &  5,201 & 198,353 & 5 & 2,089 & 3,121 & 1,040 & 1040 \\
        CoraFull & 19,793 & 130,622 & 70 & 8,710 & 16,293 & 1,400 & 2,100 \\
        ogbn-arxiv & 169,343 & 1,166,243 & 40 & 128 & 90,941 & 29,799 & 48,603 \\
        ogbn-products & 2,449,029 & 61,859,140 & 47 & 100 & 196,615 & 39,323 & 2,213,091 \\
        \bottomrule
      \end{tabular}
    \end{small}
  \end{center}
\end{table}

\begin{table}[h]
	\vspace{-5mm}
	\caption{The detailed sampling hyperparameters for Chameleon.}
	\label{tab:chameleon_hyperpara}
	\begin{center}
		\begin{small}
			\begin{tabular}{c|ccc|ccc}
				\toprule
				&\multicolumn{3}{|c|}{GCN} & \multicolumn{3}{c}{GAT} \\
				\midrule
				Algorithm & $ \gamma $ & $ \eta $ & $ \DeltaT $ & $ \gamma $ & $ \eta $ & $ \DeltaT $ \\
				\midrule
				Thanos & 0.4 & 0.01 & 1000 & 0.4 & 0.01 & 1000 \\
				BanditSampler & 0.4 & 0.01 & N/A & 0.4 & 0.01 & N/A  \\
				\bottomrule
			\end{tabular}
		\end{small}
	\end{center}
\end{table}

\begin{table}[h]
	\vspace{-3mm}
	\caption{The detailed sampling hyperparameters for Squirrel.}
	\label{tab:squirrel_hyperpara}
	\begin{center}
		\begin{small}
			\begin{tabular}{c|ccc|ccc}
				\toprule
				&\multicolumn{3}{|c|}{GCN} & \multicolumn{3}{c}{GAT} \\
				\midrule
				Algorithm & $ \gamma $ & $ \eta $ & $ \DeltaT $ & $ \gamma $ & $ \eta $ & $ \DeltaT $ \\
				\midrule
				Thanos & 0.2 & 0.01 & 500 & 0.4 & 0.1 & 500 \\
				BanditSampler & 0.4 & 0.01 & N/A & 0.4 & 0.01 & N/A  \\
				\bottomrule
			\end{tabular}
		\end{small}
	\end{center}
\end{table}

\begin{table}[h]
  \caption{The detailed sampling hyperparameters for CoraFull.}
  \label{tab:corafull_hyperpara}
  \begin{center}
    \begin{small}
      \begin{tabular}{c|ccc|ccc}
        \toprule
        &\multicolumn{3}{|c|}{GCN} & \multicolumn{3}{c}{GAT} \\
        \midrule
        Algorithm & $ \gamma $ & $ \eta $ & $ \DeltaT $ & $ \gamma $ & $ \eta $ & $ \DeltaT $ \\
        \midrule
        Thanos & 0.2 & 0.01 & 2000 & 0.4 & 1 & 2000  \\
        BanditSampler & 0.4 & 0.01 & N/A & 0.4 & 0.01 & N/A  \\
        \bottomrule
      \end{tabular}
    \end{small}
  \end{center}
\end{table}

\begin{table}[h]
  \caption{The detailed sampling hyperparameters for ogbn-arxiv.}
  \label{tab:arxiv_hyperpara}
  \begin{center}
    \begin{small}
      \begin{tabular}{c|ccc|ccc}
        \toprule
        &\multicolumn{3}{|c|}{GCN} & \multicolumn{3}{c}{GAT} \\
        \midrule
        Algorithm & $ \gamma $ & $ \eta $ & $ \DeltaT $ & $ \gamma $ & $ \eta $ & $ \DeltaT $ \\
        \midrule
        Thanos & 0.2 & 1 & 8000 & 0.2 & 0.1 & 8000  \\
        BanditSampler & 0.4 & 0.01 & N/A & 0.4 & 0.01 & N/A  \\
        \bottomrule
      \end{tabular}
    \end{small}
  \end{center}
\end{table}

\begin{table}[h!]
  \caption{The detailed sampling hyperparameters for ogbn-products.}
  \label{tab:products_hyperpara}
  \begin{center}
    \begin{small}
      \begin{tabular}{c|ccc|ccc}
        \toprule
        &\multicolumn{3}{|c|}{GCN} & \multicolumn{3}{c}{GAT} \\
        \midrule
        Algorithm & $ \gamma $ & $ \eta $ & $ \DeltaT $ & $ \gamma $ & $ \eta $ & $ \DeltaT $ \\
        \midrule
        Thanos & 0.2 & 0.1 & 10000 & 0.4 & 0.1 & 10000  \\
        BanditSampler & 0.4 & 0.01 & N/A & 0.4 & 0.01 & N/A  \\
        \bottomrule
      \end{tabular}
    \end{small}
  \end{center} 
\end{table}

\begin{table}[h!]
	\vspace{-3mm}
	\caption{The detailed sampling hyperparameters for cSBM synthetic data.}
	\label{tab:cSBM_hyperpara}
	\begin{center}
		\begin{small}
			\begin{tabular}{c|ccc}
				\toprule
				&\multicolumn{3}{|c|}{GCN} \\
				\midrule
				Algorithm & $ \gamma $ & $ \eta $ & $ \DeltaT $  \\
				\midrule
				Thanos & 0.4 & 1 & 1000  \\
				BanditSampler & 0.4 & 0.01 & N/A \\
				\bottomrule
			\end{tabular}
		\end{small}
	\end{center}
\end{table}

\begin{table}[h!]
	\vspace{-3mm}
	\caption{The configuration of ClusterGCN.}
	\label{tab:config_clustergcn}
	\begin{center}
		\begin{small}
			\begin{tabular}{c|ccccc}
				\toprule
				Dataset & Chameleon & Squirrel & Ogbn-arxiv & CoraFull & Ogbn-products \\
				\midrule
				Partition size & 10 & 20 & 500 & 80 & 5000 \\
				\bottomrule
			\end{tabular}
		\end{small}
	\end{center}
\end{table}

\section{Reward Visualization}
\vspace{-2mm}
We first show the visualization of rewards to demonstrate their numerical stability in Fig.~\ref{fig:reward_visualization}. 
\begin{figure*}[h!]
  \centering
  \begin{subfigure}{0.32\linewidth}
    \centering
    \vspace{-0.2cm}
    \includegraphics[width=0.95\textwidth]{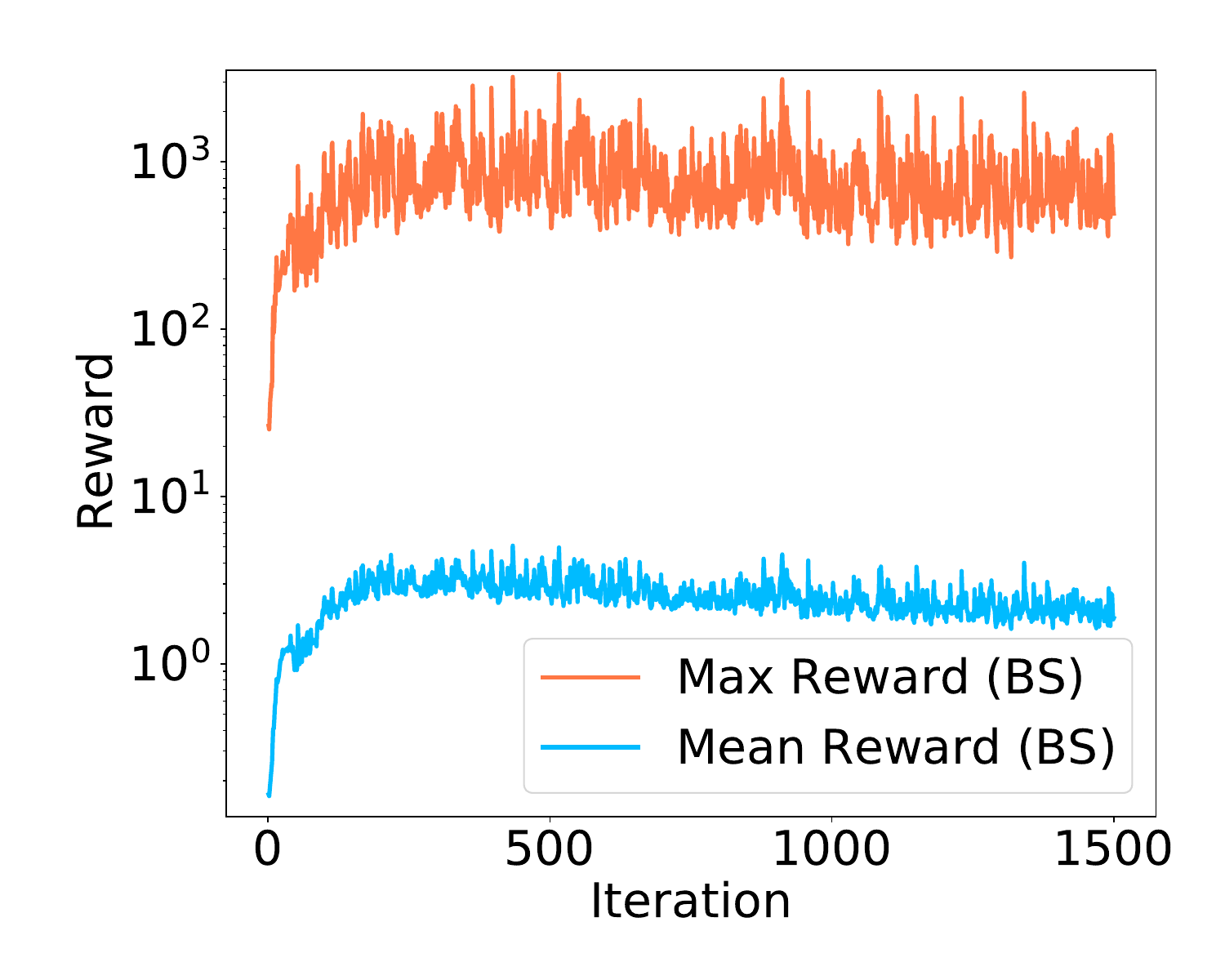}
    \vspace{-0.35cm}
    \caption{Rewards of BanditSampler}
    \label{fig:bs_MaxMeanR}
  \end{subfigure}
  \begin{subfigure}{0.32\linewidth}
    \centering
    \vspace{-0.2cm}
    \includegraphics[width=0.95\textwidth]{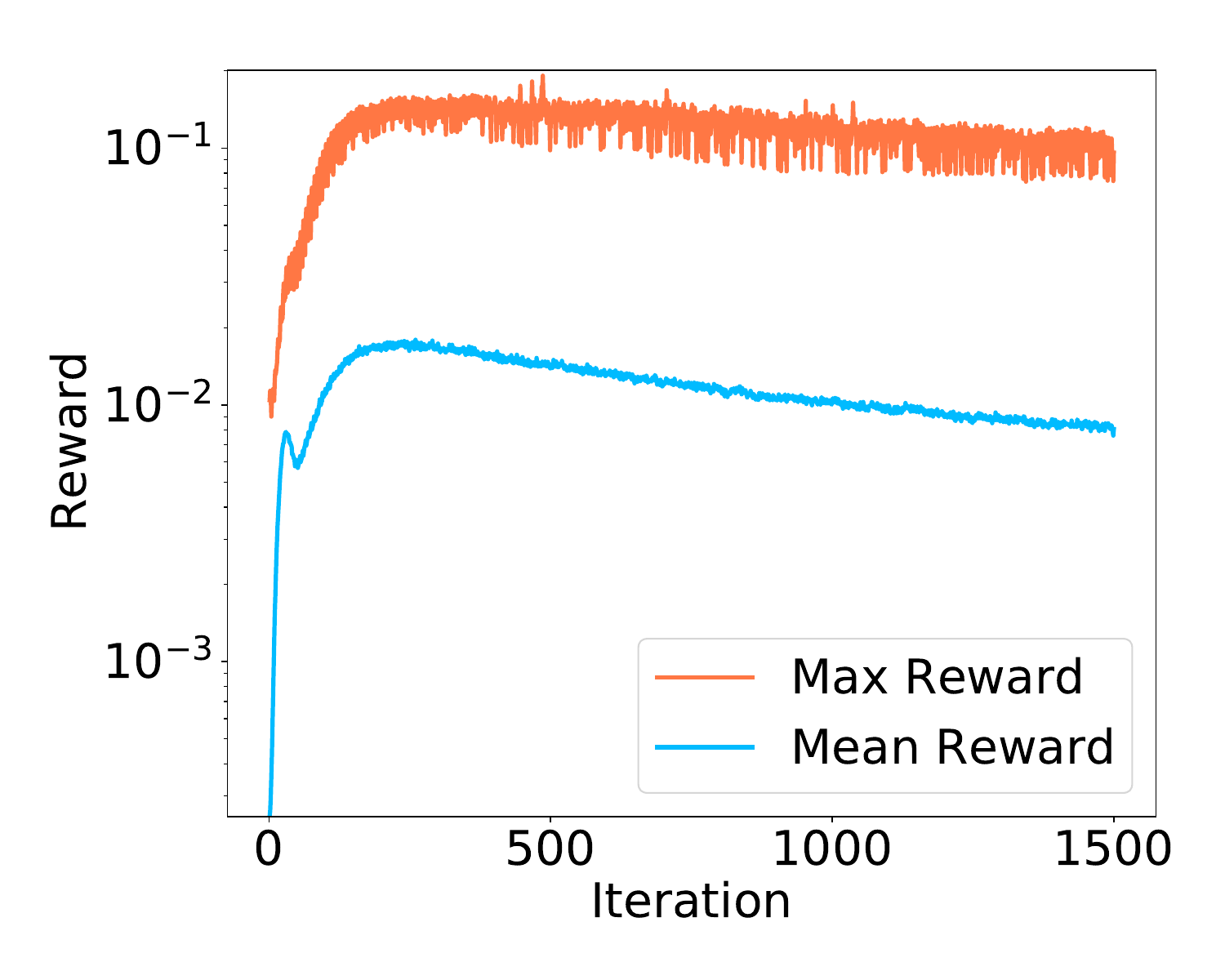}
    \vspace{-0.35cm}
    \caption{Rewards of Thanos}
    \label{fig:our_MaxMeanR}
  \end{subfigure}
  \begin{subfigure}{0.32\linewidth}
    \centering
    \vspace{-0.2cm}
    \includegraphics[width=0.95\textwidth]{5_visualize-our-reward_zbar-1-1.pdf}
    \vspace{-0.38cm}
    \caption{Visualization of Eq.~\ref{eq:our_reward}}
    \label{fig:visualize_ourR_app}
  \end{subfigure}
  \vspace{-0.2cm}
  \caption{Fig.~\ref{fig:bs_MaxMeanR} and Fig.~\ref{fig:our_MaxMeanR} show the max vs. mean of all received rewards of two samplers during the training of GCN on Cora. Fig.~\ref{fig:visualize_ourR} visualizes the reward function in \eqref{eq:our_reward} after setting $ \zbabl_{v,t} = (1,1)^{\top} $. Inside the dashed circle, the reward is positive, otherwise negative. When $ \zbl_{i,t} = \zbabl_{v,t} $, it has the maximum reward.}
  \label{fig:reward_visualization}
  \vspace{-1mm}
\end{figure*}

\section{Efficiency Evaluation}

To showcase the efficiency provided by our sampler, we select the ogbn-products dataset, which is sufficiently large such that loading onto a GPU is not even possible and vanilla base models like GCN and GAT struggle.  Hence we compare the time and memory usage of all methods on CPU servers.  Results are shown in Table~\ref{tab:efficiency}, where Thanos display huge gains in efficiency.

\begin{table}[h]
\caption{\small Comparison of time and space efficiency. `\#Node' denotes the number of node features involved in computation per iteration. }
\vspace{-1.5mm}
\label{tab:efficiency}
\begin{center}
\begin{small}
\begin{tabular}{c|c|c|c|c|c}
\toprule
~ & ~ &{Methods}&{\#Node}&{Ave.~RSS}&{Time/Epoch} \\
\midrule
\multirow{8}{*}{\rotatebox{90}{\scriptsize Ogbn-products}}&\multirow{2}{*}{GCN}&{Vanilla GCN}&{$ 1,000,700 $}&49.1GB&24h38min\\
~ & ~ &{GraphSage}&{2440}&{47.7GB}&{\color{red}499s}\\
~ & ~ &{\scriptsize BanditSampler}&{2462}&{\color{red}47.4GB}&{545s}\\
~ & ~ &{Thanos}&2439&{\bf46.8GB}&{\bf490s}\\
\cline{2-6}
~ & \multirow{2}{*}{GAT}&{Vanilla GAT}&{1,010,200}&{52.5GB}&{31h50min}\\
~ & ~ &{GraphSage}&{2417}&{49.8GB}&{\bf568s}\\
~ & ~ &{\scriptsize BanditSampler}&{2415}&{\color{red}48.7GB}&{619s}\\
~ & ~ &{Thanos}&2421&{\bf48.2GB}&{\color{red}584s} \\
\bottomrule
\end{tabular}
\end{small}
\end{center}
\end{table}

\section{Experiment Extension}
We present the extensive experiments on cSBM in this section. Fig.~\ref{fig:app_cSBM_v0} plots $\pratio$ over $ \VV_{2} \cap \VV_{\text{train}} $, which suggests both samplers have close $\pratio$ on $\VV_{2}$.

\begin{figure*}[h!]
	\centering
	\begin{subfigure}{0.45\linewidth}
		\centering
		\includegraphics[width=0.95\textwidth]{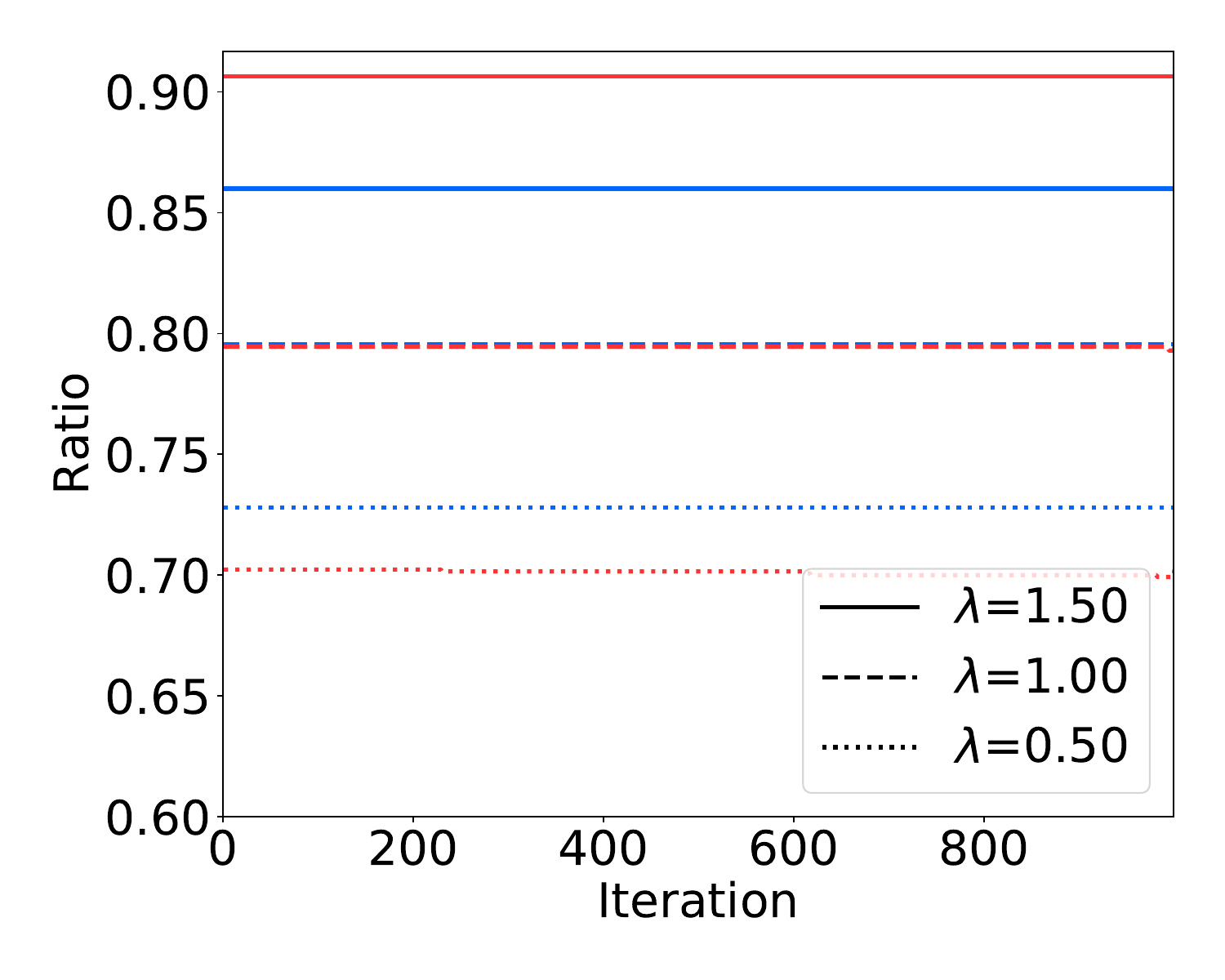}
		\vspace{-0.35cm}
		\caption{Average $ \pratio $ among $ \VV_{2} \cap \VV_{\text{train}} $}
		\label{fig:app_cSBM_v0}
	\end{subfigure}
	\begin{subfigure}{0.45\linewidth}
		\centering
		\includegraphics[width=0.95\textwidth]{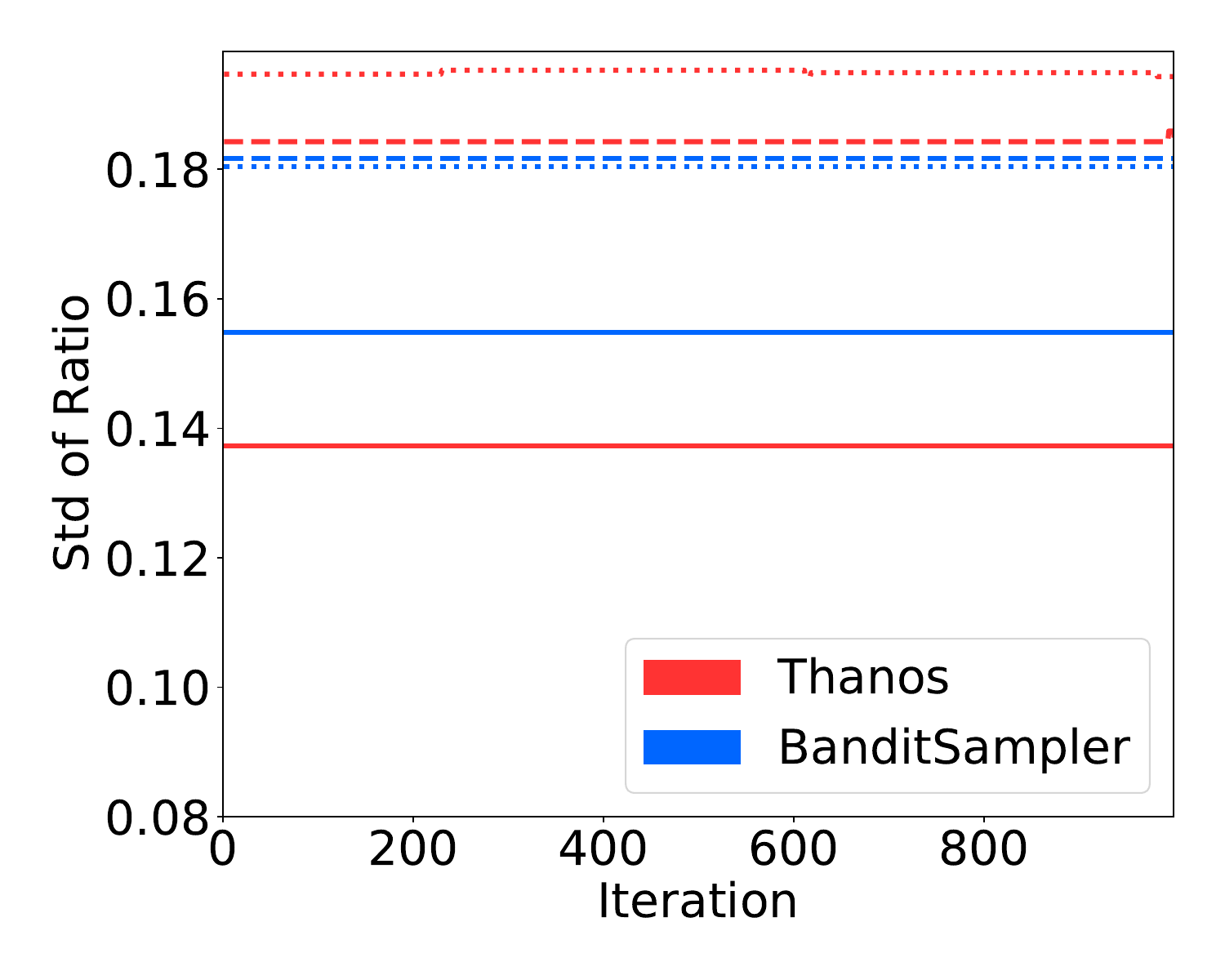}
		\vspace{-0.35cm}
		\caption{Std.~curve of Fig.~\ref{fig:app_cSBM_v0}}
		\label{fig:app_cSBM_std0}
	\end{subfigure}
	\vspace{-0.2cm}
	\caption{ Fig.~\ref{fig:app_cSBM_v0} plots $\pratio$ on $ \VV_{2} \cap \VV_{\text{train}} $ and Fig.~\ref{fig:app_cSBM_std0} plots its std.~curve over $ \VV_{2} \cap \VV_{\text{train}} $.} 
	\label{fig:app_cSBM_ext}
	\vspace{-1mm}
\end{figure*}

\begin{figure}[h!]
	\centering
	\vspace{-1mm}
	\includegraphics[width=0.45\linewidth]{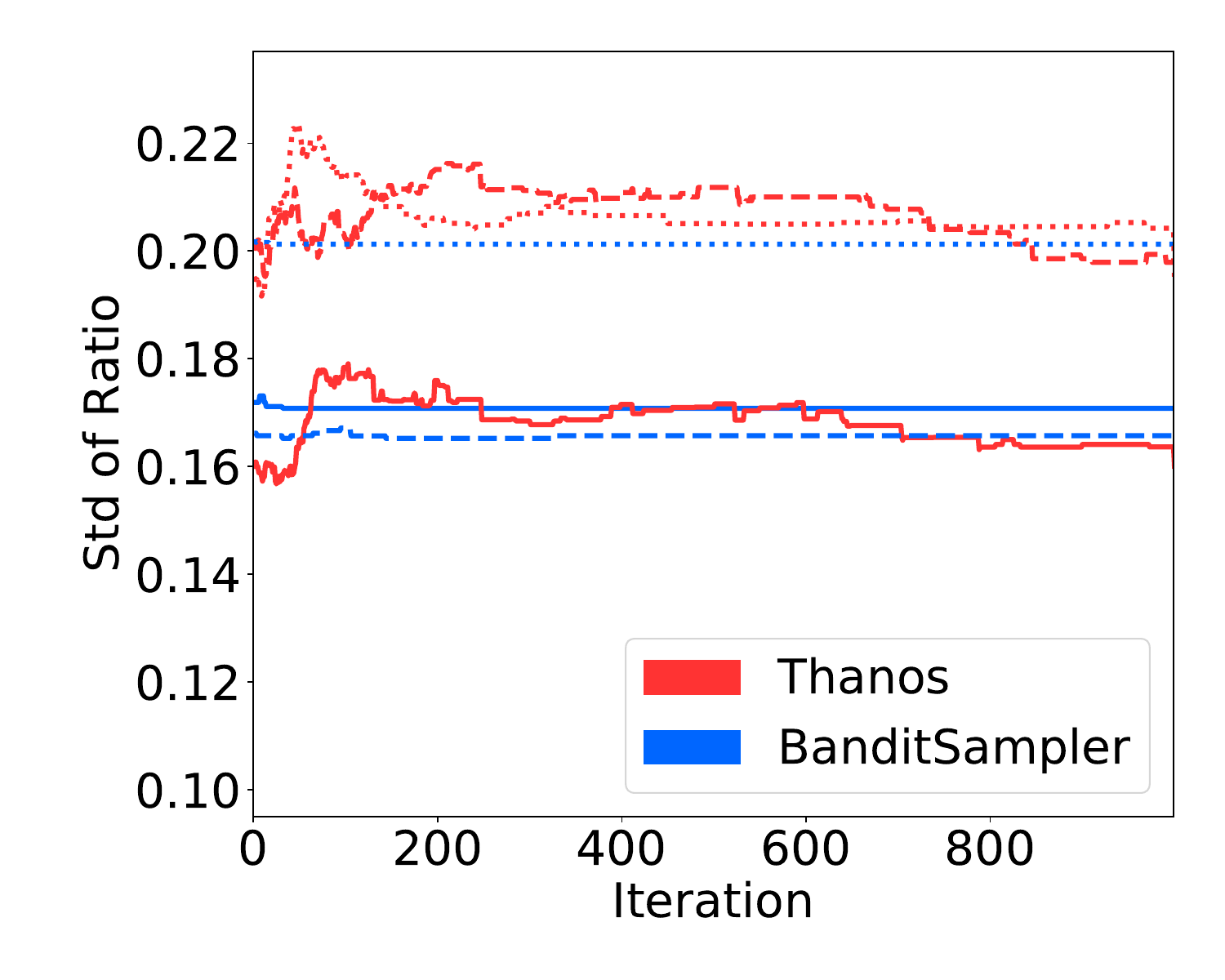}
	\vspace{-2mm}
	\caption{The Std.~curve of Fig.~\ref{fig:cSBM_ratio} over $ \VV_{1} \cap\VV_{\text{train}} $.}
	\label{fig:app_cSBM_std1}
	\vspace{-2mm}
\end{figure}

\end{document}